\documentclass[review]{elsarticle}

\usepackage{lineno}
\usepackage{hyperref}
\usepackage{amsmath,amssymb,amsthm}
\usepackage{dsfont}
\usepackage{graphicx}
\usepackage{xcolor}
\usepackage{extarrows}
\usepackage{mathrsfs}
\usepackage{mathabx}
\usepackage{booktabs}
\usepackage{array}
\usepackage{tabularx}
\usepackage{multirow}
\usepackage{makecell}
\usepackage{threeparttable}
\usepackage{setspace}
\usepackage{float}
\usepackage{chngcntr}
\usepackage[misc]{ifsym}
\usepackage[caption=false]{subfig}
\usepackage[justification=raggedright]{caption}
\usepackage[linesnumbered,ruled,vlined]{algorithm2e}
\usepackage{titlesec}

\modulolinenumbers[5]

\numberwithin{equation}{section}
\counterwithout{figure}{section}

\newtheorem{theorem}{Theorem}[section]

\newtheorem{assumption}{Assumption}

\newtheorem*{remark}{Remark}

\titleformat{\paragraph}[runin]
  {\normalfont\bfseries}
  {\theparagraph}
  {1em}
  {}

\titlespacing*{\paragraph}
  {0pt}
  {3.25ex plus 1ex minus .2ex}
  {1em}

\newcolumntype{Y}{>{\centering\arraybackslash}X}

\begin{document}
\begin{frontmatter}

\title{\bf\Large Residual-loss Anomaly Analysis of Physics-Informed Neural Networks: An Inverse Method for Change-point Detection in Nonlinear Dynamical Systems with Regime Switching}

\author{Yuhe Bai \fnref{addr1}}

\author{Chengli Tan \fnref{addr2}}
        
\author{Jiaqi Li \fnref{addr1}}
		
\author{Xiangjun Wang \fnref{addr1}}
		
\author{Zhikun Zhang \fnref{addr2}\corref{mycorrespondingauthor}}
\cortext[mycorrespondingauthor]{Corresponding author}
\ead{zhikunzhang@nwpu.edu.cn}

\address[addr1]{School of Mathematics and Statistics,\\ Huazhong University of Science and Technology, Wuhan, 430074, China}
		
\address[addr2]{School of Mathematics and Statistics,\\ Northwestern Polytechnical University, Xi'an, 710072, China}

\begin{abstract}
Nonlinear dynamical systems with regime transitions are typically described by ordinary differential equations with jumping parameters parameters. Traditional methods often treat change-point detection and parameter estimation as separate tasks, ignoring the inherent coupling between them. To address this, we propose residual-loss anomaly analysis of physics-informed neural networks, a unified framework that leverages dynamical consistency within the physics-informed learning paradigm. This approach jointly infers piecewise parameters and transition points under a single set of constraints. The method follows a two-stage strategy: First, local physical residuals are analyzed through overlapping subinterval decomposition. When a subinterval spans a true transition point, the residual exhibits a distinct structural elevation in noise-free conditions, which has a non-zero lower bound, enabling effective localization of potential transition intervals. Second, within our framework, change-point locations and piecewise parameters are integrated into a unified physical loss function for joint optimization, enabling simultaneous identification. Experiments on benchmark nonlinear dynamical systems, including Malthusian and logistic growth models, Van der Pol oscillator, Lotka-Volterra model and Lorenz system, demonstrate that the proposed method outperforms traditional decoupled approaches in both change-point localization and parameter estimation accuracy. This study provides an efficient, unified solution for structurally coupled inverse problems in nonlinear dynamical systems with regime switching.
\end{abstract}

\begin{keyword}
Nonlinear Dynamical System; 
Parameter Estimation; 
Inverse Problem;
Physics-Informed Neural Network; 
Overlapping-Domain Decomposition; 
Change-Point Detection.
\end{keyword}

\end{frontmatter}

\newpage

\tableofcontents
    
\newpage

\section{Introduction}\label{sec1}

The evolutionary processes of complex systems are typically not governed by a single, stable dynamical mechanism. Instead, these systems often undergo abrupt regime transitions and dynamical reconfigurations in response to external environmental changes, internal structural reorganizations, or shifts in regulatory mechanisms \cite{bian2025fluctuation, evers2024early, masuda2024anticipating}. These regime transitions are prevalent in both natural and engineered systems \cite{dmitriev2022early}, such as critical transitions in ecosystems, functional remodeling in biological networks, mode switching in engineering system operations, and non-stationary evolutionary processes in climate and environmental systems \cite{dakos2024tipping}. Identifying when and how the dynamics of a system change is critical for understanding the mechanisms of complex systems and for improving the reliability of prediction and control \cite{southall2021ews, zhang2023parameter}.

These systems are typically described by ordinary differential equations (ODEs) that govern their state evolution. Transitions in the dynamical mechanism often manifest as discontinuous jumps in model parameters over time \cite{bettini2024model, pan2025periodic}, which can be modeled as jump coefficients. Compared to stationary models, parameter discontinuities can alter local dynamics and cause qualitative changes in global trajectories, resulting in piecewise regime behavior \cite{lemus2025multi}. However, these changes are not directly observable and can only be inferred from limited system output trajectories, making it a challenging task to infer both the system dynamics and the timing of these transitions from observational data, particularly in the absence of prior structural assumptions \cite{egan2024automatically, strebel2023preprocessing, zhai2023parameter}.

For inverse problems involving dynamical systems with regime transitions, the object to be inferred is no longer a finite-dimensional static parameter, but a time-dependent parameter function with an unknown piecewise structure, where both the segmentation locations and their corresponding values jointly determine the system’s dynamical behavior \cite{peherstorfer2016data, panahi2025global}. This structural complexity leads to an inherent coupling between parameter estimation and transition time identification \cite{lemus2025multi}. If the abrupt transition times are not accounted for, parameter estimation becomes distorted due to the confounding of dynamics across different regimes. Conversely, without an accurate parametric model, distinguishing regime switches from stochastic fluctuations is difficult, leading to unreliable change-point detection \cite{liu2024early}. Therefore, treating structural identification and parameter inference sequentially or independently often fails to capture their interdependence \cite{brunton2016discovering}. This circular dependency highlights the need for integrated methods capable of simultaneously inferring change-point locations and jump parameters \cite{rudy2017data}. Such integration is both a theoretical necessity and a foundation for understanding transition mechanisms in systems and achieving accurate prediction and intervention.

Regime-transition problems in dynamical systems are often formalized as structural break models or piecewise parameter models, where the unknown change-point locations are treated as discrete structural variables, and the piecewise dynamical parameters are assumed to be locally stationary or constant within each subinterval \cite{truong2020selective, fryzlewicz2014wild, dette2019change}. Classical methods estimate change-point locations and piecewise parameters jointly or alternately using techniques such as generalized likelihood ratio statistics, penalized maximum likelihood functions, or sparsity-regularized objective functions, with consistency and convergence rate analyses under certain regularity conditions \cite{baranowski2019narrowest, dette2020likelihood}. Bayesian methods incorporate the number and locations of change points into a unified posterior framework and use methods such as reversible jump Markov chain Monte Carlo or sequential Monte Carlo for joint inference of model dimensionality and parameter space \cite{green2009reversible, wang2000bayesian, levin2008bayesian, heard2017adaptive}. However, these methods often rely on pre-specified piecewise forms or searchable structural spaces, decomposing change-point localization and parameter estimation into relatively independent or sequentially optimized steps \cite{xiu2020laplacian, birge2006model}.

In recent years, data-driven methods have provided new approaches for modeling dynamical systems through high-dimensional function approximation and end-to-end differentiable optimization frameworks \cite{karniadakis2021physics, kovachki2023neural}. These approaches represent system evolution as nonlinear mappings or implicit probabilistic models and leverage automatic differentiation and gradient-based optimization for unified training \cite{cuomo2022scientific}. Building on this, physics-informed neural networks (PINNs) embed differential equation residuals into the loss function, so that dynamical constraints serve as differentiable regularization terms in the optimization process, enabling simultaneous state reconstruction and parameter inversion \cite{kissas2020machine}. However, in systems with regime transitions, these methods generally assume parameter continuity over the training interval or a known piecewise structure, or they rely on external rules to pre-partition the time domain \cite{oluwasakin2023optimizing}. This results in a relatively decoupled workflow between change-point detection and parameter learning \cite{zhang2025data}.

While statistical inference and data-driven methods differ significantly in their theoretical tools and numerical implementations, they often share an implicit assumption when addressing regime transition problems: treating "structural identification" and "parameter inference" as independent tasks that can be optimized separately \cite{lux2024estimation}. However, when transition times and parameter values are structurally coupled, such a decoupled approach can undermine both consistency and stability \cite{niknejad2023physics}. Segmentation without dynamical constraints struggles to accurately capture regime differences, while parameter inversion that ignores structural uncertainty is highly sensitive to misspecified intervals \cite{huang2024detecting}. Therefore, it is essential to re-examine the interplay between structural changes and parameter learning under a unified dynamical constraint, overcoming the limitations of decoupled paradigms \cite{raissi2018hidden, zhang2026solving}.

The above analysis indicates that the core challenge lies not in designing more sophisticated segmentation strategies or more refined parameter estimators, but in uncovering the intrinsic relationship between structural changes and parameter evolution under a unified dynamical constraint \cite{farid2024unsupervised}. When a single dynamical model is used to fit a time interval that spans a true transition point, the structural mismatch caused by regime differences will inevitably manifest as a systematic deviation in the differential equation residuals \cite{stiasny2021physics}. In an ideal noise-free scenario, this mismatch typically corresponds to an irreducible non-zero residual level. Therefore, this residual reflects structural model misspecification rather than stochastic noise \cite{mishra2023estimates, lin2025efficient}.

Based on the above analysis, this paper proposes a unified inference framework that uses dynamical consistency as an intrinsic signal to identify regime transitions, simultaneously capturing structural changes and parameter evolution within the same dynamical system. Specifically, the method adopts a two-stage strategy: In the first stage, a local dynamical consistency measure is constructed using overlapping subintervals and steady-state statistics to robustly localize potential transition intervals. In the second stage, a differentiable change-point representation is introduced within candidate intervals, which makes the change-point locations optimizable variables to be solved jointly with piecewise parameter inversion under a unified loss function. This design leverages the discriminative power of structural residual signals while maintaining the physical consistency of parameter inversion, enabling accurate and simultaneous identification of both transition locations and jump parameters. The framework provides a unified approach to characterizing regime reconfigurations in complex systems and offers new insights for solving structurally coupled inverse problems.

The framework proposed in this paper offers a novel approach for identifying regime transitions in dynamical systems and demonstrates broad application potential, particularly in modeling and predicting complex systems such as biological regulation, financial markets, and climate change. This method holds significant theoretical and practical value. By using this framework, we can more accurately capture abrupt behavioral changes in systems, providing a powerful tool for interdisciplinary scientific research. As our understanding of complex systems deepens, the proposed method is expected to lay the foundation for further research and applications, driving theoretical innovation and technological advancements in these fields. The following sections detail the implementation of the framework, including the specific steps of the two-stage change-point detection, its validation, and its application across multiple dynamical systems.

This paper is organized as follows: Section~\ref{sec2} formulates nonlinear dynamical systems with jump coefficients and establishes the well-posedness of the inverse problem. Section~\ref{sec3} develops the unified, physics-consistent inference framework, introducing the two-stage strategy that leverages residual-induced structural mismatch for joint change-point localization and parameter estimation. Section~\ref{sec4} evaluates the proposed framework on representative dynamical systems and presents a detailed analysis of the results. Section~\ref{sec:5} introduces a parallel overlapping-domain implementation for scalable computation. Section~\ref{sec6} compares the proposed method with traditional statistical approaches and existing PINNs-based techniques. Finally, Section~\ref{sec7} concludes with a discussion of future directions.

\section{Nonlinear Dynamical Systems with Regime Switching}\label{sec2}
In this section, nonlinear dynamical systems with jump coefficients are formulated, and the theoretical foundation of the associated inverse problem is established. A parametric ordinary differential equation with piecewise constant parameters governed by a discrete switching process is introduced. The existence and uniqueness of solutions are then proved, providing a rigorous basis for subsequent joint parameter estimation and change-point detection.

\subsection{Mathematical Setup}
In this section, the basic definition of nonlinear dynamical systems with jump coefficients is introduced. A mathematical framework for general parametric ODEs is first established. Let $T > 0$ denote the terminal time. Consider a function $\boldsymbol{x} : [0,T] \to \mathbb{R}^n$ satisfying the following parametric ordinary differential equation:
\begin{equation}
\label{eq:3.5}
    {\rm d}\boldsymbol{x}(t)=\boldsymbol{f}(t,\boldsymbol{x}(t)){\rm d}t,
\end{equation}
where $\boldsymbol{f}:[0,T]\times\mathbb{R}^n \to \mathbb{R}^n$, together with the initial condition $(t_0,\boldsymbol{x}_0) \in G \subset \mathbb{R}^{n+1}$.
Within the PINNs framework, parameter identification and change-point detection problems can be formulated as inverse problems.

We study statistical inference for system parameters based on observations of the system output.  Suppose that \eqref{eq:3.5} depends on a finite number of unknown parameters, and rewrite it in the parametric form
\begin{equation}
\label{eq:3.6}
    {\rm d}\boldsymbol{x}(t)=\boldsymbol{f}(t,\boldsymbol{x}(t);\boldsymbol{\theta})\,{\rm d}t,
\end{equation}
where $\boldsymbol{\theta}=(\theta_1,\theta_2,\ldots,\theta_d)$ is a $d$-dimensional parameter vector. To model parameter switching over time, we introduce a discrete state variable $r(t)\in\mathbb{I}:=\{1,2,\ldots,K\}$.  Let $\Theta=\{\boldsymbol{\theta}_1,\boldsymbol{\theta}_2,\ldots,\boldsymbol{\theta}_K\}$ be the associated set of parameter regimes, and define the active parameter by $\boldsymbol{\theta}(t)=\boldsymbol{\theta}_{r(t)}$. Assume that $r(t)$ is a right-continuous piecewise constant function on $[t_0,T)$.  Given a partition $t_0<t_1<\cdots<t_n<t_{n+1}=T$, we write
\begin{equation}
\label{Afrt}
    r(t)=\sum_{k=0}^{n} r_k\,\mathbf{1}_{[t_k,t_{k+1})}(t),
\end{equation}
where $\mathbf{1}_{[t_k,t_{k+1})}(t)$ is the indicator function of $[t_k,t_{k+1})$. 

We then define the vector field with jumping coefficients as follows. 
\begin{equation}
    \boldsymbol{f}_r(t,\boldsymbol{x}(t),r(t);\Theta)
    =\boldsymbol{f}(t,\boldsymbol{x}(t);\boldsymbol{\theta}_{r(t)}),
\end{equation}
so that the nonlinear dynamical system with jumping coefficients takes the form
\begin{equation}
\label{eq:3.9}
    {\rm d}\boldsymbol{x}(t)=\boldsymbol{f}_r(t,\boldsymbol{x}(t),r(t);\Theta)\,{\rm d}t,
\end{equation}
where $\boldsymbol{f}_r:[t_0,T)\times\mathbb{R}^n\times\mathbb{I}\to\mathbb{R}^n$, and $(t_0,\boldsymbol{x}_0)\in G\subset\mathbb{R}^{n+1}$.

Before performing parameter inversion and change-point detection for the proposed nonlinear dynamical system with jumping coefficients, we establish its well-posedness, namely, the existence and uniqueness of solutions. To this end, we apply the classical Picard-Lindelöf theorem \cite{teschl2012ordinary,huerta2022predicate} to prove the following existence and uniqueness result for the proposed system.

\begin{theorem} [Existence and Uniqueness Theorem for Nonlinear Dynamical Systems]
\label{thm:exist_unique_jump}
Consider the nonlinear dynamical system defined by \eqref{eq:3.9}. If the following conditions hold.

1. (Lipschitz Condition): There exists a constant $L$ such that for all $t \in [t_0, T]$, $r \in \mathbb{I}$ and any $\boldsymbol{x_1}, \boldsymbol{x_2} \in \mathbb{R}^n$, the following holds:
\begin{equation}\label{Lipschitz_condition}
\| \boldsymbol{f}_r(t, \boldsymbol{x}_1, r(t); \Theta) - \boldsymbol{f}_r(t, \boldsymbol{x}_2, r(t); \Theta) \| \leq L \| \boldsymbol{x}_1 -\boldsymbol{x} _2 \|.
\end{equation}

2. (Linear Growth Condition): There exists a constant $K$ such that for all $t \in [t_0, T]$, $r \in \mathbb{I}$ and $\boldsymbol{x} \in \mathbb{R}^n$, the following holds.
\begin{equation}
\| \boldsymbol{f}_r(t,\boldsymbol{x} , r(t); \Theta) \| \leq K(1 + \| \boldsymbol{x} \|).
\end{equation}

Then, in a neighborhood of $t_0$, there exists a unique piecewise continuously differentiable solution $\boldsymbol{x} : (t_0 - \epsilon, t_0 + \epsilon) \to \mathbb{R}^n$, where $\epsilon > 0$, that satisfies equation\eqref{eq:3.9} and the initial condition $\boldsymbol{x}(t_0) = \boldsymbol{x}_0$,

\begin{equation}
\left\{
\begin{aligned}
{\rm d}\boldsymbol{x}(t) &= \boldsymbol{f}_r(t, \boldsymbol{x}(t), r(t);\Theta) {\rm d}t, \\
\boldsymbol{x}(t_0) &= \boldsymbol{x}_0.
\end{aligned}
\right.
\end{equation}
The solution depends continuously on $t$ and the parameter $r(t)$, and the solution is locally unique around $t_0$ for any $r(t)$ that satisfies the conditions.
\end{theorem}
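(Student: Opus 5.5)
The plan is to reduce the switched system \eqref{eq:3.9} to a finite chain of classical initial value problems, one per regime interval $[t_k,t_{k+1})$ from \eqref{Afrt}, and to apply the Picard--Lindel\"of theorem on each piece. On $[t_k,t_{k+1})$ the index $r(t)=r_k$ is frozen, so the vector field reduces to $\boldsymbol{g}_k(t,\boldsymbol{x}):=\boldsymbol{f}(t,\boldsymbol{x};\boldsymbol{\theta}_{r_k})$. By the Lipschitz condition \eqref{Lipschitz_condition}, $\boldsymbol{g}_k$ is globally Lipschitz in $\boldsymbol{x}$ with constant $L$, uniformly in $t$. I will additionally assume, as is implicit in the statement, that $\boldsymbol{f}(\cdot,\boldsymbol{x};\boldsymbol{\theta})$ is continuous in $t$ on each closed interval $[t_k,t_{k+1}]$, so that $\boldsymbol{g}_k$ extends continuously to the closed interval. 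Picard--Lindel\"of then gives a unique local $C^1$ solution through any initial point.

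For the local statement near $t_0$, note that $r\equiv r_0$ on $[t_0,t_1)$. The first piece therefore already yields a unique $C^1$ solution on $[t_0,t_0+\epsilon)$ with $\epsilon\le t_1-t_0$. For the backward direction, $(t_0-\epsilon,t_0]$, I will extend $r$ to the left as the constant $r_0$ (the statement is only meaningful under such a convention) and apply the same theorem to the backward problem. Uniqueness follows from Gr\"onwall's inequality applied to the difference of two solutions, $\|\boldsymbol{x}_1(t)-\boldsymbol{x}_2(t)\|\le L\int_{t_0}^t\|\boldsymbol{x}_1-\boldsymbol{x}_2\|\,{\rm d}s$, which forces the difference to vanish.

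To obtain the piecewise $C^1$ solution on all of $[t_0,T]$, I will proceed as follows:
\begin{itemize}
\item Use the linear growth bound $\|\boldsymbol{f}_r\|\le K(1+\|\boldsymbol{x}\|)$ together with Gr\"onwall to get the a priori estimate $\|\boldsymbol{x}(t)\|\le(\|\boldsymbol{x}_0\|+K(t-t_0))e^{K(t-t_0)}$.
\item Conclude from this estimate that no blow-up occurs, so each local solution on piece $k$ extends to the whole closed interval $[t_k,t_{k+1}]$.
\item Take the endpoint value $\boldsymbol{x}(t_{k+1}^-)$ as the initial datum for piece $k+1$ and repeat.
\end{itemize}
Concatenating the pieces gives a continuous function that is $C^1$ away from the finitely many $t_k$ and satisfies the integral equation $\boldsymbol{x}(t)=\boldsymbol{x}_0+\int_{t_0}^t\boldsymbol{f}_r\,{\rm d}s$. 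Uniqueness of this global solution follows either by induction on $k$, or directly by applying Gr\"onwall to the integral equation, since the Lipschitz bound holds uniformly across regimes.

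For continuous dependence, I will compare two solutions with switching signals $r,\tilde r$ (or initial data $\boldsymbol{x}_0,\tilde{\boldsymbol{x}}_0$). Gr\"onwall gives
\[
\|\boldsymbol{x}(t)-\tilde{\boldsymbol{x}}(t)\|\le\Bigl(\|\boldsymbol{x}_0-\tilde{\boldsymbol{x}}_0\|+\int_{t_0}^{T}\|\boldsymbol{f}_r(s,\tilde{\boldsymbol{x}},r)-\boldsymbol{f}_r(s,\tilde{\boldsymbol{x}},\tilde r)\|\,{\rm d}s\Bigr)e^{L(t-t_0)}.
\]
When the change points of $\tilde r$ converge to those of $r$, the integrand is bounded by $2K(1+\sup\|\tilde{\boldsymbol{x}}\|)$ on a set of vanishing measure, so the integral tends to zero. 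This gives continuity in the switching times, and hence in $r(t)$ in the $L^1$ sense.

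The main obstacle is interpreting ``continuous dependence on $r(t)$'': $r$ takes discrete values, so continuity must be understood with respect to the switching times, i.e.\ in $L^1$. A second, smaller point is that the classical theorem requires continuity in $t$, which fails exactly at the $t_k$. I would handle this by working with the integral (Carath\'eodory) formulation throughout rather than the pointwise ODE.
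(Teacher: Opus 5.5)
Your proposal is correct and takes the same route as the paper: split $[t_0,T]$ at the finitely many switching times, apply Picard--Lindel\"of on each constant-regime piece, and concatenate by using the terminal value of one piece as the initial datum of the next, with uniqueness inherited piece by piece. Your additional steps fill in points the paper's proof leaves unargued: the explicit continuity-in-$t$ assumption, the Gr\"onwall a priori bound that rules out blow-up within a piece, and the constant-regime convention for $t<t_0$. Your Gr\"onwall estimate for dependence on the switching signal in the $L^1$ sense also proves a claim the paper states but never proves.
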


\begin{proof}
Since $r(t)$ takes values in a finite set $\mathbb{I}$ and is right-continuous, it is a piecewise constant function on $[t_0,T]$ with finitely many jumps. Hence, there exist switching points $\{\tau_i\}_{i=1}^k$ with $k<\infty$ such that
\begin{equation}
t_0 = \tau_0 < \tau_1 < \cdots < \tau_k < \tau_{k+1}=T,
\end{equation}
and
\begin{equation}
r(t) = r_i, \quad t \in [\tau_i,\tau_{i+1}), \quad i=0,1,\ldots,k,
\end{equation}
where $r_i \in \mathbb{I}$.

For each $i=0,1,\ldots,k$, on the interval $[\tau_i,\tau_{i+1})$, equation \eqref{eq:3.9} reduces to
\begin{equation}
{\rm d}\boldsymbol{x}(t)
= \boldsymbol{f}_{r_i}(t,\boldsymbol{x}(t),r_i;\Theta)\,{\rm d}t
= \boldsymbol{f}(t,\boldsymbol{x}(t);\boldsymbol{\theta}_{r_i})\,{\rm d}t.
\label{eq:local_ode}
\end{equation}
By the Lipschitz condition and linear growth condition, the classical Picard-Lindelöf theorem implies that, for each $i$, there exists a unique continuously differentiable solution
\begin{equation}
\boldsymbol{x}_{r_i} : [\tau_i,\tau_{i+1}) \to \mathbb{R}^n,
\end{equation}
satisfying \eqref{eq:local_ode} with initial value $\boldsymbol{x}(\tau_i)$.

Starting from the initial condition $\boldsymbol{x}(t_0)=\boldsymbol{x}_0$, we construct the solution successively: for $i=0$, we obtain $\boldsymbol{x}_{r_0}$ on $[\tau_0,\tau_1)$. For $i\geq 1$, we take $\boldsymbol{x}(\tau_i)$ as the initial value and obtain $\boldsymbol{x}_{r_i}$ on $[\tau_i,\tau_{i+1})$.

Define
\begin{equation}
\boldsymbol{x}(t) = \boldsymbol{x}_{r_i}(t), \quad t \in [\tau_i,\tau_{i+1}), \quad i=0,1,\ldots,k.
\end{equation}
Then $\boldsymbol{x}(t)$ is well-defined on $[t_0,T]$ and satisfies equation \eqref{eq:3.9}.

Uniqueness follows from the uniqueness of solutions to \eqref{eq:local_ode} on each subinterval and the consistency of initial conditions at $\tau_i$. Therefore, equation \eqref{eq:3.9} admits a unique solution on $[t_0,T]$.
\end{proof}

\begin{remark}
    Theorem \ref{thm:exist_unique_jump} establishes the well-posedness of the dynamical system under piecewise constant switching coefficients. In particular, the existence and uniqueness of a global solution ensure that the induced piecewise dynamics are mathematically consistent. This result is essential for the subsequent analysis of parameter estimation and change-point detection.
\end{remark}

\subsection{Problem Statement}

Building upon the formulation of nonlinear dynamical systems with jump coefficients, we consider the following inverse problem. Let $\boldsymbol{x}(t)$ denote the system state governed by
\begin{equation}
{\rm d}\boldsymbol{x}(t)=\boldsymbol{f}(t,\boldsymbol{x}(t);\boldsymbol{\theta}(t)){\rm d}t,
\end{equation}
where the parameter $\boldsymbol{\theta}(t)$ is assumed to follow an unknown piecewise constant structure, characterized by a set of change-point locations $\{\tau_k\}_{k=1}^K$ and the corresponding regime-specific parameters $\{\boldsymbol{\theta}^{(k)}\}_{k=1}^{K+1}$. Given partial observations $\{\boldsymbol{x}_d(t_i)\}_{i=1}^N$ of the system trajectory, the objective is to jointly recover the change-point locations and the associated parameters, namely ${\hat{\tau}_k}$ and $\hat{\boldsymbol{\theta}}^{(k)}$.

The intrinsic entanglement between change-point detection and parameter estimation poses a fundamental challenge in this setting. Unknown change points distort parameter inference across regimes, while misestimated parameters can in turn obscure the true transitions. We refer to this coupled inverse task as change-point detection in nonlinear dynamical systems (NDS-CPD). To address it, we propose a unified physics-informed framework that treats change points and regime-specific parameters as coupled variables, and exploits residual-driven structural inconsistencies as endogenous signals for joint and data-efficient inference.

\section{Methodology}\label{sec3}
In this section, we introduce the residual-loss anomaly analysis framework of physics-informed neural networks (RAA-PINNs), our inverse framework for change-point detection and parameter estimation in nonlinear dynamical systems with regime switching. The main objective is to recover both the change-point locations and the regime-specific parameters from observational data in a unified manner. To present the main idea clearly, we first consider the case of a single change-point. Let $\boldsymbol{\theta}(t)$ denote the time-dependent parameter vector. Assume that there exists an unknown $\tau\in(0,T)$ such that
\begin{equation}
  \boldsymbol{\theta}(t)=
  \begin{cases}
    \boldsymbol{\theta}^-, & t<\tau,\\
    \boldsymbol{\theta}^+, & t\ge \tau,
  \end{cases}
  \quad \boldsymbol{\theta}^- \neq \boldsymbol{\theta}^+,
  \label{eq:theta_piecewise}
\end{equation}
where $\boldsymbol{\theta}^-$ and $\boldsymbol{\theta}^+$ denote the parameter vectors before and after the transition, respectively. The inverse task is therefore to infer the piecewise parameters $(\boldsymbol{\theta}^-,\boldsymbol{\theta}^+)$ together with the unknown change-point $\tau$.

The key idea of RAA-PINNs is that change points manifest as anomalies in the physics loss. On intervals contained in a single regime, locally constant parameters allow accurate fitting of the dynamics. On intervals that cross a true change point, the same assumption creates an unavoidable residual mismatch. We therefore use the residual-loss anomaly as the basic signal for change-point detection. Accordingly, RAA-PINNs follows a two-stage strategy. Stage I uses overlapping subintervals and local physics-constrained training to identify a candidate interval from the residual-loss profile. Stage~II then refines the result by jointly optimizing the differentiable change-point variable and the piecewise parameters $\boldsymbol{\theta}^-$ and $\boldsymbol{\theta}^+$. This combines coarse screening with local refinement in a unified inverse framework.

Figure~\ref{fig:liucheng} provides a schematic overview of the proposed two-stage workflow. Using the Lorenz system as an illustrative example, the figure shows how overlapping-domain decomposition is first used in Stage~I to detect high-residual subintervals and how the resulting candidate interval is then refined in Stage~II through joint optimization of the change-point location and the piecewise parameter vectors.

\begin{figure}[t]
    \centering
    \includegraphics[width=1\textwidth]{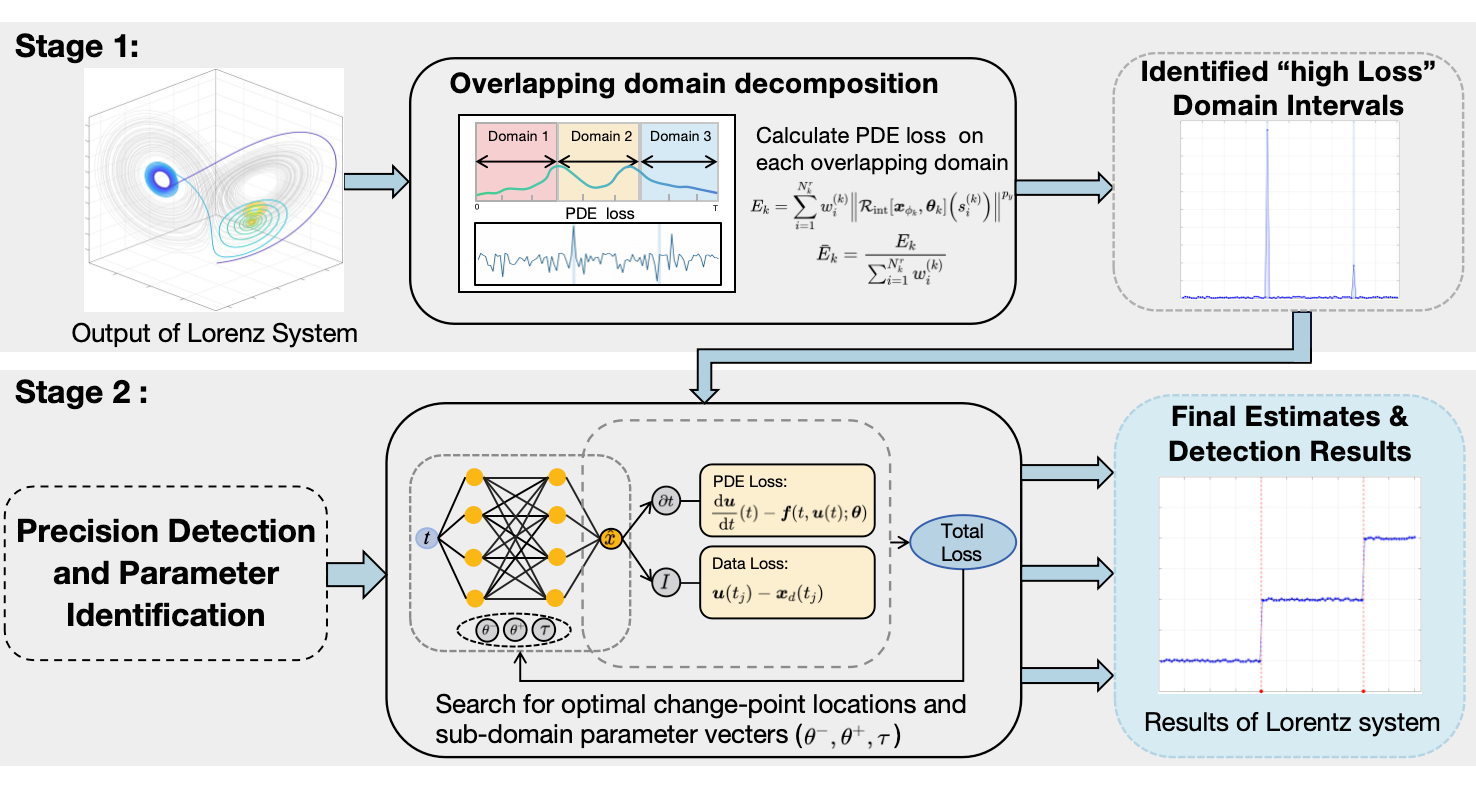}
  \caption{Schematic illustration of the proposed RAA-PINNs framework for change-point detection and parameter estimation via overlapping-domain decomposition.}
    \label{fig:liucheng}
\end{figure}

\subsection{Physics-Informed Machine Learning Framework}

We first introduce the local physics-informed inverse formulation that serves as the basic building block of RAA-PINNs. On a given time interval $I=[0,T]$, we consider the nonlinear dynamical system in \eqref{eq:3.6}, where $\boldsymbol{x}:I\to\mathbb{R}^n$ denotes the unknown state trajectory and $\boldsymbol{\theta}\in\Theta\subset\mathbb{R}^m$ is the unknown constant parameter vector on that interval, with $\Theta$ denoting the admissible parameter set. We approximate $\boldsymbol{x}$ by a neural network $\boldsymbol{x}_\phi:I\to\mathbb{R}^n$, where $\phi\in\mathbb{R}^M$ collects the trainable weights and biases. The network parameters $\phi$ and the system parameter vector $\boldsymbol{\theta}$ are optimized jointly.

Let $I_d=\{t_j\}_{j=1}^{N_d}\subset I$ denote the set of observation times, and let $\boldsymbol{x}_d(t_j)$ be the corresponding observations. For any sufficiently smooth function $\boldsymbol{u}:I\to\mathbb{R}^n$ and any $\boldsymbol{\theta}\in\Theta$, we define the physics residual and the data residual by
\begin{equation}\label{eq:residuals-ode}
\begin{aligned}
\mathcal{R}_{\mathrm{int}}[\boldsymbol{u},\boldsymbol{\theta}](t)
&=
\frac{{\rm d}\boldsymbol{u}}{{\rm d}t}(t)-\boldsymbol{f}\bigl(t,\boldsymbol{u}(t);\boldsymbol{\theta}\bigr),
\quad t\in I,\\
\mathcal{R}_{\mathrm{data}}[\boldsymbol{u}](t_j)
&=
\boldsymbol{u}(t_j)-\boldsymbol{x}_d(t_j),
\quad t_j\in I_d.
\end{aligned}
\end{equation}
These residuals quantify how well the pair $(\boldsymbol{u},\boldsymbol{\theta})$ satisfies the governing equation and fits the observed data. In particular, if $\boldsymbol{x}$ is the exact solution corresponding to the true parameter vector $\boldsymbol{\theta}$, then $\mathcal{R}_{\mathrm{int}}[\boldsymbol{x},\boldsymbol{\theta}](t)=0$ for $t\in I$ and $\mathcal{R}_{\mathrm{data}}[\boldsymbol{x}](t_j)=0$ for $t_j\in I_d$.

The PINN estimator is obtained by jointly optimizing over $\phi$ and $\boldsymbol{\theta}$ so as to balance fidelity to the governing dynamics and consistency with the observations. In an abstract form, this can be written as
\begin{equation}
(\hat{\phi},\hat{\boldsymbol{\theta}})
=
\arg\min_{(\phi,\boldsymbol{\theta})}
\left(
\|\mathcal{R}_{\mathrm{int}}[\boldsymbol{x}_\phi,\boldsymbol{\theta}]\|_{Y}
+
\|\mathcal{R}_{\mathrm{data}}[\boldsymbol{x}_\phi]\|_{Z}
\right),
\end{equation}
where $Y$ and $Z$ are normed spaces for the physics and data residuals, respectively. In this work, we take $Y=L^{p_y}(I)$ and $Z=\ell^{p_z}$ on the observation set $I_d$, where $1\le p_y,p_z<\infty$.

This yields the continuous objective
\begin{equation}
(\hat{\phi},\hat{\boldsymbol{\theta}})
=
\arg\min_{(\phi,\boldsymbol{\theta})}
\left(
\int_I \bigl\|\mathcal{R}_{\mathrm{int}}[\boldsymbol{x}_\phi,\boldsymbol{\theta}](t)\bigr\|^{p_y}\,{\rm d}t
+
\sum_{j=1}^{N_d}\bigl\|\mathcal{R}_{\mathrm{data}}[\boldsymbol{x}_\phi](t_j)\bigr\|^{p_z}
\right).
\end{equation}
In practice, the continuous physics residual term is approximated by quadrature. Let $S_{\mathrm{int}}=\{s_i\}_{i=1}^{N_{\mathrm{int}}}\subset I$ denote the collocation set, with positive quadrature weights $\{w_i\}_{i=1}^{N_{\mathrm{int}}}$. For the data term, we introduce positive weights $\{v_j\}_{j=1}^{N_d}$. The resulting discrete loss is
\begin{equation}
J(\phi,\boldsymbol{\theta})
=
\sum_{j=1}^{N_d} v_j
\bigl\|\mathcal{R}_{\mathrm{data}}[\boldsymbol{x}_\phi](t_j)\bigr\|^{p_z}
+
\lambda
\sum_{i=1}^{N_{\mathrm{int}}} w_i
\bigl\|\mathcal{R}_{\mathrm{int}}[\boldsymbol{x}_\phi,\boldsymbol{\theta}](s_i)\bigr\|^{p_y},
\label{eq:loss}
\end{equation}
where $\lambda>0$ balances the data-misfit and physics-residual terms.

To further stabilize training, we may optionally include a regularization term on the network parameters as
\begin{equation}
(\hat{\phi},\hat{\boldsymbol{\theta}})
=
\arg\min_{(\phi,\boldsymbol{\theta})}
\Bigl(
J(\phi,\boldsymbol{\theta})+\lambda_{\mathrm{reg}}\,J_{\mathrm{reg}}(\phi)
\Bigr),
\label{eq:reg}
\end{equation}
where $J_{\mathrm{reg}}:\mathbb{R}^M\to\mathbb{R}$ is a regularization functional and $\lambda_{\mathrm{reg}}\ge 0$ is the corresponding penalty parameter. A common choice is $J_{\mathrm{reg}}(\phi)=\|\phi\|_q^q$, where $q=2$ gives standard $L^2$ regularization, while $q=1$ can be used to promote sparsity.

\subsection{Stage I: Coarse Localization via Residual Signature}
\label{subsec:stage1}

To obtain a coarse localization of the change-point, we partition the temporal domain into overlapping subintervals. Let
\begin{equation}
0=t_0<t_1<\cdots<t_K=T,
\end{equation}
be a partition of $I=[0,T]$, and let $\delta>0$ denote the overlap width. For each $k=1,\ldots,K$, we define
\begin{equation}
I_k=[t_{k-1}-\delta,\;t_k+\delta]\cap[0,T].
\end{equation}
By construction, every change-point $\tau\in(0,T)$ is contained in at least one such subinterval.

At this stage, the subinterval problems are treated independently. No interface or continuity conditions are imposed across overlapping regions, because the goal of Stage~I is change-point screening rather than global reconstruction. On each $I_k$, we train a local PINN $\boldsymbol{x}_{\phi_k}:I_k\to\mathbb{R}^n$ together with a locally constant parameter vector $\boldsymbol{\theta}_k\in\Theta$.

Let
\begin{equation}
S_k^r=\{s_i^{(k)}\}_{i=1}^{N_k^r}\subset I_k,
\quad
S_k^d=\{t_j^{(k)}\}_{j=1}^{N_k^d}\subset I_k\cap I_d,
\end{equation}
which denote the collocation set and the observation set on $I_k$, respectively. Let $\{w_i^{(k)}\}_{i=1}^{N_k^r}$ and $\{v_j^{(k)}\}_{j=1}^{N_k^d}$ be the associated positive weights. The local training objective is
\begin{equation}
\begin{aligned}
J_k(\phi_k,\boldsymbol{\theta}_k)
&=
\sum_{j=1}^{N_k^d} v_j^{(k)}
\bigl\|
\mathcal{R}_{\mathrm{data}}[\boldsymbol{x}_{\phi_k}](t_j^{(k)})
\bigr\|^{p_z}
+
\lambda
\sum_{i=1}^{N_k^r} w_i^{(k)}
\bigl\|
\mathcal{R}_{\mathrm{int}}[\boldsymbol{x}_{\phi_k},\boldsymbol{\theta}_k](s_i^{(k)})
\bigr\|^{p_y}.
\end{aligned}
\label{eq:subinterval-loss}
\end{equation}

Since the local problems are uncoupled, all subinterval PINNs can be trained in parallel. The overlap is introduced to improve the robustness of change-point localization near subinterval boundaries.

After training on $I_k$, we compute the physics residual energy
\begin{equation}
E_k
=
\sum_{i=1}^{N_k^r} w_i^{(k)}
\bigl\|
\mathcal{R}_{\mathrm{int}}[\boldsymbol{x}_{\phi_k},\boldsymbol{\theta}_k](s_i^{(k)})
\bigr\|^{p_y}.
\label{eq:Ek_def}
\end{equation}
To make these energies comparable across subintervals, we normalize them by the total collocation weight and define
\begin{equation}
\bar E_k
=
\frac{E_k}{\sum_{i=1}^{N_k^r} w_i^{(k)}}.
\label{eq:Ek_normalized}
\end{equation}

Because PINN training may exhibit transient oscillations or occasional spikes, we summarize the terminal behavior of the normalized residual over the last $M$ iterations. Let $\bar E_{k,\ell}$ denote the value of \eqref{eq:Ek_normalized} at iteration $\ell$, and let
\begin{equation}
\mathcal W=\{L-M+1,\ldots,L\},
\end{equation}
where $L$ is the total number of training iterations. We then define
\begin{equation}
S_k=\operatorname{median}\{\bar E_{k,\ell}:\ell\in\mathcal W\}.
\label{eq:Sk_def}
\end{equation}
This score provides a robust summary of the terminal residual level on subinterval $I_k$.

The key observation underlying RAA-PINNs is that $S_k$ tends to be elevated when $I_k$ contains the change-point. If $I_k$ lies entirely on one side of $\tau$, then the dynamics on $I_k$ are governed by a single parameter regime, and a locally constant parameter vector can fit the subinterval well, leading to a relatively small residual level. By contrast, if $\tau\in I_k$, then the dynamics on $I_k$ involve two distinct parameter regimes associated with $\boldsymbol{\theta}^-$ and $\boldsymbol{\theta}^+$. Any single constant parameter vector $\boldsymbol{\theta}_k$ must then compromise between the two regimes, producing a residual-loss anomaly that cannot be removed by training.

To formalize this residual-loss anomaly mechanism, we introduce the following assumptions. These conditions are tailored to the present regime-switching setting and are closely related to standard identifiability assumptions in parameter estimation and consistency requirements in collocation and physics-informed formulations \cite{quarteroni2006numerical, raissi2019physics}.

\begin{assumption}[Parameter-affine structure and local identifiability]
\label{ass:structural}
Let $\boldsymbol{x}^\star:I\to\mathbb{R}^n$ denote the true trajectory. Assume that the vector field $\boldsymbol{f}:I\times\mathbb{R}^n\times\Theta\to\mathbb{R}^n$ is affine in the parameter vector $\boldsymbol{\theta}$, namely,
\begin{equation}
\boldsymbol{f}(t,\boldsymbol{x};\boldsymbol{\theta})
=
G(t,\boldsymbol{x})\boldsymbol{\theta}
+
\boldsymbol{b}(t,\boldsymbol{x}),
\end{equation}
where
\begin{equation}
G:I\times\mathbb{R}^n\to\mathbb{R}^{n\times m},
\quad
\boldsymbol{b}:I\times\mathbb{R}^n\to\mathbb{R}^n.
\end{equation}
Moreover, assume that there exists a constant $\alpha>0$ such that for every subinterval $J\subset I$ with positive length, the associated information matrix
\begin{equation}
M(J)
=
\int_J
G\bigl(t,\boldsymbol{x}^\star(t)\bigr)^\top
G\bigl(t,\boldsymbol{x}^\star(t)\bigr)\,{\rm d}t,
\end{equation}
satisfies
\begin{equation}
\lambda_{\min}\bigl(M(J)\bigr)\ge \alpha |J|,
\end{equation}
where $|J|$ denotes the length of $J$ and $\lambda_{\min}(\cdot)$ denotes the smallest eigenvalue.
\end{assumption}

\begin{assumption}[Quadrature consistency of the interior residual]
\label{ass:quadrature}
For each subinterval $I_k$, let $S_k^r=\{s_i^{(k)}\}_{i=1}^{N_k^r}\subset I_k$ be the collocation set, where $N_k^r$ is the number of collocation points in $I_k$, and let $\{w_i^{(k)}\}_{i=1}^{N_k^r}$ be the associated positive weights. Assume that the corresponding quadrature rule is consistent on $I_k$, in the sense that for any continuous function $h:I_k\to\mathbb{R}$,
\begin{equation}
\lim_{N_k^r\to\infty}
\sum_{i=1}^{N_k^r} w_i^{(k)}\,h\bigl(s_i^{(k)}\bigr)
=
\int_{I_k} h(t)\,{\rm d}t.
\end{equation}
\end{assumption}

Assumption~\ref{ass:structural} excludes degenerate regimes in which the dynamics are locally insensitive to the unknown parameters. Assumption~\ref{ass:quadrature} ensures that the discrete PINN residual provides a faithful approximation to its continuous-time counterpart. Under these conditions, the next theorem explains why the interior residual remains systematically larger on subintervals that cross the change-point. We denote by $\mathcal{R}_k$ the corresponding continuous-time residual energy on $I_k$.

\begin{theorem}[Residual lower bound on change-point subintervals]
\label{thm:lower_bound}
Under Assumption~\ref{ass:structural}, let $I_k$ be any Stage~I subinterval and define $I_k^- = I_k\cap[0,\tau)$ and $I_k^+ = I_k\cap[\tau,T]$. For any constant parameter vector $\boldsymbol{\theta}\in\Theta$, define the continuous-time residual energy
\begin{equation}
\mathcal{R}_k(\boldsymbol{\theta})
=
\int_{I_k}
\left\|
\frac{{\rm d}\boldsymbol{x}^\star}{{\rm d}t}(t)
-
\boldsymbol{f}\bigl(t,\boldsymbol{x}^\star(t);\boldsymbol{\theta}\bigr)
\right\|_2^2
\,{\rm d}t,
\label{eq:Rk_theta}
\end{equation}
and let
\begin{equation}
\underline{\mathcal{R}}_k
=
\inf_{\boldsymbol{\theta}\in\Theta}\mathcal{R}_k(\boldsymbol{\theta}).
\label{eq:Rk_ideal}
\end{equation}
If $\tau\notin I_k$, then either $I_k^-=\emptyset$ or $I_k^+=\emptyset$, and hence $\underline{\mathcal{R}}_k=0$ in the absence of observational noise. If $\tau\in I_k$ and $\boldsymbol{\theta}^-\neq\boldsymbol{\theta}^+$, then
\begin{equation}
\underline{\mathcal{R}}_k
\;\ge\;
\alpha
\frac{|I_k^-|\,|I_k^+|}{|I_k^-|+|I_k^+|}
\,
\|\boldsymbol{\theta}^- - \boldsymbol{\theta}^+\|_2^2.
\label{eq:lower_bound_main}
\end{equation}
In particular,
\begin{equation}
\underline{\mathcal{R}}_k
\;\ge\;
\frac{\alpha}{2}\min\{|I_k^-|,|I_k^+|\}
\,
\|\boldsymbol{\theta}^- - \boldsymbol{\theta}^+\|_2^2.
\label{eq:lower_bound_main_simple}
\end{equation}
\end{theorem}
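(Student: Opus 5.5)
The plan is to reduce the residual energy to a quadratic form in $\boldsymbol{\theta}$ using the affine structure of Assumption~\ref{ass:structural}, split it over $I_k^-$ and $I_k^+$, and then minimize the sum of two quadratics explicitly. On $I_k^-$ the true trajectory satisfies $\frac{{\rm d}\boldsymbol{x}^\star}{{\rm d}t}=G(t,\boldsymbol{x}^\star)\boldsymbol{\theta}^-+\boldsymbol{b}(t,\boldsymbol{x}^\star)$. On $I_k^+$ the same holds with $\boldsymbol{\theta}^+$, and the single point $\tau$ has measure zero. Hence the residual integrand equals $\|G(t,\boldsymbol{x}^\star(t))(\boldsymbol{\theta}^\pm-\boldsymbol{\theta})\|_2^2$ on the respective pieces. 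Writing $M^\pm=M(I_k^\pm)$, this gives
\begin{equation*}
\mathcal{R}_k(\boldsymbol{\theta})=(\boldsymbol{\theta}^--\boldsymbol{\theta})^\top M^-(\boldsymbol{\theta}^--\boldsymbol{\theta})+(\boldsymbol{\theta}^+-\boldsymbol{\theta})^\top M^+(\boldsymbol{\theta}^+-\boldsymbol{\theta}).
\end{equation*}

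\textbf{Case $\tau\notin I_k$.} Because $I_k$ is an interval not containing $\tau$, it lies entirely on one side of $\tau$, so one of $I_k^\pm$ is empty. Taking $\boldsymbol{\theta}$ equal to the active regime parameter, assumed to lie in $\Theta$, makes the integrand vanish identically. Therefore $\underline{\mathcal{R}}_k=0$.

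\textbf{Case $\tau\in I_k$.} Here I would bound each term from below using the eigenvalue assumption, $M^\pm\succeq \alpha|I_k^\pm|\,\mathrm{Id}$, which holds whenever $|I_k^\pm|>0$. If one piece has zero length, the bound \eqref{eq:lower_bound_main} is trivially zero. Setting $a=|I_k^-|$ and $c=|I_k^+|$, this gives
\begin{equation*}
\mathcal{R}_k(\boldsymbol{\theta})\ge \alpha\bigl(a\|\boldsymbol{\theta}^--\boldsymbol{\theta}\|_2^2+c\|\boldsymbol{\theta}^+-\boldsymbol{\theta}\|_2^2\bigr).
\end{equation*}
The key step is the elementary minimization of the right-hand side over all $\boldsymbol{\theta}\in\mathbb{R}^m$, not just over $\Theta$, so restricting to $\Theta$ only increases the infimum. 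The minimizer is the weighted mean $\boldsymbol{\theta}=(a\boldsymbol{\theta}^-+c\boldsymbol{\theta}^+)/(a+c)$, and substituting it gives the value $\frac{ac}{a+c}\|\boldsymbol{\theta}^--\boldsymbol{\theta}^+\|_2^2$. Equivalently, one can use the parallel-sum identity $\min_{\boldsymbol{\theta}}\bigl(a\|u-\boldsymbol{\theta}\|^2+c\|v-\boldsymbol{\theta}\|^2\bigr)=\frac{ac}{a+c}\|u-v\|^2$. Taking the infimum over $\boldsymbol{\theta}$ yields \eqref{eq:lower_bound_main}.

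For \eqref{eq:lower_bound_main_simple}, I would use the inequality $\frac{ac}{a+c}\ge \frac{\min\{a,c\}\max\{a,c\}}{2\max\{a,c\}}=\frac12\min\{a,c\}$.

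I do not expect a real obstacle; the points needing care are the following.
\begin{itemize}
\item The identification of $\frac{{\rm d}\boldsymbol{x}^\star}{{\rm d}t}$ holds only piecewise, since $\boldsymbol{x}^\star$ is merely piecewise $C^1$ by Theorem~\ref{thm:exist_unique_jump}. The integral should therefore be read as the sum over the two pieces.
\item The eigenvalue bound must be applied separately to the subintervals $I_k^\pm$, which is allowed because Assumption~\ref{ass:structural} holds for every subinterval $J$.
\item Relaxing $\Theta$ to $\mathbb{R}^m$ is harmless because it can only lower the infimum, so the bound still holds over $\Theta$.
\end{itemize}
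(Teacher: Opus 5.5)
Your proof is correct and follows the paper's argument essentially step for step. You use the affine reduction to the two quadratic forms in $M(I_k^-)$ and $M(I_k^+)$, the eigenvalue bound on each piece, minimization of the relaxed quadratic over $\mathbb{R}^m$ at the weighted mean, restriction back to $\Theta$, and the inequality $\frac{ac}{a+c}\ge\frac12\min\{a,c\}$; your side remarks go slightly beyond the paper, which glosses over the zero-length piece when $\tau$ is an endpoint of $I_k$, the measure-zero set $\{\tau\}$, and the requirement that the active regime parameter lie in $\Theta$.
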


\begin{proof}
We consider separately the cases $\tau\notin I_k$ and $\tau\in I_k$.

First, suppose that $\tau\notin I_k$. If $I_k$ lies entirely on one side of the change-point, then the true dynamics on $I_k$ are governed by a single constant parameter vector $\boldsymbol{\theta}^{\mathrm{true}}\in\{\boldsymbol{\theta}^-,\boldsymbol{\theta}^+\}$, which means
\begin{equation}
\frac{{\rm d}\boldsymbol{x}^\star}{{\rm d}t}(t)
=
\boldsymbol{f}\bigl(t,\boldsymbol{x}^\star(t);\boldsymbol{\theta}^{\mathrm{true}}\bigr),
\quad \forall\, t\in I_k.
\end{equation}
In the noise-free setting, choosing $\boldsymbol{\theta}=\boldsymbol{\theta}^{\mathrm{true}}$ in \eqref{eq:Rk_theta} yields
\begin{equation}
\mathcal{R}_k(\boldsymbol{\theta}^{\mathrm{true}})
=
\int_{I_k}
\bigl\|
\dot{\boldsymbol{x}}^\star(t)
-
\boldsymbol{f}\bigl(t,\boldsymbol{x}^\star(t);\boldsymbol{\theta}^{\mathrm{true}}\bigr)
\bigr\|_2^2\,{\rm d}t
=
0.
\end{equation}
Hence,
\begin{equation}
\underline{\mathcal{R}}_k
=
\inf_{\boldsymbol{\theta}\in\Theta}\mathcal{R}_k(\boldsymbol{\theta})
=
0.
\end{equation}

We now turn to the case $\tau\in I_k$ and $\boldsymbol{\theta}^-\neq\boldsymbol{\theta}^+$. When the subinterval $I_k$ crosses the change-point, the true trajectory satisfies the piecewise dynamics
\begin{equation}
\dot{\boldsymbol{x}}^\star(t)
=
\begin{cases}
\boldsymbol{f}\bigl(t,\boldsymbol{x}^\star(t);\boldsymbol{\theta}^-\bigr), & t\in I_k^-,\\[2pt]
\boldsymbol{f}\bigl(t,\boldsymbol{x}^\star(t);\boldsymbol{\theta}^+\bigr), & t\in I_k^+.
\end{cases}
\end{equation}
For any candidate constant parameter vector $\boldsymbol{\theta}\in\Theta$, the corresponding residual energy admits the decomposition
\begin{equation}
\begin{aligned}
\mathcal{R}_k(\boldsymbol{\theta})
&=
\int_{I_k}
\bigl\|
\dot{\boldsymbol{x}}^\star(t)
-
\boldsymbol{f}\bigl(t,\boldsymbol{x}^\star(t);\boldsymbol{\theta}\bigr)
\bigr\|_2^2\,{\rm d}t
\\
&=
\int_{I_k^-}
\bigl\|
\boldsymbol{f}\bigl(t,\boldsymbol{x}^\star(t);\boldsymbol{\theta}^-\bigr)
-
\boldsymbol{f}\bigl(t,\boldsymbol{x}^\star(t);\boldsymbol{\theta}\bigr)
\bigr\|_2^2\,{\rm d}t
+
\int_{I_k^+}
\bigl\|
\boldsymbol{f}\bigl(t,\boldsymbol{x}^\star(t);\boldsymbol{\theta}^+\bigr)
-
\boldsymbol{f}\bigl(t,\boldsymbol{x}^\star(t);\boldsymbol{\theta}\bigr)
\bigr\|_2^2\,{\rm d}t .
\end{aligned}
\label{eq:residual-decomposition}
\end{equation}

By Assumption~\ref{ass:structural}, the vector field is affine in the parameter vector and can be written as
\begin{equation}
\boldsymbol{f}(t,\boldsymbol{x};\boldsymbol{\theta})
=
G(t,\boldsymbol{x})\boldsymbol{\theta}
+
\boldsymbol{b}(t,\boldsymbol{x}).
\end{equation}
Substituting this representation into \eqref{eq:residual-decomposition} and canceling the common offset term $\boldsymbol{b}(t,\boldsymbol{x}^\star(t))$ yields
\begin{equation}
\begin{aligned}
\mathcal{R}_k(\boldsymbol{\theta})
&=
\int_{I_k^-}
\bigl\|
G\bigl(t,\boldsymbol{x}^\star(t)\bigr)(\boldsymbol{\theta}^- - \boldsymbol{\theta})
\bigr\|_2^2\,{\rm d}t
+
\int_{I_k^+}
\bigl\|
G\bigl(t,\boldsymbol{x}^\star(t)\bigr)(\boldsymbol{\theta}^+ - \boldsymbol{\theta})
\bigr\|_2^2\,{\rm d}t
\\
&=
(\boldsymbol{\theta}^- - \boldsymbol{\theta})^\top
M(I_k^-)
(\boldsymbol{\theta}^- - \boldsymbol{\theta})
+
(\boldsymbol{\theta}^+ - \boldsymbol{\theta})^\top
M(I_k^+)
(\boldsymbol{\theta}^+ - \boldsymbol{\theta}),
\end{aligned}
\label{eq:quadratic-form}
\end{equation}
where, for any interval $J\subset I$,
\begin{equation}
M(J)
=
\int_J
G\bigl(t,\boldsymbol{x}^\star(t)\bigr)^\top
G\bigl(t,\boldsymbol{x}^\star(t)\bigr)
\,{\rm d}t
\end{equation}
denotes the associated information matrix.

Invoking Assumption~\ref{ass:structural}, there exists $\alpha>0$ such that for any relevant interval $J\subset I$,
\begin{equation}
\lambda_{\min}\bigl(M(J)\bigr)\ge \alpha |J|.
\end{equation}
Applying this bound to \eqref{eq:quadratic-form} gives
\begin{equation}
\mathcal{R}_k(\boldsymbol{\theta})
\;\ge\;
\alpha |I_k^-|\,\|\boldsymbol{\theta}^- - \boldsymbol{\theta}\|_2^2
+
\alpha |I_k^+|\,\|\boldsymbol{\theta}^+ - \boldsymbol{\theta}\|_2^2 .
\label{eq:lower-bound-theta}
\end{equation}

The right-hand side of \eqref{eq:lower-bound-theta} is a strictly convex quadratic function of $\boldsymbol{\theta}$. Define
\begin{equation}
Q(\boldsymbol{\theta})
=
|I_k^-|\,\|\boldsymbol{\theta}^- - \boldsymbol{\theta}\|_2^2
+
|I_k^+|\,\|\boldsymbol{\theta}^+ - \boldsymbol{\theta}\|_2^2 .
\end{equation}
A direct calculation shows that the unique minimizer of $Q$ over $\mathbb{R}^m$ is
\begin{equation}
\bar{\boldsymbol{\theta}}_k
=
\frac{|I_k^-|\boldsymbol{\theta}^-+|I_k^+|\boldsymbol{\theta}^+}{|I_k^-|+|I_k^+|}.
\end{equation}
Substituting $\bar{\boldsymbol{\theta}}_k$ back into $Q$ yields
\begin{equation}
\inf_{\boldsymbol{\theta}\in\mathbb{R}^m}Q(\boldsymbol{\theta})
=
\frac{|I_k^-|\,|I_k^+|}{|I_k^-|+|I_k^+|}
\,
\|\boldsymbol{\theta}^- - \boldsymbol{\theta}^+\|_2^2.
\end{equation}
Since $\Theta\subset\mathbb{R}^m$, it follows that
\begin{equation}
\inf_{\boldsymbol{\theta}\in\Theta}Q(\boldsymbol{\theta})
\;\ge\;
\inf_{\boldsymbol{\theta}\in\mathbb{R}^m}Q(\boldsymbol{\theta}).
\end{equation}
Combining this inequality with \eqref{eq:lower-bound-theta} and taking the infimum over $\boldsymbol{\theta}\in\Theta$ gives
\begin{equation}
\underline{\mathcal{R}}_k
=
\inf_{\boldsymbol{\theta}\in\Theta}\mathcal{R}_k(\boldsymbol{\theta})
\;\ge\;
\alpha
\frac{|I_k^-|\,|I_k^+|}{|I_k^-|+|I_k^+|}
\,
\|\boldsymbol{\theta}^- - \boldsymbol{\theta}^+\|_2^2,
\end{equation}
which proves \eqref{eq:lower_bound_main}.

Finally, since
\begin{equation}
\frac{ab}{a+b}\ge \frac12\min\{a,b\},
\quad a,b>0,
\end{equation}
we also obtain \eqref{eq:lower_bound_main_simple}. This completes the proof.
\end{proof}

\begin{remark}
Theorem~\ref{thm:lower_bound} provides the theoretical basis for Stage~I. It shows that any subinterval containing the true change-point must exhibit a strictly positive residual mismatch under a single constant-parameter fit. Hence, the elevated residual used in the screening step is not merely an empirical phenomenon, but a structural consequence of regime switching. Unlike purely statistical change-point criteria, the detection signal here arises directly from incompatibility with the governing dynamics.
\end{remark}

Theorem~\ref{thm:lower_bound} concerns an idealized continuous-time quantity defined along the true trajectory. In practice, the score $S_k$ in \eqref{eq:Sk_def} is the corresponding learned discrete counterpart and is also affected by approximation error, measurement noise, and quadrature error. Under Assumption~\ref{ass:quadrature}, together with sufficient network capacity and adequate training, these effects are reduced, so that the relative ordering of $\{S_k\}_{k=1}^K$ remains informative for identifying subintervals that contain the change-point.

To reduce sensitivity to global scaling and outliers across subintervals, we further standardize the scores $\{S_k\}_{k=1}^K$ using median absolute deviation (Mad) normalization. Let
\begin{equation}
\operatorname{Med}(S)=\operatorname{median}\{S_1,\dots,S_K\},
\end{equation}
and define
\begin{equation}
\operatorname{Mad}(S)
=
\operatorname{median}\bigl\{|S_k-\operatorname{Med}(S)|:k=1,\dots,K\bigr\}.
\end{equation}
We then set
\begin{equation}
Z_k
=
\frac{S_k-\operatorname{Med}(S)}
{\operatorname{Mad}(S)+\varepsilon},
\label{eq:zk_def}
\end{equation}
where $\varepsilon>0$ is a small numerical stabilization parameter.

We define the candidate index set by
\begin{equation}
\mathcal K_c=\{k: Z_k\ge \gamma\},
\end{equation}
and select
\begin{equation}
k^\star=\arg\max_{k\in\mathcal K_c} S_k.
\end{equation}
Finally, we define the candidate interval by
\begin{equation}
\mathcal I_c
=
I_{k^\star-1}\cup I_{k^\star}\cup I_{k^\star+1},
\end{equation}
with the obvious boundary truncation when $k^\star\in\{1,K\}$. By construction, $\mathcal I_c$ is a narrow region that is expected to contain the true change-point $\tau$ with high probability and is therefore used as the search region in Stage~II.

\subsection{Stage II: Joint Inference within Candidate Interval}
\label{sec:stage2}

Stage~I yields a candidate interval $\mathcal I_c=[t_L,t_R]$ rather than a point estimate of the change-point. In Stage~II of RAA-PINNs, we refine this coarse localization by treating the change-point as a trainable variable within a local PINN defined on $\mathcal I_c$.

To enforce the constraint $\tau\in(t_L,t_R)$ during optimization, we introduce an unconstrained scalar variable $\eta\in\mathbb{R}$ and define
\begin{equation}
\tau(\eta)
=
t_L+(t_R-t_L)\,\sigma(\eta),
\quad
\sigma(z)=\frac{1}{1+e^{-z}}.
\end{equation}
This reparameterization ensures that the change-point remains in the interior of $\mathcal I_c$ throughout the optimization process.

To enable gradient-based optimization with respect to $\tau$, we replace the discontinuous step profile with the smooth gating function
\begin{equation}
g(t;\tau,\kappa)
=
\sigma\bigl(\kappa(t-\tau)\bigr),
\end{equation}
where $\kappa>0$ is a prescribed sharpness parameter. Larger values of $\kappa$ yield a closer approximation to a hard change-point.

Using this gate, we introduce the time-dependent parameter vector
\begin{equation}
\boldsymbol{\theta}(t;\boldsymbol{\theta}^-,\boldsymbol{\theta}^+,\eta)
=
\boldsymbol{\theta}^-
+
(\boldsymbol{\theta}^+-\boldsymbol{\theta}^-)\,
g\bigl(t;\tau(\eta),\kappa\bigr),
\label{eq:theta_gate}
\end{equation}
which serves as a differentiable surrogate for the piecewise-constant parameterization in \eqref{eq:theta_piecewise}. In the limit $\kappa\to\infty$, this representation approaches a sharp transition.

We then train a new local PINN $\boldsymbol{x}_{\phi}:\mathcal I_c\to\mathbb{R}^n$ jointly with the variables $(\boldsymbol{\theta}^-,\boldsymbol{\theta}^+,\eta)$ in order to refine both the change-point location and the regime-specific parameters. Let
\begin{equation}
S_c^r=\{s_i^{(c)}\}_{i=1}^{N_c^r}\subset \mathcal I_c,
\quad
S_c^d=\{t_j^{(c)}\}_{j=1}^{N_c^d}\subset \mathcal I_c\cap I_d
\end{equation}
denote the collocation set and the observation set on $\mathcal I_c$, respectively. Let $\{w_i^{(c)}\}_{i=1}^{N_c^r}$ and $\{v_j^{(c)}\}_{j=1}^{N_c^d}$ be the associated positive weights.

The physics residual and the data residual under the gated parameterization are defined by
\begin{equation}
\mathcal{R}_{\mathrm{int}}[\boldsymbol{x}_\phi,\boldsymbol{\theta}(\cdot)](t)
=
\frac{{\rm d}\boldsymbol{x}_\phi}{{\rm d}t}(t)
-
\boldsymbol{f}\bigl(t,\boldsymbol{x}_\phi(t);\boldsymbol{\theta}(t;\boldsymbol{\theta}^-,\boldsymbol{\theta}^+,\eta)\bigr),
\quad t\in\mathcal I_c,
\end{equation}
and
\begin{equation}
\mathcal{R}_{\mathrm{data}}[\boldsymbol{x}_\phi](t_j)
=
\boldsymbol{x}_\phi(t_j)-\boldsymbol{x}_d(t_j),
\quad t_j\in I_d\cap\mathcal I_c.
\end{equation}

We then minimize the local objective
\begin{equation}
\begin{aligned}
J_c(\phi,\boldsymbol{\theta}^-,\boldsymbol{\theta}^+,\eta)
&=
\sum_{j=1}^{N_c^d} v_j^{(c)}
\bigl\|
\mathcal{R}_{\mathrm{data}}[\boldsymbol{x}_\phi](t_j^{(c)})
\bigr\|^{p_z}
+
\lambda
\sum_{i=1}^{N_c^r} w_i^{(c)}
\bigl\|
\mathcal{R}_{\mathrm{int}}[\boldsymbol{x}_\phi,\boldsymbol{\theta}(\cdot)](s_i^{(c)})
\bigr\|^{p_y},
\end{aligned}
\label{eq:stage2_loss}
\end{equation}
where $\lambda>0$ plays the same balancing role as in \eqref{eq:loss} and \eqref{eq:subinterval-loss}.

Let $(\hat\phi,\hat{\boldsymbol{\theta}}^-,\hat{\boldsymbol{\theta}}^+,\hat\eta)$ be any minimizer of \eqref{eq:stage2_loss}. The refined change-point estimate is then given by $\hat\tau=\tau(\hat\eta)$, while $\hat{\boldsymbol{\theta}}^-$ and $\hat{\boldsymbol{\theta}}^+$ are the corresponding parameter estimates on the two sides of the transition.

By construction, Stage~II converts the interval-level output of Stage~I into a pointwise change-point estimate through a differentiable parameterization on the reduced temporal region $\mathcal I_c$, while simultaneously recovering the parameter vectors before and after the transition.

\subsection{Theoretical Error Analysis}
\label{sec:error_analysis}

In this section, we analyze the approximation error of the PINN-based inverse solution for the ODE system ${\rm d}\boldsymbol{x}(t)=\boldsymbol{f}\bigl(t,\boldsymbol{x}(t);\boldsymbol{\theta}\bigr){\rm d}t$ in the presence of measurement data. This analysis provides a generalization-error interpretation for the local physics-informed inverse module underlying RAA-PINNs. Our argument follows the general philosophy of physics-informed inverse learning and conditional-stability-based generalization estimates \cite{mishra2022estimates}.

Let $\hat{\boldsymbol{x}}:=\boldsymbol{x}_{\phi}$ denote the learned PINN approximation of the state trajectory. Let $I'\subset I$ and let $1\le p_x,p_y,p_z<\infty$. Let $S^r=\{s_i\}_{i=1}^{N_r}\subset I$ be the collocation set for the physics residual, and let $S^d=\{t_j\}_{j=1}^{N_d}\subset I'$ be the data set associated with the measurement locations. We denote the corresponding positive quadrature weights by $\{w_i\}_{i=1}^{N_r}$ and $\{v_j\}_{j=1}^{N_d}$, respectively.

For any interval $E\subset I$, we define the generalization error by
\begin{equation}
\mathcal{E}_G(E)
=
\mathcal{E}_G(E;\phi,S^r,S^d)
=
\|\hat{\boldsymbol{x}}-\boldsymbol{x}\|_{L^{p_x}(E)}.
\label{eq:generalization_error}
\end{equation}
As indicated above, the generalization error depends on the training sets $S^r$, $S^d$, as well as on the network parameter $\phi$.

We next define the training residuals as
\begin{equation}\label{eq:training_errors}
\mathcal{E}_{r,T}(\boldsymbol{\theta},S^r)
=
\left(
\sum_{i=1}^{N_r}
w_i\,
\bigl\|
\mathcal{R}_{\mathrm{int}}[\hat{\boldsymbol{x}},\boldsymbol{\theta}](s_i)
\bigr\|^{p_y}
\right)^{1/p_y}, \quad
\mathcal{E}_{d,T}(S^d)
=
\left(
\sum_{j=1}^{N_d}
v_j\,
\bigl\|
\mathcal{R}_{\mathrm{data}}[\hat{\boldsymbol{x}}](t_j)
\bigr\|^{p_z}
\right)^{1/p_z},
\end{equation}
where $\mathcal{R}_{\mathrm{int}}$ and $\mathcal{R}_{\mathrm{data}}$ are defined in \eqref{eq:residuals-ode}. The quantity $\mathcal{E}_{r,T}$ measures the discrete physics residual, while $\mathcal{E}_{d,T}$ measures the discrepancy with the observations.

\begin{assumption}[Conditional stability]
\label{ass:ode_conditional_stability}
Let $\hat X\subset X^\ast\subset X=L^{p_x}(I)$ be Banach spaces. For any $u,v\in\hat X$, assume that the differential operator $\mathcal D$ and the restriction operator $\mathcal L$ satisfy
\begin{equation}
\|u-v\|_{L^{p_x}(E)}
\le
C_{pd}\!\bigl(\|u\|_{\hat X},\|v\|_{\hat X}\bigr)
\Bigl(
\|\mathcal D(u,\boldsymbol{\theta})-\mathcal D(v,\boldsymbol{\theta}')\|_{Y}^{\rho_p}
+
\|\mathcal L(u)-\mathcal L(v)\|_{Z}^{\rho_d}
\Bigr),
\label{eq:ode_conditional_stability}
\end{equation}
for some $0<\rho_p,\rho_d\le 1$ and any subset $I'\subset E\subset I$, where $Y=L^{p_y}(I)$ and $Z=L^{p_z}(I')$. Here
\begin{equation}
\mathcal D(\boldsymbol{x},\boldsymbol{\theta})(t)
=
\frac{{\rm d}\boldsymbol{x}}{{\rm d}t}(t)
-
\boldsymbol{f}\bigl(t,\boldsymbol{x}(t);\boldsymbol{\theta}\bigr),
\quad
\mathcal L(\boldsymbol{x})
=
\boldsymbol{x}\big|_{I'}.
\end{equation}
\end{assumption}

Let $S^r=\{s_i\}_{i=1}^{N_r}\subset I$ be quadrature points with weights $w_i\in\mathbb R_+$. For a function $h:I\to\mathbb R$, define the quadrature functional
\begin{equation}
Q_r^N(h)
=
\sum_{i=1}^{N_r} w_i\,h(s_i).
\label{eq:ode_quadrature_int}
\end{equation}
Assume that the corresponding quadrature error satisfies
\begin{equation}
\left|
\int_I h(t)\,{\rm d}t
-
Q_r^N(h)
\right|
\le
C_q\bigl(\|h\|_{Y,\bar d}\bigr)\,N_r^{-\alpha},
\quad \text{for some }\alpha>0,
\label{eq:ode_quadrature_int_error}
\end{equation}
where $\bar d$ denotes the regularity index required by the quadrature rule.

Similarly, let $S^d=\{t_j\}_{j=1}^{N_d}\subset I'$ be quadrature points for the data term with weights $v_j\in\mathbb R_+$. For a function $h_d:I'\to\mathbb R$, define
\begin{equation}
Q_d^N(h_d)
=
\sum_{j=1}^{N_d} v_j\,h_d(t_j).
\label{eq:ode_quadrature_data}
\end{equation}
Assume that the corresponding quadrature error satisfies
\begin{equation}
\left|
\int_{I'} h_d(t)\,{\rm d}t
-
Q_d^N(h_d)
\right|
\le
C_{qd}\!\bigl(\|h_d\|_{Z,\bar d}\bigr)\,N_d^{-\alpha_d},
\quad \text{for some }\alpha_d>0.
\label{eq:ode_quadrature_data_error}
\end{equation}
Then the following theorem bounds the generalization error in terms of the discrete training residuals and the quadrature errors, in the spirit of the conditional-stability-based PINN analysis in \cite{qian2023physics}.

\begin{theorem}[PINN generalization error estimate for ODE inverse problems]
\label{thm:ode-gen-error}
Let $\boldsymbol{x}\in\hat X\subset X^\ast\subset X$ be the solution of the inverse problem associated with \eqref{eq:3.6}, and assume that the stability estimate \eqref{eq:ode_conditional_stability} holds for any $I'\subset E\subset I$. Let $\hat{\boldsymbol{x}}\in\hat X$ be a PINN approximation generated from the training sets $S^r$ and $S^d$. Assume further that $\|\mathcal{R}_{\mathrm{int}}[\hat{\boldsymbol{x}},\hat{\boldsymbol{\theta}}](\cdot)\|^{p_y}\in Y$ and $\|\mathcal{R}_{\mathrm{data}}[\hat{\boldsymbol{x}}](\cdot)\|^{p_z}\in Z$, and that the quadrature errors satisfy \eqref{eq:ode_quadrature_int_error} and \eqref{eq:ode_quadrature_data_error}. Then
\begin{equation}
\mathcal{E}_G(E)
\le
C_{pd}\Bigl(
\mathcal{E}_{r,T}(\hat{\boldsymbol{\theta}},S^r)^{\rho_p}
+
\mathcal{E}_{d,T}(S^d)^{\rho_d}
+
C_q\,N_r^{-\alpha\rho_p/p_y}
+
C_{qd}\,N_d^{-\alpha_d\rho_d/p_z}
\Bigr),
\label{eq:EG-bound-ode-final}
\end{equation}
where
\begin{equation}
C_{pd}
=
C_{pd}\bigl(\|\boldsymbol{x}\|_{\hat X},\|\hat{\boldsymbol{x}}\|_{\hat X}\bigr),
\quad
C_q
=
C_q\Bigl(
\bigl\|
\|\mathcal{R}_{\mathrm{int}}[\hat{\boldsymbol{x}},\hat{\boldsymbol{\theta}}](\cdot)\|^{p_y}
\bigr\|_{Y}
\Bigr),
\quad
C_{qd}
=
C_{qd}\Bigl(
\bigl\|
\|\mathcal{R}_{\mathrm{data}}[\hat{\boldsymbol{x}}](\cdot)\|^{p_z}
\bigr\|_{Z}
\Bigr).
\end{equation}
\end{theorem}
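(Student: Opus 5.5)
The plan is the standard conditional-stability argument of \cite{mishra2022estimates,qian2023physics}. It has three steps: apply Assumption~\ref{ass:ode_conditional_stability} to the true solution and the PINN approximation, identify the two continuous residual norms on the right-hand side, and replace those continuous norms by their discrete training counterparts plus quadrature errors.

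\textbf{Step 1 (stability).} Take $u=\hat{\boldsymbol{x}}$ with parameter $\hat{\boldsymbol{\theta}}$, and $v=\boldsymbol{x}$ with the true parameter $\boldsymbol{\theta}$. The true solution satisfies $\mathcal D(\boldsymbol{x},\boldsymbol{\theta})=0$, and in the noise-free setting $\mathcal L(\boldsymbol{x})=\boldsymbol{x}_d$ on $I'$. Hence $\mathcal D(\hat{\boldsymbol{x}},\hat{\boldsymbol{\theta}})-\mathcal D(\boldsymbol{x},\boldsymbol{\theta})=\mathcal{R}_{\mathrm{int}}[\hat{\boldsymbol{x}},\hat{\boldsymbol{\theta}}]$ and $\mathcal L(\hat{\boldsymbol{x}})-\mathcal L(\boldsymbol{x})=\mathcal{R}_{\mathrm{data}}[\hat{\boldsymbol{x}}]$. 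Inequality \eqref{eq:ode_conditional_stability} then gives
\begin{equation*}
\mathcal{E}_G(E)\le C_{pd}\Bigl(\|\mathcal{R}_{\mathrm{int}}[\hat{\boldsymbol{x}},\hat{\boldsymbol{\theta}}]\|_{L^{p_y}(I)}^{\rho_p}+\|\mathcal{R}_{\mathrm{data}}[\hat{\boldsymbol{x}}]\|_{L^{p_z}(I')}^{\rho_d}\Bigr),
\end{equation*}
with $C_{pd}=C_{pd}(\|\boldsymbol{x}\|_{\hat X},\|\hat{\boldsymbol{x}}\|_{\hat X})$ exactly as in the statement.

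\textbf{Step 2 (quadrature for the physics term).} Write $\|\mathcal{R}_{\mathrm{int}}\|_{L^{p_y}}^{p_y}=\int_I h$ with $h=\|\mathcal{R}_{\mathrm{int}}[\hat{\boldsymbol{x}},\hat{\boldsymbol{\theta}}](\cdot)\|^{p_y}$. The quadrature estimate \eqref{eq:ode_quadrature_int_error} gives
\begin{equation*}
\int_I h\le Q_r^N(h)+C_qN_r^{-\alpha}=\mathcal{E}_{r,T}^{p_y}+C_qN_r^{-\alpha}.
\end{equation*}
Raise both sides to the power $\rho_p/p_y\le 1$ and use subadditivity of $s\mapsto s^{\beta}$ for $\beta\in(0,1]$, i.e.\ $(a+b)^\beta\le a^\beta+b^\beta$. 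This yields
\begin{equation*}
\|\mathcal{R}_{\mathrm{int}}\|^{\rho_p}\le \mathcal{E}_{r,T}^{\rho_p}+C_q^{\rho_p/p_y}N_r^{-\alpha\rho_p/p_y}.
\end{equation*}

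\textbf{Step 3 (quadrature for the data term).} The same argument with \eqref{eq:ode_quadrature_data_error} on $I'$ gives the data bound $\mathcal{E}_{d,T}^{\rho_d}+C_{qd}^{\rho_d/p_z}N_d^{-\alpha_d\rho_d/p_z}$. Substituting both bounds into Step 1 gives \eqref{eq:EG-bound-ode-final}, after renaming $C_q^{\rho_p/p_y}$ and $C_{qd}^{\rho_d/p_z}$ as $C_q$ and $C_{qd}$. This renaming is permissible because these constants depend only on the indicated norms.

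\textbf{Main obstacle.} The difficulty is bookkeeping rather than estimates.
\begin{itemize}
\item Subadditivity requires $\rho_p/p_y\le 1$. This needs $p_y\ge 1\ge\rho_p$, which the standing hypotheses provide; the same holds for $\rho_d/p_z$.
\item Relating the discrete data sum to the $L^{p_z}(I')$ norm implicitly treats observations as exact samples of $\boldsymbol{x}$. With noise, an extra noise term would have to appear.
\item The quadrature constants must be finite. This is guaranteed by the regularity hypotheses that the residual powers lie in $Y$ and $Z$.
\end{itemize}
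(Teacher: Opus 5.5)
Your proposal is correct and follows the paper's proof essentially step for step. The paper also applies the conditional stability estimate with $u=\hat{\boldsymbol{x}}$ and $v=\boldsymbol{x}$, then bounds $\|\mathcal R\|_Y^{p_y}$ and $\|\mathcal R_d\|_Z^{p_z}$ by their quadrature counterparts, and finishes with subadditivity of $a\mapsto a^{\rho_p/p_y}$ and $a\mapsto a^{\rho_d/p_z}$. Your explicit remarks on the noise-free identification $\boldsymbol{x}_d(t_j)=\boldsymbol{x}(t_j)$ and on renaming $C_q^{\rho_p/p_y}$ as $C_q$ (and likewise for $C_{qd}$) spell out two points the paper's proof leaves implicit.
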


\begin{proof}
For notational simplicity, define
\begin{equation}
\mathcal R
=
\mathcal D(\hat{\boldsymbol{x}},\hat{\boldsymbol{\theta}}),
\quad
\mathcal R_d
=
\mathcal L(\hat{\boldsymbol{x}})-\boldsymbol{x}\big|_{I'}.
\end{equation}
Since $(\boldsymbol{x},\boldsymbol{\theta})$ solves \eqref{eq:3.6}, we have $\mathcal D(\boldsymbol{x},\boldsymbol{\theta})\equiv 0$, and therefore
\begin{equation}
\mathcal R
=
\mathcal D(\hat{\boldsymbol{x}},\hat{\boldsymbol{\theta}})
-
\mathcal D(\boldsymbol{x},\boldsymbol{\theta}).
\label{eq:ode_R_as_operator_diff}
\end{equation}
Similarly, because $\mathcal L(\boldsymbol{x})=\boldsymbol{x}|_{I'}$, it follows that
\begin{equation}
\mathcal R_d
=
\mathcal L(\hat{\boldsymbol{x}})
-
\mathcal L(\boldsymbol{x}).
\label{eq:ode_Rd_as_operator_diff}
\end{equation}

Applying the conditional stability estimate \eqref{eq:ode_conditional_stability} with $u=\hat{\boldsymbol{x}}$ and $v=\boldsymbol{x}$ gives
\begin{align}
\mathcal E_G(E)
=
\|\hat{\boldsymbol{x}}-\boldsymbol{x}\|_{L^{p_x}(E)}
&\le
C_{pd}\Bigl(
\|\mathcal D(\hat{\boldsymbol{x}},\hat{\boldsymbol{\theta}})
-
\mathcal D(\boldsymbol{x},\boldsymbol{\theta})\|_{Y}^{\rho_p}
+
\|\mathcal L(\hat{\boldsymbol{x}})-\mathcal L(\boldsymbol{x})\|_{Z}^{\rho_d}
\Bigr)
\notag\\
&=
C_{pd}\Bigl(
\|\mathcal R\|_{Y}^{\rho_p}
+
\|\mathcal R_d\|_{Z}^{\rho_d}
\Bigr).
\label{eq:ode_step_stability_to_residuals}
\end{align}

We next bound the two residual norms by their discrete training counterparts. Since $Y=L^{p_y}(I)$, applying the quadrature estimate \eqref{eq:ode_quadrature_int_error} to $h(t)=\|\mathcal R(t)\|^{p_y}$ yields
\begin{equation}
\|\mathcal R\|_{Y}^{p_y}
=
\int_I \|\mathcal R(t)\|^{p_y}\,{\rm d}t
\le
\sum_{i=1}^{N_r} w_i\,\|\mathcal R(s_i)\|^{p_y}
+
C_q N_r^{-\alpha}
=
\mathcal E_{r,T}(\hat{\boldsymbol{\theta}},S^r)^{p_y}
+
C_q N_r^{-\alpha}.
\end{equation}
Since $0<\rho_p\le 1$, we use the subadditivity of $a\mapsto a^{\rho_p/p_y}$ on $\mathbb R_+$ to obtain
\begin{equation}
\|\mathcal R\|_{Y}^{\rho_p}
\le
\mathcal E_{r,T}(\hat{\boldsymbol{\theta}},S^r)^{\rho_p}
+
C_q^{\rho_p/p_y} N_r^{-\alpha\rho_p/p_y}.
\label{eq:ode_step_quad_int}
\end{equation}

Similarly, since $Z=L^{p_z}(I')$, applying \eqref{eq:ode_quadrature_data_error} to $h_d(t)=\|\mathcal R_d(t)\|^{p_z}$ gives
\begin{equation}
\|\mathcal R_d\|_{Z}^{p_z}
=
\int_{I'} \|\mathcal R_d(t)\|^{p_z}\,{\rm d}t
\le
\sum_{j=1}^{N_d} v_j\,\|\mathcal R_d(t_j)\|^{p_z}
+
C_{qd} N_d^{-\alpha_d}
=
\mathcal E_{d,T}(S^d)^{p_z}
+
C_{qd} N_d^{-\alpha_d}.
\end{equation}
Again, since $0<\rho_d\le 1$,
\begin{equation}
\|\mathcal R_d\|_{Z}^{\rho_d}
\le
\mathcal E_{d,T}(S^d)^{\rho_d}
+
C_{qd}^{\rho_d/p_z} N_d^{-\alpha_d\rho_d/p_z}.
\label{eq:ode_step_quad_data}
\end{equation}

Substituting \eqref{eq:ode_step_quad_int} and \eqref{eq:ode_step_quad_data} into \eqref{eq:ode_step_stability_to_residuals} yields \eqref{eq:EG-bound-ode-final}.
\end{proof}

This bound also suggests a natural notion of a well-trained PINN, namely the regime in which the discrete training residuals are of the same order as, or smaller than, the quadrature-induced generalization gap as
\begin{equation}
\max\Bigl\{
\mathcal E_{r,T}(\hat{\boldsymbol{\theta}},S^r)^{\rho_p},
\mathcal E_{d,T}(S^d)^{\rho_d}
\Bigr\}
\le
C_q^{\rho_p/p_y}N_r^{-\alpha\rho_p/p_y}
+
C_{qd}^{\rho_d/p_z}N_d^{-\alpha_d\rho_d/p_z}.
\label{eq:ode_well_trained}
\end{equation}
In this regime, the dominant contribution to the generalization error comes from discretization and quadrature rather than incomplete optimization.

\begin{remark}
Estimate \eqref{eq:EG-bound-ode-final} decomposes the generalization error into contributions from training residuals and quadrature approximation under a conditional-stability framework. In particular, it links the PINN error explicitly to both optimization accuracy and sampling complexity. In the context of RAA-PINNs, this result provides theoretical support for the reliability of the local physics-informed inverse modules used in residual-based screening and local refinement.
\end{remark}

We further show that a parameter jump induces a non-vanishing post-change physics residual when the post-change dynamics are evaluated with a mismatched constant parameter in following theorem.

\begin{theorem}[Post-change residual lower bound under parameter mismatch]
\label{thm:ode-change-residual-lower-bound}
Suppose that the true parameter satisfies \eqref{eq:theta_piecewise}. Let
$E_{+}\subset I\cap[\tau,T]$, $S^r_{+}=S^r\cap E_{+}$, and
$N_{r,+}=|S^r_{+}|$. For any $\tilde{\boldsymbol{\theta}}\in\Theta$, assume that there exists $\gamma_{+}>0$ such that
\begin{equation}
\left\|
\boldsymbol f(\cdot,\boldsymbol{x}(\cdot);\boldsymbol{\theta}^{+})
-
\boldsymbol f(\cdot,\boldsymbol{x}(\cdot);\tilde{\boldsymbol{\theta}})
\right\|_{L^{p_y}(E_{+})}
\ge
\gamma_{+}
\|\boldsymbol{\theta}^{+}-\tilde{\boldsymbol{\theta}}\|.
\label{eq:post_change_identifiability_short}
\end{equation}
Define
\begin{equation}
\varepsilon_{+}
=
\|\dot{\hat{\boldsymbol{x}}}-\dot{\boldsymbol{x}}\|_{L^{p_y}(E_{+})}
+
L\|\hat{\boldsymbol{x}}-\boldsymbol{x}\|_{L^{p_y}(E_{+})}.
\label{eq:post_change_approx_error_short}
\end{equation}
Then
\begin{equation}
\left\|
\mathcal D(\hat{\boldsymbol{x}},\tilde{\boldsymbol{\theta}})
\right\|_{L^{p_y}(E_{+})}
\ge
\Bigl(
\gamma_{+}\|\boldsymbol{\theta}^{+}-\tilde{\boldsymbol{\theta}}\|
-
\varepsilon_{+}
\Bigr)_{+}.
\label{eq:continuous_residual_lower_bound_short}
\end{equation}
Moreover, if \eqref{eq:ode_quadrature_int_error} holds on $E_{+}$ with constant $C_{q,+}$, then
\begin{equation}
\mathcal E_{r,T}(\tilde{\boldsymbol{\theta}},S^r_{+})
\ge
\left[
\Bigl(
\gamma_{+}\|\boldsymbol{\theta}^{+}-\tilde{\boldsymbol{\theta}}\|
-
\varepsilon_{+}
\Bigr)_{+}^{p_y}
-
C_{q,+}N_{r,+}^{-\alpha}
\right]_{+}^{1/p_y}.
\label{eq:discrete_residual_lower_bound_short}
\end{equation}
In particular, for $\tilde{\boldsymbol{\theta}}=\boldsymbol{\theta}^{-}$, the post-change residual is bounded from below by the jump size
$\|\boldsymbol{\theta}^{+}-\boldsymbol{\theta}^{-}\|$ up to approximation and quadrature errors.
\end{theorem}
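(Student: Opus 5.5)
The plan is a triangle-inequality argument on $E_+$, followed by a quadrature transfer of the same type used in the proof of Theorem~\ref{thm:ode-gen-error}.

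\textbf{Step 1: decompose the residual.} On $E_+\subset[\tau,T]$ the true trajectory satisfies $\dot{\boldsymbol{x}}=\boldsymbol f(\cdot,\boldsymbol{x};\boldsymbol{\theta}^+)$. So for any $\tilde{\boldsymbol{\theta}}$ I would write
\begin{equation*}
\mathcal D(\hat{\boldsymbol{x}},\tilde{\boldsymbol{\theta}})
=
\bigl[\boldsymbol f(\cdot,\boldsymbol{x};\boldsymbol{\theta}^+)-\boldsymbol f(\cdot,\boldsymbol{x};\tilde{\boldsymbol{\theta}})\bigr]
+(\dot{\hat{\boldsymbol{x}}}-\dot{\boldsymbol{x}})
+\bigl[\boldsymbol f(\cdot,\boldsymbol{x};\tilde{\boldsymbol{\theta}})-\boldsymbol f(\cdot,\hat{\boldsymbol{x}};\tilde{\boldsymbol{\theta}})\bigr].
\end{equation*}

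\textbf{Step 2: the continuous bound.} Apply the reverse triangle inequality in $L^{p_y}(E_+)$ to this decomposition. The first bracket is bounded below by $\gamma_+\|\boldsymbol{\theta}^+-\tilde{\boldsymbol{\theta}}\|$ by \eqref{eq:post_change_identifiability_short}. The second term has norm $\|\dot{\hat{\boldsymbol{x}}}-\dot{\boldsymbol{x}}\|_{L^{p_y}(E_+)}$. The third bracket is bounded pointwise by $L\|\hat{\boldsymbol{x}}-\boldsymbol{x}\|$ using the Lipschitz condition \eqref{Lipschitz_condition}, applied with the fixed parameter $\tilde{\boldsymbol{\theta}}$. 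This uniform Lipschitz hypothesis, stated for all regimes, is what I would need to assume in force.

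Combining the three estimates gives $\|\mathcal D(\hat{\boldsymbol{x}},\tilde{\boldsymbol{\theta}})\|_{L^{p_y}(E_+)}\ge \gamma_+\|\boldsymbol{\theta}^+-\tilde{\boldsymbol{\theta}}\|-\varepsilon_+$. Since a norm is nonnegative, the bound also holds with the positive part, which is \eqref{eq:continuous_residual_lower_bound_short}.

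\textbf{Step 3: the discrete bound.} Raise the continuous bound to the power $p_y$; this is legitimate because both sides are nonnegative and $a\mapsto a^{p_y}$ is monotone on $\mathbb R_+$. Then apply the quadrature estimate \eqref{eq:ode_quadrature_int_error} on $E_+$ to $h=\|\mathcal D(\hat{\boldsymbol{x}},\tilde{\boldsymbol{\theta}})\|^{p_y}$, using it in the direction
\begin{equation*}
\sum_{s_i\in S^r_+} w_i h(s_i)\ge \int_{E_+}h\,{\rm d}t - C_{q,+}N_{r,+}^{-\alpha}.
\end{equation*}
The left-hand side is $\mathcal E_{r,T}(\tilde{\boldsymbol{\theta}},S^r_+)^{p_y}$. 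Since that quantity is nonnegative, take positive parts and then the $1/p_y$ root to obtain \eqref{eq:discrete_residual_lower_bound_short}.

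\textbf{Step 4: the special case.} Setting $\tilde{\boldsymbol{\theta}}=\boldsymbol{\theta}^-$ gives the final assertion immediately.

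\textbf{Main obstacle.} The main point needing care is bookkeeping in Step 3 rather than a genuine difficulty. The quadrature assumption was stated on $I$, so it must be used restricted to $E_+$ with the weights of $S^r_+$; the statement grants this via the hypothesis with constant $C_{q,+}$. The constant $C_{q,+}$ depends on $h$, and I would treat it as given. I would also note that the two-sided absolute-value form of \eqref{eq:ode_quadrature_int_error} supplies exactly the lower inequality needed.
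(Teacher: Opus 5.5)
Your proposal is correct and follows essentially the same route as the paper. Both use the same three-term decomposition of $\mathcal D(\hat{\boldsymbol{x}},\tilde{\boldsymbol{\theta}})$ on $E_+$, then the reverse triangle inequality combined with the Lipschitz bound and the identifiability hypothesis, and finally the one-sided use of the quadrature estimate on $E_+$ to pass to the discrete residual; your remarks on monotonicity of $a\mapsto a^{p_y}$ and on taking positive parts before the $1/p_y$ root make explicit steps the paper leaves implicit.
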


\begin{proof}
On $E_{+}$, the true trajectory satisfies
\begin{equation}
\dot{\boldsymbol{x}}(t)
=
\boldsymbol f(t,\boldsymbol{x}(t);\boldsymbol{\theta}^{+}).
\end{equation}
Hence, for any $\tilde{\boldsymbol{\theta}}\in\Theta$,
\begin{align}
\mathcal D(\hat{\boldsymbol{x}},\tilde{\boldsymbol{\theta}})
&=
\dot{\hat{\boldsymbol{x}}}
-
\boldsymbol f(t,\hat{\boldsymbol{x}};\tilde{\boldsymbol{\theta}})
\notag\\
&=
\dot{\hat{\boldsymbol{x}}}
-
\dot{\boldsymbol{x}}
+
\boldsymbol f(t,\boldsymbol{x};\boldsymbol{\theta}^{+})
-
\boldsymbol f(t,\boldsymbol{x};\tilde{\boldsymbol{\theta}})
+
\boldsymbol f(t,\boldsymbol{x};\tilde{\boldsymbol{\theta}})
-
\boldsymbol f(t,\hat{\boldsymbol{x}};\tilde{\boldsymbol{\theta}}).
\end{align}
Using the reverse triangle inequality, the Lipschitz condition \eqref{Lipschitz_condition}, and
\eqref{eq:post_change_identifiability_short}, we obtain
\begin{align}
\left\|
\mathcal D(\hat{\boldsymbol{x}},\tilde{\boldsymbol{\theta}})
\right\|_{L^{p_y}(E_{+})}
&\ge
\left\|
\boldsymbol f(\cdot,\boldsymbol{x};\boldsymbol{\theta}^{+})
-
\boldsymbol f(\cdot,\boldsymbol{x};\tilde{\boldsymbol{\theta}})
\right\|_{L^{p_y}(E_{+})}
\notag -
\|\dot{\hat{\boldsymbol{x}}}-\dot{\boldsymbol{x}}\|_{L^{p_y}(E_{+})}
-
L\|\hat{\boldsymbol{x}}-\boldsymbol{x}\|_{L^{p_y}(E_{+})}
\notag\\
&\ge
\gamma_{+}\|\boldsymbol{\theta}^{+}-\tilde{\boldsymbol{\theta}}\|
-
\varepsilon_{+}.
\end{align}
Taking the positive part gives \eqref{eq:continuous_residual_lower_bound_short}.

It remains to pass from the continuous residual to the discrete training residual. Applying the quadrature estimate on $E_{+}$ to
$h(t)=\|\mathcal D(\hat{\boldsymbol{x}},\tilde{\boldsymbol{\theta}})(t)\|^{p_y}$ gives
\begin{equation}
\mathcal E_{r,T}(\tilde{\boldsymbol{\theta}},S^r_{+})^{p_y}
\ge
\left\|
\mathcal D(\hat{\boldsymbol{x}},\tilde{\boldsymbol{\theta}})
\right\|_{L^{p_y}(E_{+})}^{p_y}
-
C_{q,+}N_{r,+}^{-\alpha}.
\end{equation}
Combining this inequality with \eqref{eq:continuous_residual_lower_bound_short} yields
\eqref{eq:discrete_residual_lower_bound_short}.
\end{proof}

\begin{remark}
Theorem \ref{thm:ode-change-residual-lower-bound} states that, after a parameter jump, using a pre-change or mismatched parameter necessarily generates a positive post-change physics residual whenever the jump magnitude dominates approximation and quadrature errors. This provides a theoretical justification for locating change points by detecting elevated physics residual losses after candidate transition times.
\end{remark}

In summary, this section establishes the RAA-PINNs framework for change-point detection and parameter recovery in nonlinear dynamical systems with regime switching. The proposed method combines local physics-informed inverse learning, residual-loss-anomaly-based coarse localization, and differentiable local refinement within a unified optimization framework. The theoretical analysis further shows that PINN generalization error can be controlled by training residuals and quadrature errors, while parameter jumps induce non-vanishing physics residuals under mismatched regimes. These results provide the mathematical foundation for using residual loss as an intrinsic signal of change points. In the next section, we validate the proposed framework on representative nonlinear dynamical systems and examine its performance in both change-point localization and parameter estimation.

\section{Numerical Experiment}\label{sec4}
In this section, we evaluate the performance of the proposed RAA-PINNs method for change-point detection and parameter estimation in nonlinear dynamical systems. We consider five representative systems: the Malthus model, the logistic model, the Van der Pol oscillator, the Lotka-Volterra model, and the Lorenz system. 

For all experiments, the time domain is divided into overlapping subintervals, and local PINNs are trained independently in each subinterval. Each network is a fully connected feedforward network, taking time as input and system state variables as output. The baseline network has 4 hidden layers with 64 neurons per layer. In the refinement stage, the width is increased to 80. All networks use the \texttt{tanh} activation function. The loss function includes both a data fitting term and a physics-informed residual term with equal weights. Training is performed using the Adam optimizer, with learning rates of $5 \times 10^{-3}$ for network parameters and $10^{-3}$ for physical parameters. In the refinement stage, a three-step training procedure is used: network pre-training, optimization of change-point positions, and joint optimization of parameters and change points. Change points are parameterized using a sigmoid function and linked with smooth transitions to ensure differentiability and numerical stability during optimization.

\subsection{Malthus and Logistic Model}

Population growth is a classical topic in dynamical systems, describing the temporal evolution of biological populations. Such models are widely used in ecology, demography, and resource management. Two representative models are Malthus and logistic models. In this work, we employ the RAA-PINNs to jointly estimate time-varying parameters and detect change points, illustrating the application of our framework to NDS-CPD.

The Malthusian model describes exponential population growth assuming unlimited resources. It is expressed as a one-dimensional ordinary differential equation:
\begin{equation}
\frac{{\rm d}P_m}{{\rm d}t} = r_m P_m,
\end{equation}
where $P_m$ denotes the population at time $t$, and $r_m$ is the population growth rate. In the context of a single-regime ODE, we denote $\boldsymbol{x}(t) = P_m(t)$ and $\boldsymbol{\theta} = r_m$ following the traditional nonmixture model formulation \eqref{eq:3.5}. We extend this model to a time-varying setting with multiple change points, using RAA-PINNs to jointly estimate the piecewise parameter $r_m(t)$ and detect change points.

\begin{figure}[t]
    \centering
    \begin{minipage}[b]{0.495\linewidth}
        \centering
        \includegraphics[width=\linewidth,trim=0.8cm 0.1cm 0.8cm 0.1cm,clip]{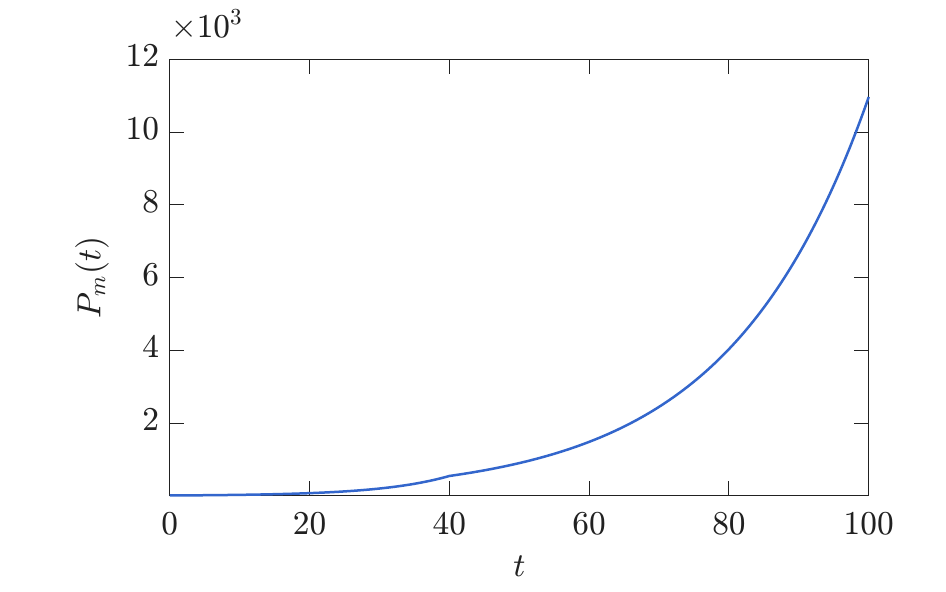}
    \end{minipage}
    \begin{minipage}[b]{0.495\linewidth}
        \centering
        \includegraphics[width=\linewidth,trim=0.8cm 0.1cm 0.8cm 0.1cm,clip]{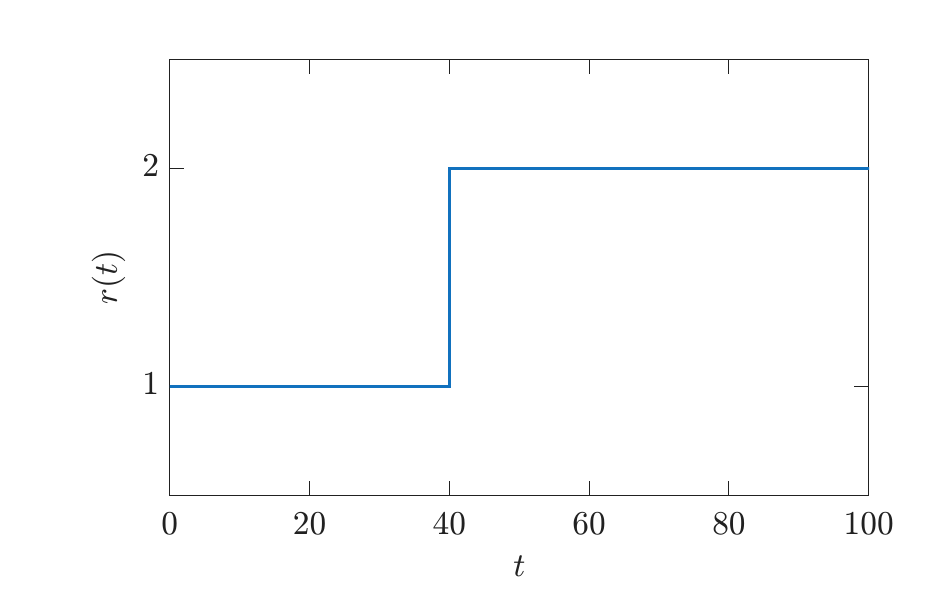}
    \end{minipage}

    \vspace{0.1em}

    \begin{minipage}[b]{0.495\linewidth}
        \centering
        \includegraphics[width=\linewidth,trim=0.8cm 0.1cm 0.8cm 0.1cm,clip]{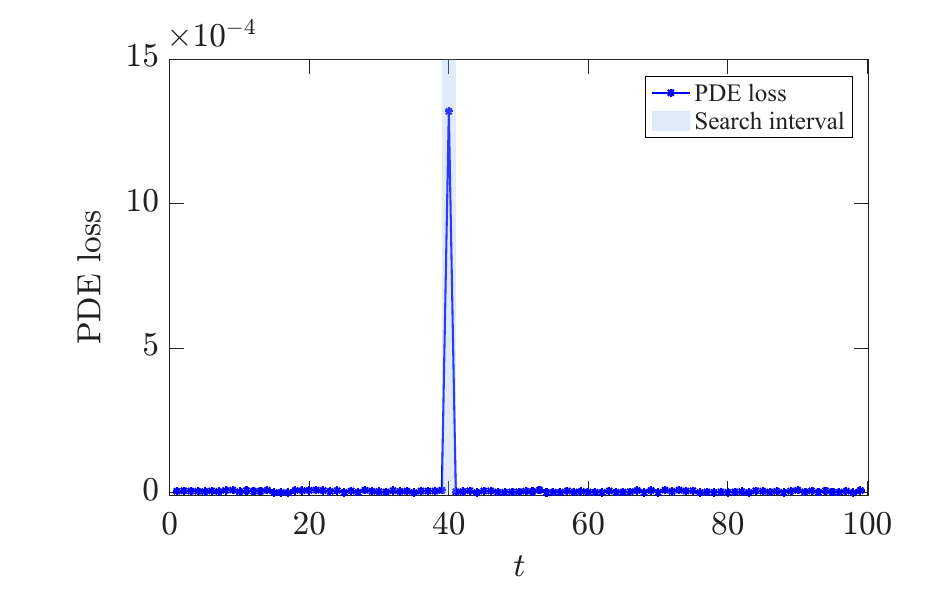}
    \end{minipage}
    \begin{minipage}[b]{0.495\linewidth}
        \centering
        \includegraphics[width=\linewidth,trim=0.8cm 0.1cm 0.8cm 0.1cm,clip]{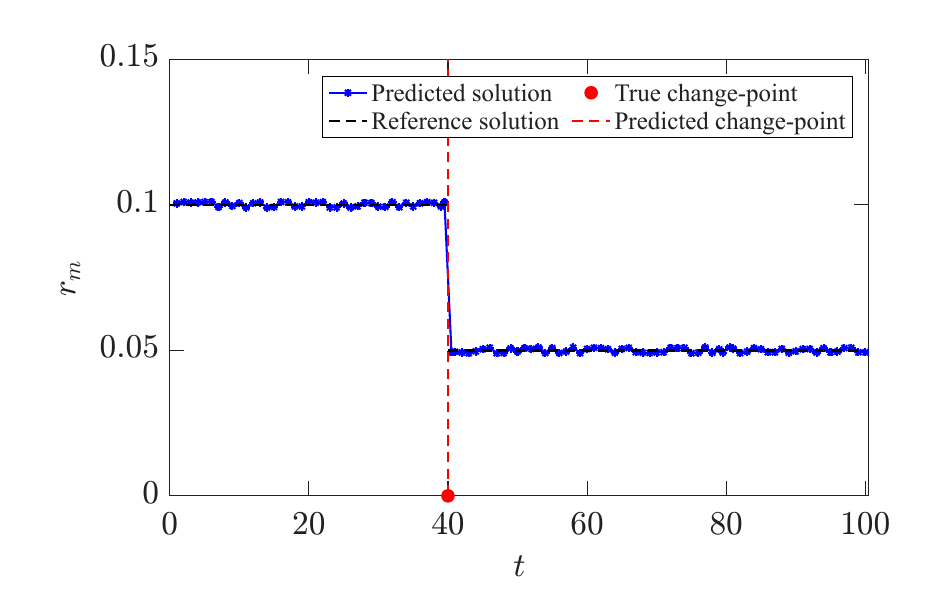}
    \end{minipage}

    \caption{Malthus model. Top row: the output of $P_m(t)$ over $[0,100]$ (left) and the corresponding sample path (right). Bottom row: Stage~I identifies the candidate change-point interval determined from the physical loss in the overlapping domain (left), and Stage~II results for refined parameter estimation and change-point localization (right).}
    \label{fig:malthus}
\end{figure}

To illustrate, we introduce a discrete state variable $r(t)$ to represent the regime switching over the temporal domain $[0,T]$ with $T=100$. Let $\mathbb{I} = \{1,2\}$, and define
\begin{equation}
r(t) =
\begin{cases}
1, & 0 \le t < 40, \\
2, & 40 \le t \le 100.
\end{cases}
\end{equation}
Accordingly, the temporal evolution of $r(t)$ follows $r(t): 1 \rightarrow 2$. The piecewise population growth rate is
\begin{equation}
\boldsymbol{\theta}(t) =
\begin{cases}
0.1, & r(t) = 1, \\
0.05, & r(t) = 2.
\end{cases}
\end{equation}

In Stage I, the temporal domain is divided into overlapping subintervals for coarse change-point localization. The candidate change-point intervals identified by the elevated physical loss are $[39, 41],$ which fully cover the true change point at $t = 40$. Stage II refines this interval to jointly estimate the precise change-point location and the piecewise parameter values. The system output, sample path, and residual results are shown in Figure~\ref{fig:malthus}.

\begin{figure}[t]
    \centering
    \begin{minipage}[b]{0.495\linewidth}
        \centering
        \includegraphics[width=\linewidth,trim=0.8cm 0.1cm 0.8cm 0.1cm,clip]{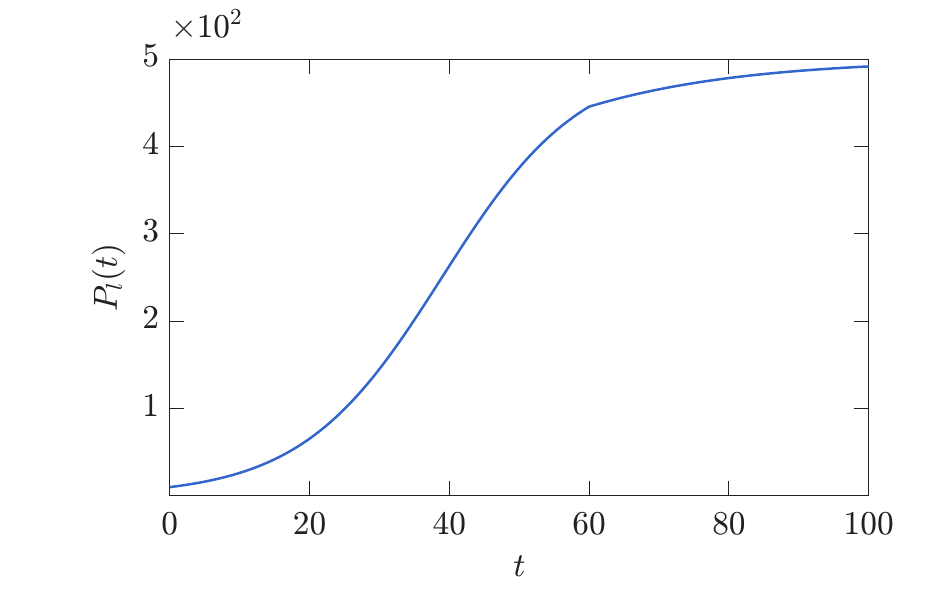}
    \end{minipage}
    \begin{minipage}[b]{0.495\linewidth}
        \centering
        \includegraphics[width=\linewidth,trim=0.8cm 0.1cm 0.8cm 0.1cm,clip]{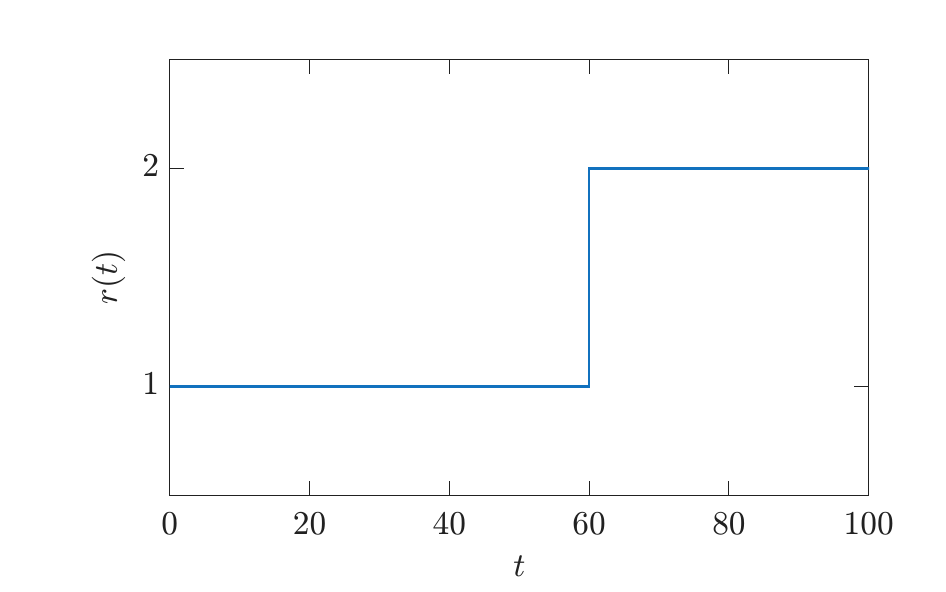}
    \end{minipage}

    \vspace{0.1em}

    \begin{minipage}[b]{0.495\linewidth}
        \centering
        \includegraphics[width=\linewidth,trim=0.8cm 0.1cm 0.8cm 0.1cm,clip]{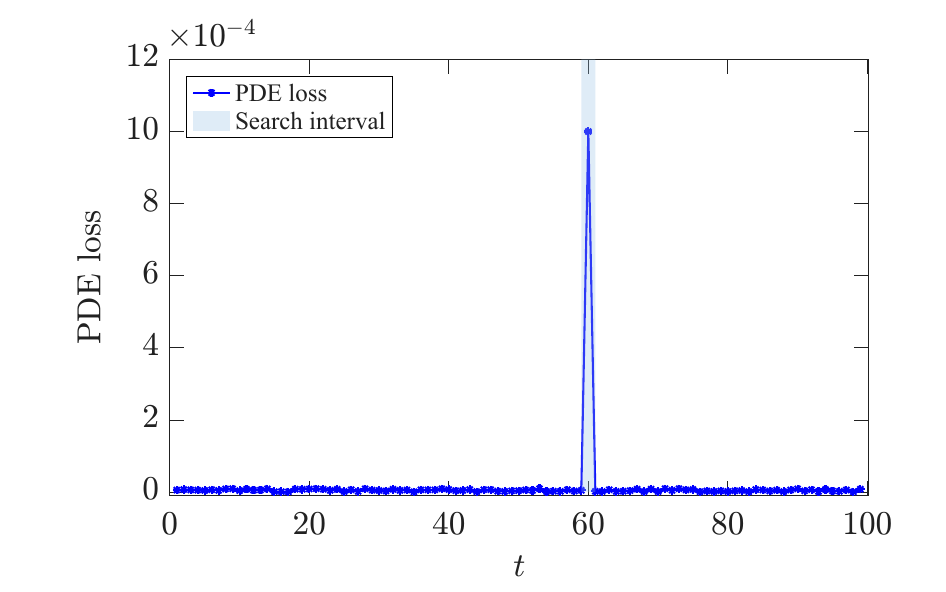}
    \end{minipage}
    \begin{minipage}[b]{0.495\linewidth}
        \centering
        \includegraphics[width=\linewidth,trim=0.8cm 0.1cm 0.8cm 0.1cm,clip]{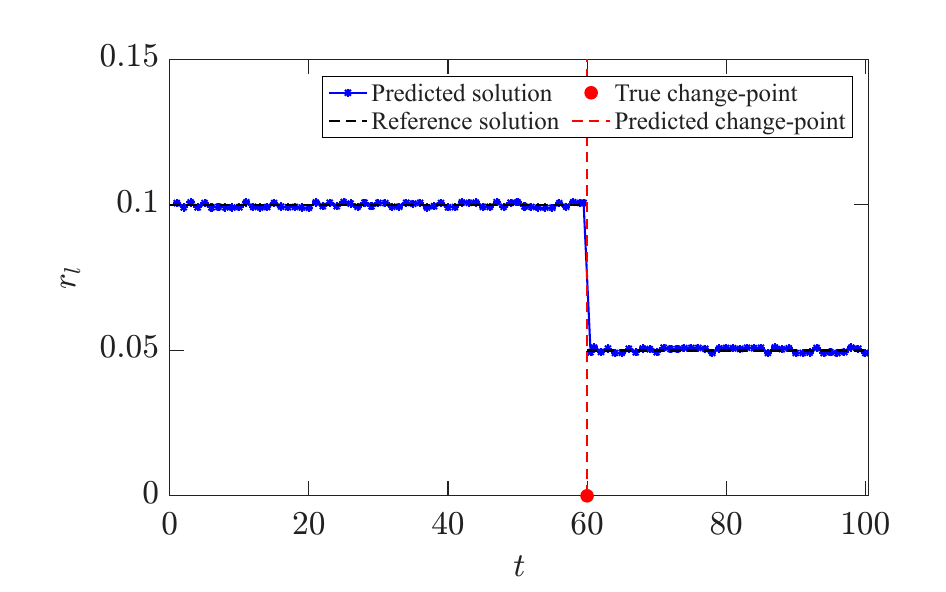}
    \end{minipage}
    \caption{Logistic model. Top row: the output of $P_l(t)$ over $[0,100]$ (left) and the corresponding sample path (right). Bottom row: Stage~I candidate change-point interval (left) and Stage~II refined parameter estimation and change-point localization (right).}
    \label{Logistic}
\end{figure}

The logistic model incorporates environmental carrying capacity to capture the transition from rapid growth to a stable population. It is described by
\begin{equation}
\frac{{\rm d}P_l}{{\rm d}t} = r_l P_l \left( 1 - \frac{P_l}{Q} \right),
\end{equation}
where $P_l$ denotes the population at time $t$, $r_l$ is the growth rate, and $Q$ is the carrying capacity. In a single-regime ODE, $\boldsymbol{x}(t) = P_l(t)$ and $\boldsymbol{\theta} = r_l$.

We consider a time-varying logistic equation with a change point. Introducing a discrete state variable $r(t)$ over $[0,T]$ with $T=100$:
\begin{equation}
r(t) =
\begin{cases}
1, & 0 \le t < 60, \\
2, & 60 \le t \le 100.
\end{cases}
\end{equation}
The piecewise parameter vector is
\begin{equation}
\boldsymbol{\theta}(t) =
\begin{cases}
0.1, & r(t) = 1, \\
0.05, & r(t) = 2.
\end{cases}
\end{equation}

In Stage I, the candidate change-point interval identified by the elevated physical loss is $[59, 61],$ covering the true change point at $t=60$. Stage II jointly refines this interval to estimate the precise change-point location and parameter values. The system output, sample path, and residuals are shown in Figure~\ref{Logistic}.

For both models, the time interval is $[0,100]$ with window length 2 and step size 1. All observations within each overlapping domain are used to estimate parameters independently with RAA-PINNs. Elevated physical loss in domains containing change points determines Stage I candidate intervals, which Stage II refines to obtain precise change-point locations and parameter estimates. Detailed statistical inference and mean square errors are reported in Tables~\ref{tab:combined_parameter_estimation} and~\ref{tab:combined_change_points}. This example demonstrates that RAA-PINNs can recover time-varying parameters in simple exponential growth systems, illustrating the method's effectiveness in identifying abrupt transitions in linear population dynamics.

\begin{table}[t]
\centering
\caption{Parameter estimation results for different numerical examples.}
\label{tab:combined_parameter_estimation}

\small
\renewcommand{\arraystretch}{1.0}
\setlength{\tabcolsep}{6pt}

\begin{tabular}{
w{c}{2.5cm}
w{c}{2cm}
w{c}{2cm}
w{c}{2cm}
w{c}{2.5cm}
w{c}{2.5cm}
}

\toprule
\makecell[c]{Numerical\\[-3pt] Example}
& \makecell[c]{Equation\\[-3pt] Coefficient}
& \makecell[c]{Time}
& \makecell[c]{True\\[-3pt] Value}
& \makecell[c]{Parameter\\[-3pt] Estimation}
& \makecell[c]{Squared\\[-3pt] Error} \\
\midrule

% ===== Malthus =====
\multirow{2}{*}{\makecell[c]{Malthus}}
& \multirow{2}{*}{$r_m$} & $[0,40]$   & 0.1  & 0.0972 & $7.840\times 10^{-6}$ \\
&       & $[40,100]$ & 0.05 & 0.0487 & $1.690\times 10^{-6}$ \\

\midrule

% ===== Logistic =====
\multirow{2}{*}{\makecell[c]{Logistic}}
& \multirow{2}{*}{$r_l$} & $[0,60]$   & 0.1  & 0.0947 & $2.809\times 10^{-5}$ \\
&       & $[60,100]$ & 0.05 & 0.0435 & $4.225\times 10^{-5}$ \\

\midrule

% ===== Van der Pol =====
\multirow{3}{*}{\makecell[c]{Van der Pol}}
& \multirow{3}{*}{$\mu$} & $[0,40]$   & 1   & 1.0238 & $5.664\times 10^{-4}$ \\
&       & $[40,80]$  & 0.1 & 0.0912 & $7.744\times 10^{-5}$ \\
&       & $[80,100]$ & 0.5 & 0.4878 & $1.488\times 10^{-4}$ \\

\bottomrule
\end{tabular}
\end{table}

\subsection{Van der Pol Oscillator}

The Van der Pol oscillator is a canonical self-excited nonlinear oscillatory system, originally developed to model electronic circuits. Due to its simple structure yet rich nonlinear behavior, it has become a standard benchmark for studying nonlinear oscillations and stable limit-cycle dynamics, and is widely used in circuits, mechanical systems, and biological rhythms. In this work, we apply RAA-PINNs to estimate time-varying parameters and detect change points, illustrating its application to NDS-CPD.

The oscillator is governed by the second-order differential equation:
\begin{equation}
    \frac{{\rm d}^{2}v}{{\rm d}t^{2}} - \mu\bigl(1 - v^{2}\bigr)\frac{{\rm d}v}{{\rm d}t} + v = 0.
\end{equation}
This equation can be equivalently expressed as a first-order system:
\begin{equation}
\begin{cases}
\dfrac{{\rm d}M}{{\rm d}t} = N,\\[1mm]
\dfrac{{\rm d}N}{{\rm d}t}= \mu (1 - M^{2}) N - M,
\end{cases}
\end{equation}
where $\boldsymbol{x}(t) = (M(t), N(t))^\top$ and $\boldsymbol{\theta}(t) = \mu(t)$ in the traditional nonmixture ODE model \eqref{eq:3.5}.

We consider a Van der Pol oscillator with a piecewise time-varying parameter $\mu(t)$ exhibiting multiple change points. A discrete state variable $r(t)$ with state space $\mathbb{I}$ is introduced to characterize the parameter switching. Let $\mathbb{I} = \{1,2,3\}$ and $T=100$, and define
\begin{equation}
r(t) =
\begin{cases}
1, & 0 \le t < 40,\\
2, & 40 \le t < 80,\\
3, & 80 \le t \le 100.
\end{cases}
\end{equation}
Accordingly, the temporal evolution of $r(t)$ is $1 \to 2 \to 3$, and the piecewise parameter $\boldsymbol{\theta}(t)$ is
\begin{equation}
\boldsymbol{\theta}(t) =
\begin{cases}
1, & r(t) = 1,\\
0.1, & r(t) = 2,\\
0.5, & r(t) = 3.
\end{cases}
\label{true mu}
\end{equation}

The system output and sample paths are shown in Figure~\ref{van der pol}.

\begin{figure}[t]
    \centering
    \begin{minipage}[b]{0.495\linewidth}
        \centering
        \includegraphics[width=\linewidth,trim=0.8cm 0.1cm 0.8cm 0.1cm,clip]{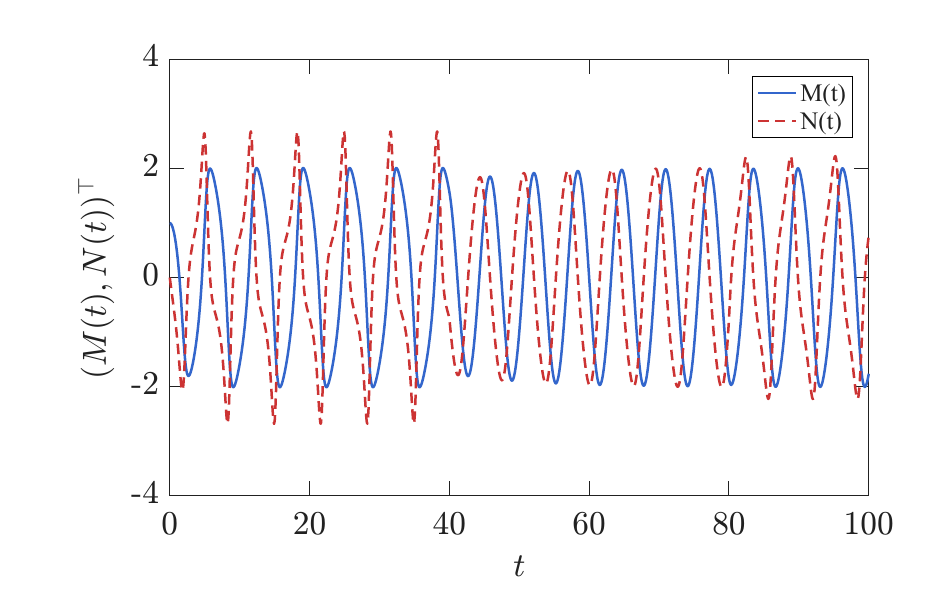}
    \end{minipage}%
    \begin{minipage}[b]{0.495\linewidth}
        \centering
        \includegraphics[width=\linewidth,trim=0.8cm 0.1cm 0.8cm 0.1cm,clip]{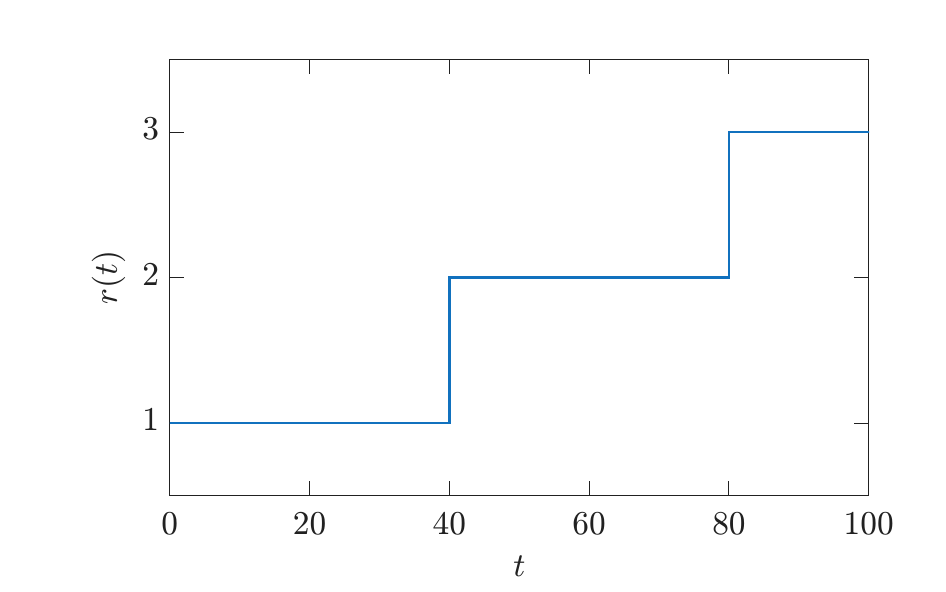}
    \end{minipage}

    \vspace{0.1em}

    \begin{minipage}[b]{0.495\linewidth}
        \centering
        \includegraphics[width=\linewidth,trim=0.8cm 0.1cm 0.8cm 0.1cm,clip]{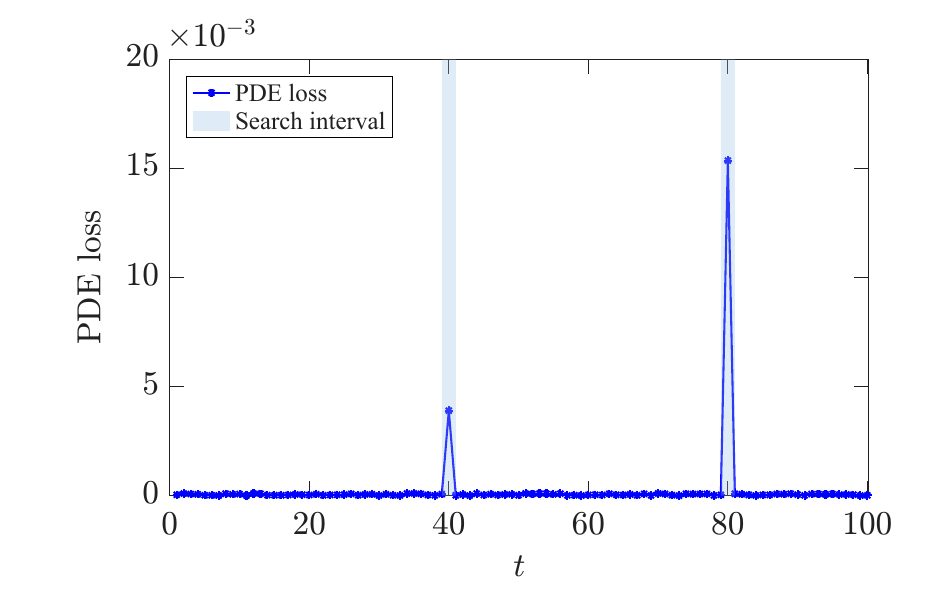}
    \end{minipage}%
    \begin{minipage}[b]{0.495\linewidth}
        \centering
        \includegraphics[width=\linewidth,trim=0.8cm 0.1cm 0.8cm 0.1cm,clip]{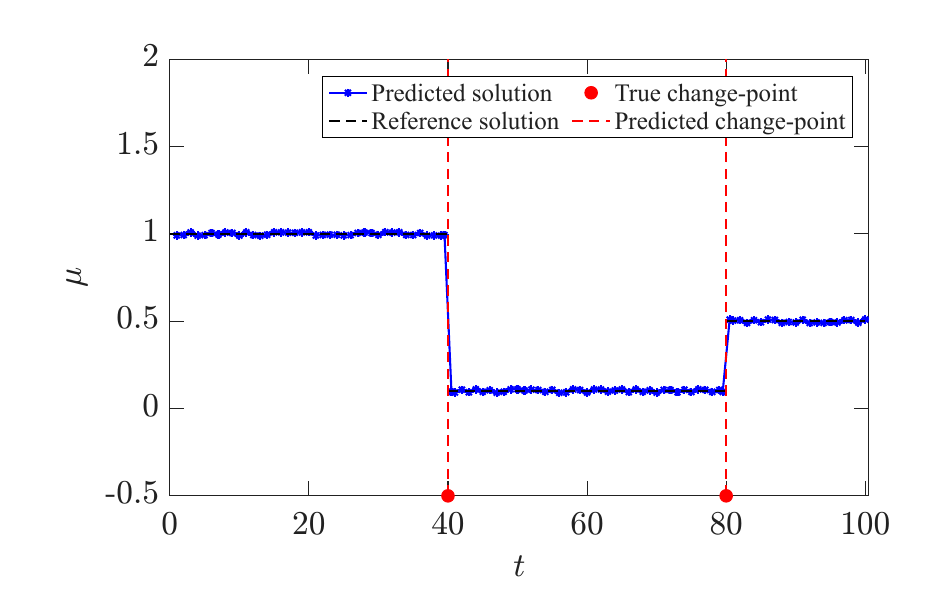}
    \end{minipage}
    \caption{Van der Pol oscillator model. Top row: the output of $(M(t), N(t))$ over $[0,100]$ (left) and the corresponding sample path (right). Bottom row: Stage~I results show candidate change-point intervals determined from the physical loss in the overlapping domain (left), and Stage~II results for parameter estimation and change-point localization (right).}
    \label{van der pol}
\end{figure}

In this model, the time interval is set to $[0,100]$ with a window length of 2 and a step size of 1. In Stage~I, each window is independently trained for 30,000 iterations to estimate the local constant parameter $\mu$, and candidate transition intervals are identified based on the physical residual. Specifically, the Stage~I candidate change-point intervals are $[39,41]$ and $[79,81]$, fully covering the true change points. In Stage~II, these candidate intervals are refined to jointly optimize the transition locations and the piecewise parameters, demonstrating that RAA-PINNs can robustly recover parameter variations and accurately identify change points in nonlinear dynamical systems.

Within each overlapping domain, all available observation data are used to estimate parameters independently. Elevated physical loss in domains containing change points is averaged over the last 100 training iterations to determine the Stage~I candidate intervals. The Stage~II optimization then produces the final parameter estimates and refined change-point locations. Detailed prediction errors, statistical inference results, and mean square errors for parameter recovery and change-point detection are reported in Tables~\ref{tab:combined_parameter_estimation} and~\ref{tab:combined_change_points}.

This case illustrates that RAA-PINNs can recover parameters in self-excited nonlinear oscillatory systems, demonstrating the framework's robustness in identifying transitions in oscillatory dynamics with pronounced nonlinearity.

\subsection{Lotka-Volterra Model}

The Lotka-Volterra model, commonly known as the predator-prey model, describes species interactions in an ecological system. Its key assumptions include closed population dynamics, a closed system with no migration, and a linear functional response in predator-prey interactions. The dynamics of a prey species $S$ and a predator species $W$ are governed by the following two-dimensional coupled differential equations:
\begin{equation}
\begin{cases}
\dfrac{{\rm d}S}{{\rm d}t} = S(\alpha - \beta W), \\[1mm]
\dfrac{{\rm d}W}{{\rm d}t} = - W(\gamma - \delta S),
\end{cases}
\end{equation}
where $\boldsymbol{x}(t) = (S(t), W(t))^\top$ and $\boldsymbol{\theta} = (\alpha, \beta, \gamma, \delta)$ in the traditional nonmixture ODE model \eqref{eq:3.5}. In this work, we employ the RAA-PINNs to jointly estimate the time-varying parameters and detect change points, illustrating the application to NDS-CPD.

We consider a Lotka-Volterra system with multiple time-varying parameters and multiple change points. A discrete state variable $r(t)$ with state space $\mathbb{I}$ is introduced to characterize the temporal variation of system parameters. Let $\mathbb{I} = \{1,2,3\}$ and $T=100$, and define
\begin{equation}
r(t) =
\begin{cases}
1, & 0 \le t < 20, \: 80 \le t \le 100, \\
2, & 20 \le t < 40, \: 60 \le t < 80, \\
3, & 40 \le t < 60.
\end{cases}
\end{equation}
Thus, the temporal evolution of $r(t)$ is $1 \to 2 \to 3 \to 2 \to 1$, and the corresponding piecewise parameters are
\begin{equation}
\boldsymbol{\theta}(t) =
\begin{cases}
(2,1,2,1), & r(t) = 1,\\
(4,2,3,4), & r(t) = 2,\\
(3,4,1,2), & r(t) = 3.
\end{cases}
\label{LV true value}
\end{equation}

The system output and sample paths are shown in Figure~\ref{fig:lotka_volterra}.

\begin{figure}[p]
    \centering
   \begin{minipage}[b]{1\linewidth}
        \centering
        \includegraphics[width=\linewidth]{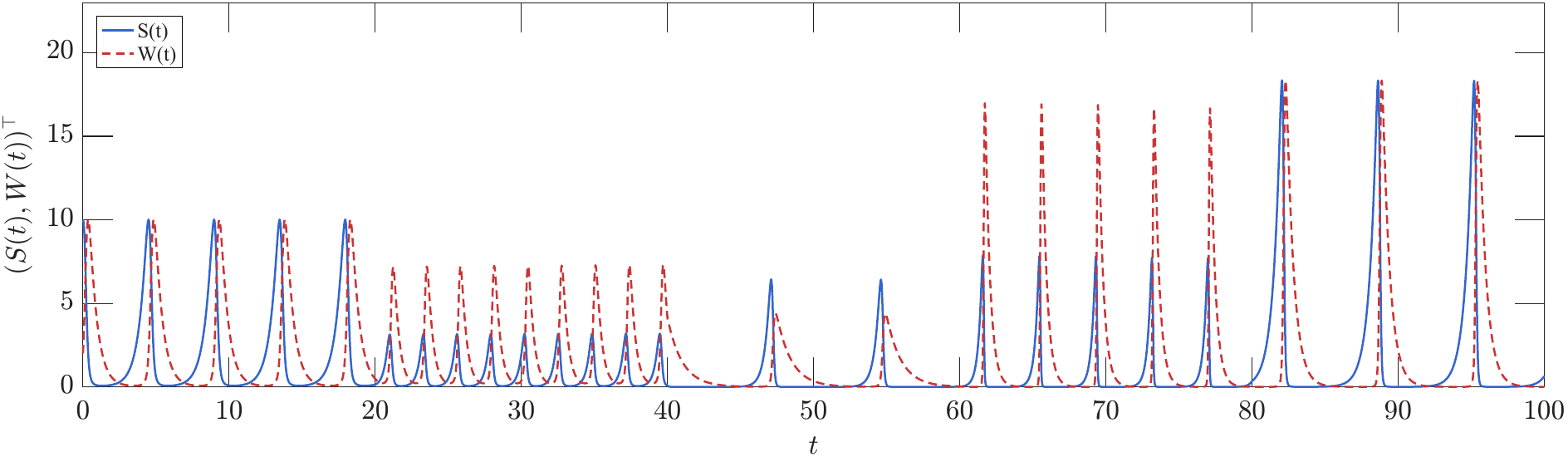}
    \end{minipage}

    \vspace{0.1em}
    
    \begin{minipage}[b]{0.495\linewidth}
        \centering
        \includegraphics[width=\linewidth,trim=0.8cm 0.1cm 0.8cm 0.1cm,clip]{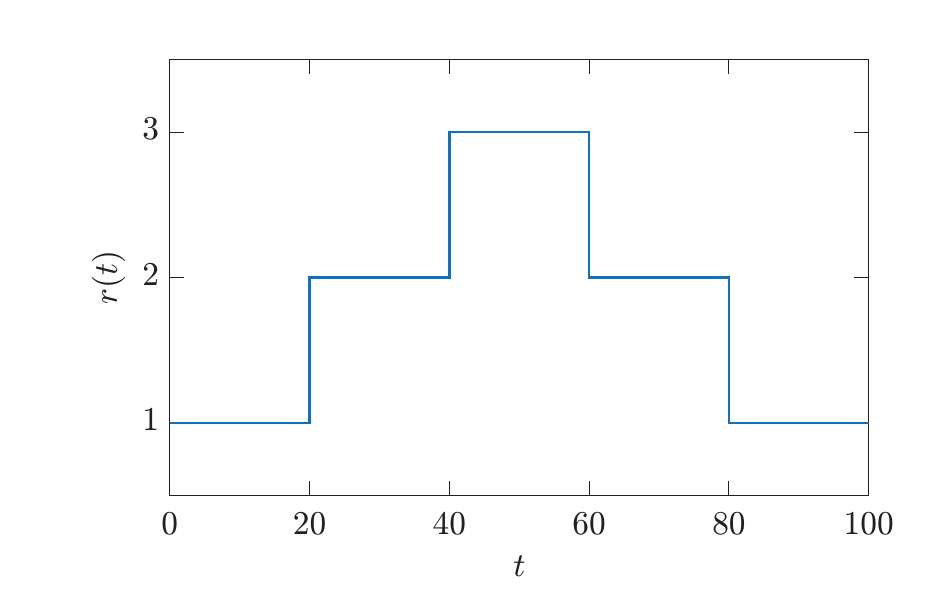}
    \end{minipage}
    \begin{minipage}[b]{0.495\linewidth}
        \centering
        \includegraphics[width=\linewidth,trim=0.8cm 0.1cm 0.8cm 0.1cm,clip]{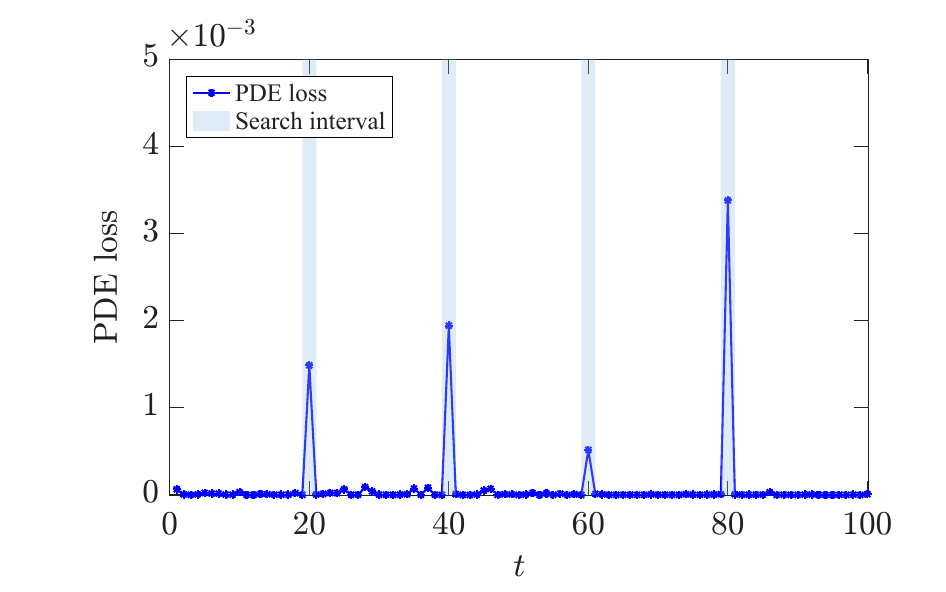}
    \end{minipage}

    \vspace{0.1em}

    \begin{minipage}[b]{0.495\linewidth}
        \centering
        \includegraphics[width=\linewidth,trim=0.8cm 0.1cm 0.8cm 0.1cm,clip]{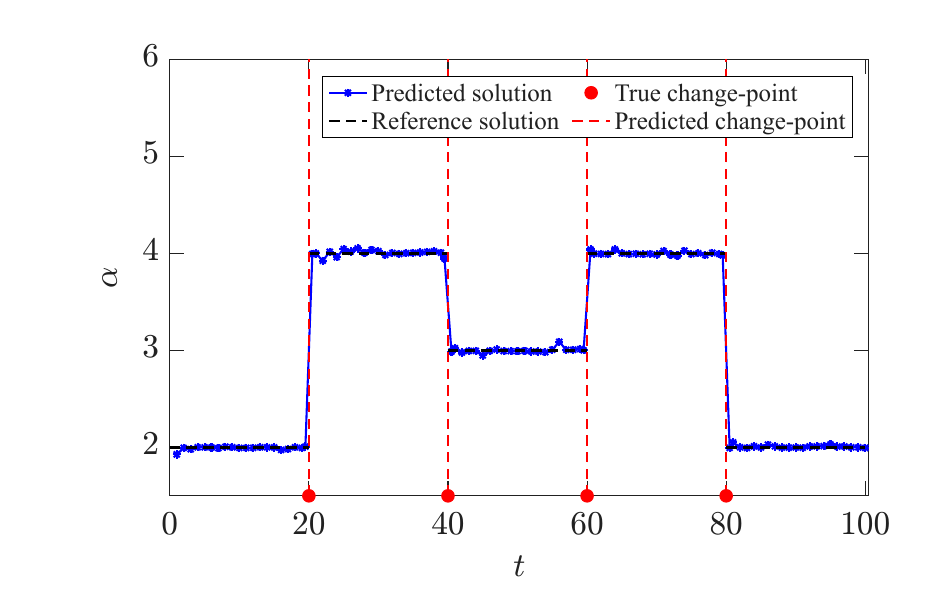}
    \end{minipage}
    \begin{minipage}[b]{0.495\linewidth}
        \centering
        \includegraphics[width=\linewidth,trim=0.8cm 0.1cm 0.8cm 0.1cm,clip]{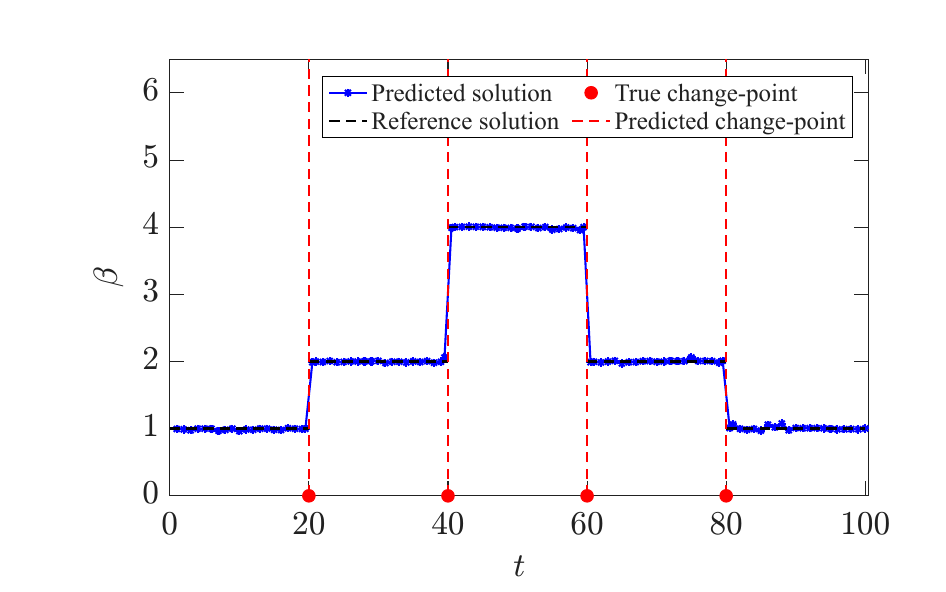}
    \end{minipage}

    \vspace{0.1em}

     \begin{minipage}[b]{0.495\linewidth}
        \centering
        \includegraphics[width=\linewidth,trim=0.8cm 0.1cm 0.8cm 0.1cm,clip]{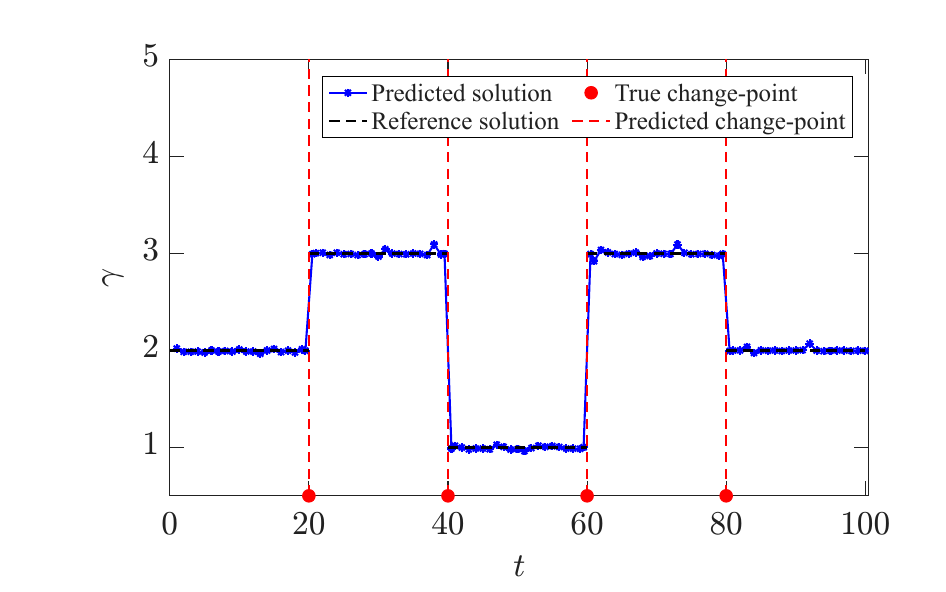}
    \end{minipage}
    \begin{minipage}[b]{0.495\linewidth}
        \centering
        \includegraphics[width=\linewidth,trim=0.8cm 0.1cm 0.8cm 0.1cm,clip]{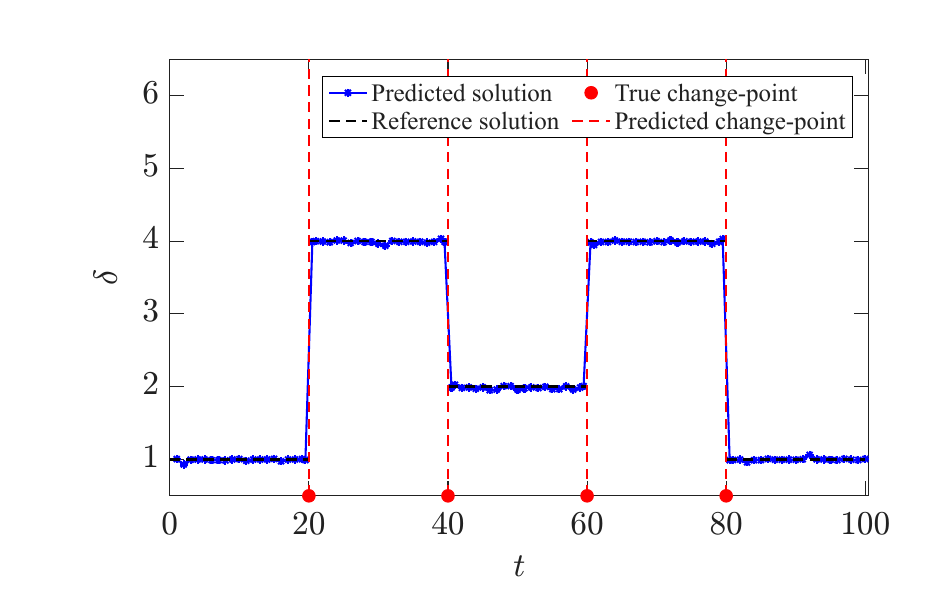}
    \end{minipage}

    \caption{Lotka-Volterra model. First row: trajectories of $(S(t), W(t))$ over $[0,100]$. Second row: sample path (left) and Stage~I candidate change-point intervals determined from the physical loss in the overlapping domain (right). Third and fourth rows: Stage~II results for parameter estimation and change-point localization.}
    \label{fig:lotka_volterra}
\end{figure}

\begin{table}[t]
\centering
\caption{Parameter estimation for the Lotka-Volterra model with time-varying parameters.}
\label{LV:parameter_estimation}
\small
\renewcommand{\arraystretch}{0.8}
\setlength{\tabcolsep}{6pt}

\begin{tabular}{w{c}{2cm} w{c}{2cm} w{c}{2cm} w{c}{3cm} w{c}{3cm}}

\toprule
\makecell[c]{Time} 
& \makecell[c]{Equation\\[-3pt] Coefficient} 
& \makecell[c]{True\\[-3pt] Value} 
& \makecell[c]{Parameter\\[-3pt] Estimation} 
& \makecell[c]{Squared Error\\[-3pt] of Parameter} \\
\midrule

\multirow{4}{*}{\makecell[c]{$[0,20]$}}
& $\alpha$ & 2  & 2.0238 & $5.664\times 10^{-4}$ \\
& $\beta$ & 1  & 0.9863 & $1.877\times 10^{-4}$ \\
& $\gamma$ & 2  & 1.9879 & $1.464\times 10^{-4}$ \\
& $\delta$ & 1  & 0.9878 & $1.488\times 10^{-4}$ \\

\midrule

\multirow{4}{*}{\makecell[c]{$[20,40]$}}
& $\alpha$ & 4  & 3.9912 & $7.774\times 10^{-5}$ \\
& $\beta$ & 2  & 1.9865 & $1.823\times 10^{-4}$ \\
& $\gamma$ & 3  & 3.0365 & $1.332\times 10^{-3}$ \\
& $\delta$ & 4  & 3.9876 & $1.538\times 10^{-4}$ \\

\midrule

\multirow{4}{*}{\makecell[c]{$[40,60]$}}
& $\alpha$ & 3  & 2.9878 & $1.488\times 10^{-4}$ \\
& $\beta$ & 4  & 3.9912 & $7.744\times 10^{-5}$ \\
& $\gamma$ & 1  & 0.9873 & $1.613\times 10^{-4}$ \\
& $\delta$ & 2  & 1.9684 & $9.985\times 10^{-4}$ \\

\midrule

\multirow{4}{*}{\makecell[c]{$[60,80]$}}
& $\alpha$ & 4  & 3.9865 & $1.823\times 10^{-4}$ \\
& $\beta$ & 2  & 1.9932 & $4.624\times 10^{-5}$ \\
& $\gamma$ & 3  & 2.9834 & $2.756\times 10^{-4}$ \\
& $\delta$ & 4  & 4.0534 & $2.852\times 10^{-3}$ \\

\midrule

\multirow{4}{*}{\makecell[c]{$[80,100]$}}
& $\alpha$ & 2  & 1.9845 & $2.403\times 10^{-4}$ \\
& $\beta$ & 1  & 0.9941 & $3.481\times 10^{-5}$ \\
& $\gamma$ & 2  & 2.0523 & $2.735\times 10^{-3}$ \\
& $\delta$ & 1  & 1.0376 & $1.414\times 10^{-3}$ \\

\bottomrule
\end{tabular}
\end{table}

In the Lotka-Volterra model, the time interval is $[0,100]$ with a window length of 2 and a step size of 1. This experiment validates the effectiveness and scalability of RAA-PINNs for multi-parameter coupled nonlinear systems. In each overlapping domain, all observation data are used to estimate the parameters $(\alpha, \beta, \gamma, \delta)$ independently. Elevated physical loss in domains containing change points is averaged over the last 100 training iterations to determine the Stage~I candidate intervals, which are $[19,21]$, $[39,41]$, $[59,61]$, and $[79,81]$.
In Stage~II, the candidate intervals are refined to jointly optimize the parameter values and change-point locations. The time-varying parameter estimates obtained using overlapping-domain RAA-PINNs and the corresponding change-point detection results are shown in Figure~\ref{fig:lotka_volterra}. Prediction errors, statistical inference results, and mean square errors for parameter recovery and change-point detection are reported in Tables~\ref{LV:parameter_estimation} and~\ref{tab:combined_change_points}.

Overall, the results of overlapping-domain RAA-PINNs agree well with the reference solution, successfully capturing all four change points. This demonstrates that RAA-PINNs can robustly handle multi-parameter coupled nonlinear systems, recovering simultaneous changes in interacting species.

\subsection{Lorenz System}

The Lorenz system is a canonical model in nonlinear dynamics, widely used to study chaotic behavior and complex dynamical phenomena. It was originally derived from a truncated Fourier expansion of the Navier-Stokes and heat equations describing fluid convection, and has played a fundamental role in revealing sensitive dependence on initial conditions and long-term unpredictability in deterministic systems. In this work, we employ the RAA-PINNs to jointly estimate time-varying parameters and detect change points, illustrating its application to NDS-CPD.

\begin{figure}[p]
    \centering
  \begin{minipage}[b]{0.495\linewidth}
        \centering
        \includegraphics[width=\linewidth]{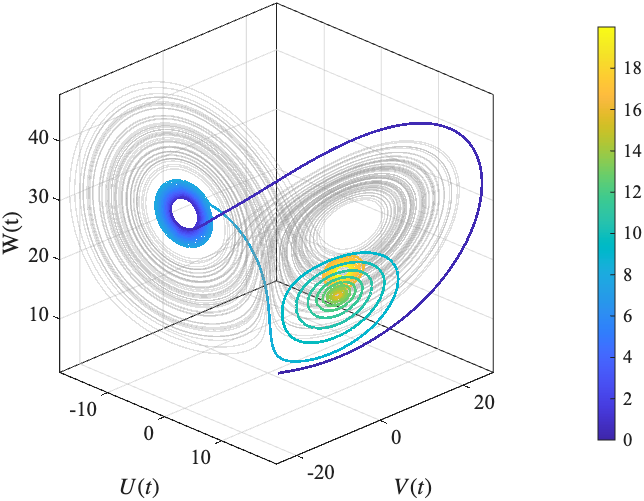}
    \end{minipage}
    \begin{minipage}[b]{0.495\linewidth}
        \centering
        \includegraphics[width=\linewidth,trim=0.8cm 0.1cm 0.8cm 0.1cm,clip]{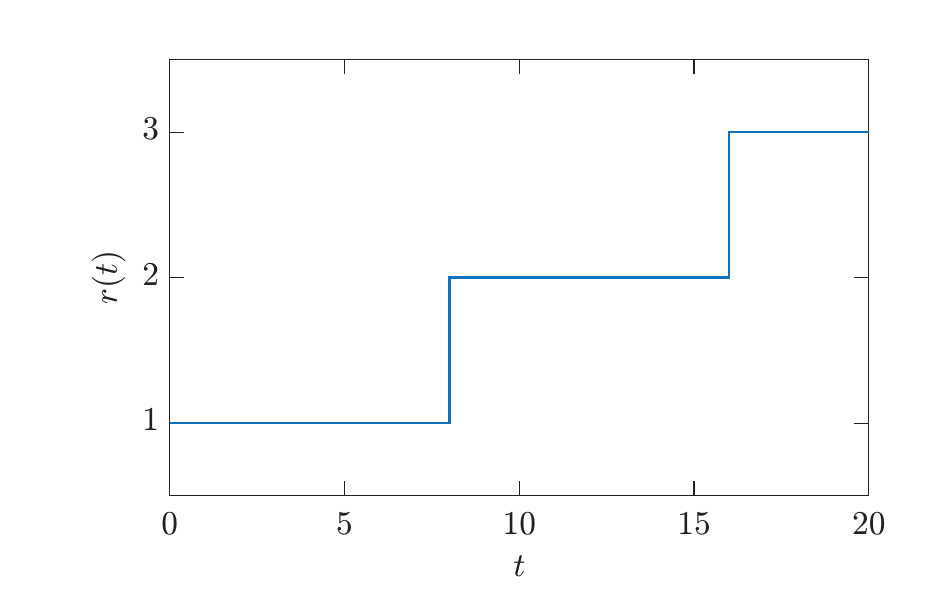}
    \end{minipage}

    \vspace{0.1em}
    
    \begin{minipage}[b]{0.495\linewidth}
        \centering
        \includegraphics[width=\linewidth,trim=0.8cm 0.1cm 0.8cm 0.1cm,clip]{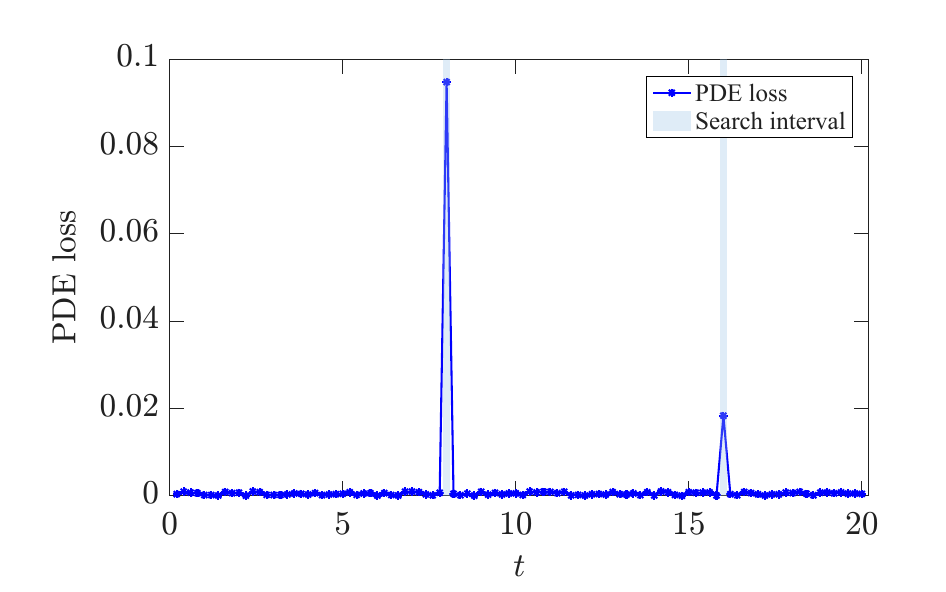}
    \end{minipage}
    \begin{minipage}[b]{0.495\linewidth}
        \centering
        \includegraphics[width=\linewidth,trim=0.8cm 0.1cm 0.8cm 0.1cm,clip]{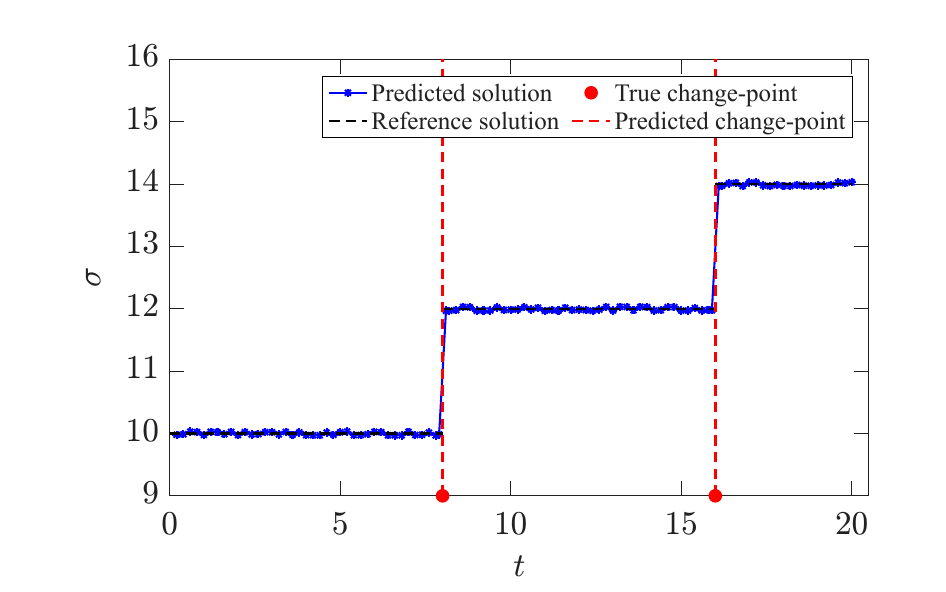}        
    \end{minipage}

    \vspace{0.1em}
    
    \begin{minipage}[b]{0.495\linewidth}
        \centering
        \includegraphics[width=\linewidth,trim=0.8cm 0.1cm 0.8cm 0.1cm,clip]{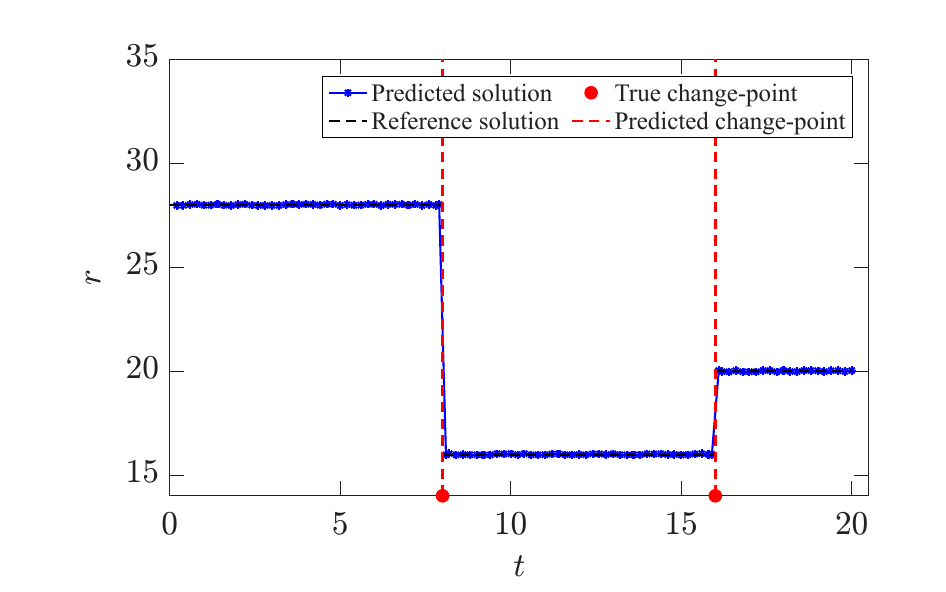}
    \end{minipage}
     \begin{minipage}[b]{0.495\linewidth}
        \centering
        \includegraphics[width=\linewidth,trim=0.8cm 0.1cm 0.8cm 0.1cm,clip]{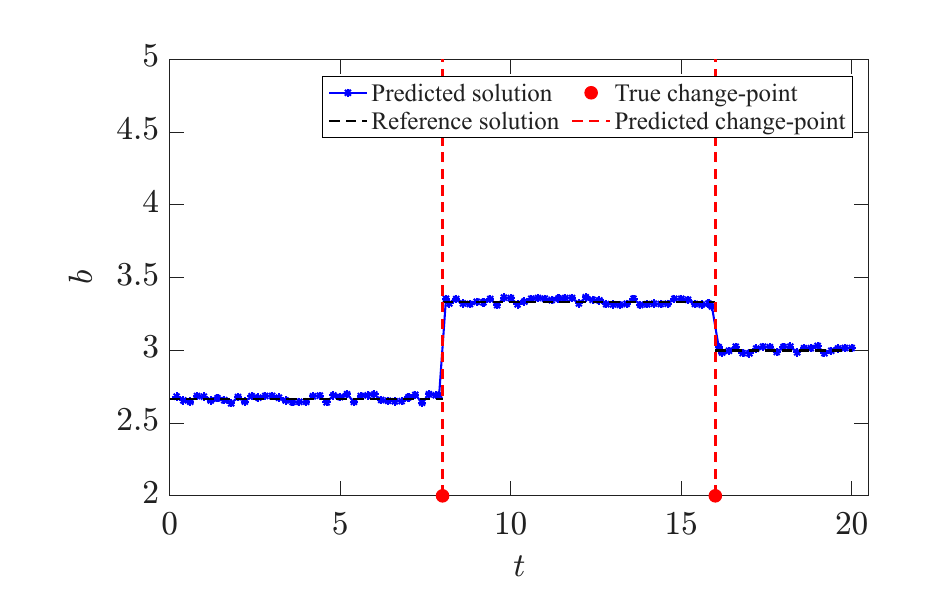}
    \end{minipage}

    \caption{Lorenz model. Top row: trajectories of $(U(t), V(t), W(t))$ over $[0,20]$ (left) and the corresponding sample path (right). Second row: Stage~I candidate change-point intervals determined from the physical loss in the overlapping domain (left), and Stage~II results for parameter estimation and change-point localization (right). Other rows: Stage~II parameter estimation results.}
    \label{fig:lorenz}
\end{figure}

The system is described by the following three-dimensional coupled differential equations for the convective intensity $U$, the horizontal temperature gradient $V$ and the vertical temperature $W$:
\begin{equation}
\begin{cases}
\dfrac{{\rm d}U}{{\rm d}t} = \sigma (V - U), \\[1mm]
\dfrac{{\rm d}V}{{\rm d}t} = r U - V - U W, \\[1mm]
\dfrac{{\rm d}W}{{\rm d}t} = U V - b W,
\end{cases}
\end{equation}
where $\boldsymbol{x}(t) = (U(t), V(t), W(t))^\top$ and $\boldsymbol{\theta} = (\sigma, r, b)$ in the traditional nonmixture ODE model \eqref{eq:3.5}.

We consider a time-varying Lorenz system with multiple change points. A discrete state variable $r(t)$ with state space $\mathbb{I}$ is introduced to characterize the temporal evolution of system parameters. Let $\mathbb{I} = \{1,2,3\}$ and $T = 20$, and define
\begin{equation}
r(t) =
\begin{cases}
1, & 0 \le t < 8,\\
2, & 8 \le t < 16,\\
3, & 16 \le t \le 20.
\end{cases}
\end{equation}
Accordingly, the temporal evolution of $r(t)$ is $1 \to 2 \to 3$, and the piecewise parameter vector is
\begin{equation}
\boldsymbol{\theta}(t) =
\begin{cases}
[10, 28, 8/3], & r(t) = 1,\\
[12, 16, 10/3], & r(t) = 2,\\
[14, 20, 3], & r(t) = 3.
\end{cases}
\label{Lorentz true value}
\end{equation}
The system output and sample paths are shown in Figure~\ref{fig:lorenz}.

Given the sensitive dependence on initial conditions inherent in chaotic systems, trajectories diverge rapidly over longer time spans. Consequently, the simulation interval is restricted to $[0,20]$ to maintain numerical stability and allow RAA-PINNs to accurately recover the time-varying parameters and detect change points. The time interval is divided into 100 equal-length subintervals, with a window length of 0.2 and a step size of 0.1. In Stage~I, each window is independently trained for 30,000 iterations to estimate local constant parameters, and candidate change-point intervals are determined from elevated physical residuals. The Stage~I candidate intervals are $[7.9, 8.1]$ and $[15.9, 16.1]$, covering the true change points. In Stage~II, these intervals are refined to jointly optimize parameter values and change-point locations.

\begin{table}[t]
\centering
\caption{Parameter estimation and change-point detection of Lorenz system with time-varying parameters.}
\label{lorenz:parameter_estimation}
\small
\renewcommand{\arraystretch}{1}
\setlength{\tabcolsep}{6pt}

\begin{tabular}{w{c}{2cm} w{c}{2cm} w{c}{2cm} w{c}{3cm} w{c}{3cm}}

\toprule
\makecell[c]{Time} 
& \makecell[c]{Equation\\[-3pt] Coefficient} 
& \makecell[c]{True\\[-3pt] Value} 
& \makecell[c]{Parameter\\[-3pt] Estimation} 
& \makecell[c]{Squared Error\\[-3pt] of Parameter} \\
\midrule

\multirow{3}{*}{\makecell[c]{$[0,8]$}}
& $\sigma$ & 10  & 10.0238 & $5.664\times 10^{-4}$ \\
& $r$      & 28  & 27.9863 & $1.877\times 10^{-4}$ \\
& $b$      & 8/3  & 2.6877 & $4.424\times 10^{-4}$ \\

\midrule

\multirow{3}{*}{\makecell[c]{$[8,16]$}}
& $\sigma$ & 12  & 11.9912 & $7.774\times 10^{-5}$ \\
& $r$      & 16  & 15.9865 & $1.823\times 10^{-4}$ \\
& $b$      & 10/3 & 3.3787 & $2.058\times 10^{-3}$ \\

\midrule

\multirow{3}{*}{\makecell[c]{$[16,20]$}}
& $\sigma$ & 14  & 13.9878 & $1.488\times 10^{-4}$ \\
& $r$      & 20  & 19.9912 & $7.724\times 10^{-5}$ \\
& $b$      & 3   & 2.9873 & $1.613\times 10^{-4}$ \\

\bottomrule
\end{tabular}

\vspace{1.2em}

\caption{Estimates of change points for different numerical examples with time-varying parameters.}
\label{tab:combined_change_points}

\small
\renewcommand{\arraystretch}{1.0}
\setlength{\tabcolsep}{5pt}

\begin{tabular}{
w{c}{2.5cm}
w{c}{2cm}
w{c}{2cm}
w{c}{2.3cm}
w{c}{2.5cm}
w{c}{2.5cm}
}

\toprule
\makecell[c]{Numerical\\[-3pt] Example}
& \makecell[c]{Change point\\[-3pt] interval}
& \makecell[c]{True\\[-3pt] Value}
& \makecell[c]{Change point\\[-3pt] Estimation}
& \makecell[c]{Squared\\[-3pt] Error}
& \makecell[c]{MSE of\\[-3pt] $\mathbf{x}(t)$} \\
\midrule

% ===== Malthus =====
\multirow{1}{*}{\makecell[c]{Malthus}}
& $[39,41]$ & 40 & 40.0093 & $8.649\times10^{-5}$ & $3.854\times10^{-4}$ \\

\midrule

% ===== Logistic =====
\multirow{1}{*}{\makecell[c]{Logistic}}
& $[59,61]$ & 60 & 60.0037 & $1.369\times10^{-5}$ & $7.297\times10^{-4}$ \\

\midrule

% ===== Van der Pol =====
\multirow{2}{*}{\makecell[c]{Van der Pol}}
& $[39,41]$ & 40 & 40.0013 & $1.690\times10^{-6}$ & $4.372\times10^{-4}$ \\
& $[79,81]$ & 80 & 79.9987 & $1.690\times10^{-6}$ & $8.633\times10^{-5}$ \\

\midrule

% ===== Lotka-Volterra =====
\multirow{4}{*}{\makecell[c]{Lotka-Volterra}}
& $[19,21]$ & 20 & 20.0263 & $6.917\times10^{-4}$ & $2.538\times10^{-4}$ \\
& $[39,41]$ & 40 & 40.0139 & $1.932\times10^{-4}$ & $7.348\times10^{-4}$ \\
& $[59,61]$ & 60 & 60.0237 & $5.617\times10^{-4}$ & $3.283\times10^{-4}$ \\
& $[79,81]$ & 80 & 79.9897 & $1.061\times10^{-4}$ & $8.240\times10^{-5}$ \\

\midrule

% ===== Lorenz =====
\multirow{2}{*}{\makecell[c]{Lorenz}}
& $[7.9,8.1]$   & 8  & 8.0237  & $5.617\times10^{-4}$ & $3.184\times10^{-4}$ \\
& $[15.9,16.1]$ & 16 & 16.0128 & $1.638\times10^{-4}$ & $7.216\times10^{-4}$ \\

\bottomrule
\end{tabular}
\end{table}

\section{Parallel Overlapping Domain Strategy}
\label{sec:5}

Within each overlapping domain, all available observation data are used to estimate $\sigma$, $r$, and $b$ independently. The final time-varying parameter estimates and change-point locations obtained via RAA-PINNs are shown in Figure~\ref{fig:lorenz}. Prediction errors, statistical inference results, and mean square errors for parameter recovery and change-point detection are reported in Table~\ref{lorenz:parameter_estimation} and Table~\ref{tab:combined_change_points}. This demonstrates that RAA-PINNs can robustly recover parameters and accurately detect change points even in highly sensitive chaotic systems.

Overall, this example demonstrates that the framework can robustly recover parameters and accurately identify transitions even in highly sensitive chaotic systems, highlighting its applicability to complex chaotic dynamics.

For change-point detection in nonlinear dynamical systems with regime switching, the proposed RAA-PINNs framework combines residual-loss anomaly analysis with overlapping-domain training to achieve stable coarse localization and accurate local parameter inversion. However, the computational cost of overlapping-domain decomposition can become a major bottleneck in practical implementations, especially when the temporal resolution is high or the observation horizon is long. This issue is most pronounced in Stage~I, where independent PINNs must be trained on many overlapping subintervals in order to obtain reliable residual-loss statistics and identify candidate change-point regions. Since the number of subintervals grows approximately linearly with the temporal resolution and the coverage of the time domain, a serial implementation leads to an almost linear increase in total training time, which substantially limits the scalability of the method.

A key observation is that, in Stage~I, there are no interface coupling constraints between overlapping subintervals, and the corresponding training tasks are fully independent. Therefore, this stage has a natural parallel structure and can be directly mapped to multi-core CPUs, multi-GPU platforms, or distributed computing environments, leading to a substantial reduction in wall-clock time.

Let the temporal domain be $I=[0,T]$. In Stage~I, we construct a coarse partition
\begin{equation}
0=t_0 < t_1 < \cdots < t_K=T,
\end{equation}
and define $K$ overlapping subintervals with overlap width $\delta>0$ as
\begin{equation}
I_k = [t_{k-1}-\delta,\ t_k+\delta]\cap[0,T], \quad k=1,\ldots,K.
\end{equation}
On each $I_k$, we train an independent PINN $\boldsymbol{x}_{\phi_k}:I_k\rightarrow\mathbb{R}^n$ together with a local constant parameter $\boldsymbol{\theta}_k\in\Theta$ by minimizing the subinterval loss
\begin{equation}
\label{eq:subinterval_loss_parallel}
J_k(\phi_k,\boldsymbol{\theta}_k)
=
\sum_{t_j\in S_d^{(k)}} w_{j,k}^d \bigl\|\mathcal{R}_{\mathrm{data}}[\boldsymbol{x}_{\phi_k},\boldsymbol{\theta}_k](t_j)\bigr\|_2^2
+
\lambda \sum_{s_i\in S_{\mathrm{int}}^{(k)}} w_{i,k}^r \bigl\|\mathcal{R}_{\mathrm{int}}[\boldsymbol{x}_{\phi_k},\boldsymbol{\theta}_k](s_i)\bigr\|_p^p,
\end{equation}
where $S_d^{(k)}\subset I_k$ denotes the measurement set, $S_{\mathrm{int}}^{(k)}\subset I_k$ denotes the interior collocation set, and $\lambda>0$ balances the data mismatch and physics residual.

Crucially, the optimization problems $\{J_k\}_{k=1}^K$ are mutually independent. The absence of cross-subinterval coupling enables direct parallelization at the subinterval level, with each pair $(\phi_k,\boldsymbol{\theta}_k)$ optimized on a separate computational unit.

Let $t_{\mathrm{sub}}$ denote the average training time for one PINN on a single Stage~I subinterval, and let $K$ be the number of overlapping subintervals. Then the total runtime of serial Stage~I, denoted by $T_s^{(\mathrm{I})}$, can be approximated as
\begin{equation}
T_s^{(\mathrm{I})} = K\,t_{\mathrm{sub}}+c_1,
\end{equation}
where $c_1$ collects additional overhead that is typically much smaller than $K\,t_{\mathrm{sub}}$ and therefore does not affect the overall scaling behavior. Given load-balancing effects and communication overhead, the runtime $T_p^{(\mathrm{I})}$ with $P$ parallel workers can be approximated by
\begin{equation}
T_p^{(\mathrm{I})} = \left\lceil \frac{K}{P}\right\rceil t_{\mathrm{sub}}+c_2,
\end{equation}
where $c_2$ denotes parallelization overhead and is typically much smaller than $\left\lceil K / P \right\rceil\,t_{\mathrm{sub}}$. Hence, the theoretical speedup is given by
\begin{equation}
\mathcal{S}^{(\mathrm{I})}
=
T_s^{(\mathrm{I})}/T_p^{(\mathrm{I})}.
\end{equation}
When $K\gg P$, we obtain $\mathcal{S}^{(\mathrm{I})}\approx P$, indicating near-linear scaling.

Moreover, Stage~II is carried out only on a narrow candidate interval and typically requires training one or only a few local PINNs. As a result, the total runtime of the two-stage RAA-PINNs framework is dominated by Stage~I. The total runtime under serial and parallel implementations can therefore be written as
\begin{equation}
T_s
=
T_s^{(\mathrm{I})} + T^{(\mathrm{II})},
\quad
T_p
=
T_p^{(\mathrm{I})} + T^{(\mathrm{II})}.
\end{equation}

In our experiments, each GPU with 24\,GB of memory can execute 16 tasks concurrently. Using 8 GPUs yields a total of $8\times 16=128$ concurrent tasks. We evaluate the computational benefit of this parallel strategy on three representative nonlinear dynamical systems with discontinuous parameters: the Van der Pol oscillator, the Lotka-Volterra model, and the Lorenz system. These examples cover oscillatory, coupled population, and chaotic dynamics, respectively, and therefore provide a representative testbed for assessing the scalability of RAA-PINNs. The results are presented in Table~\ref{tab:time_comparison_template}.

\begin{table}[t]
\centering
\caption{Runtime comparison of the proposed framework under serial and parallel implementations. Here $P$ denotes the number of parallel workers, $\mathcal{S}=T_s/T_p$ is the speedup, and $\mathcal{E}=\mathcal{S}/P$ is the parallel efficiency.}
\label{tab:time_comparison_template}
\small
\renewcommand{\arraystretch}{1.0}
\setlength{\tabcolsep}{6pt}

\begin{tabular}{w{c}{1.8cm} w{c}{1.6cm} w{c}{1.6cm} w{c}{1.6cm} w{c}{1.6cm} w{c}{1.6cm} w{c}{1.6cm} w{c}{1.6cm}}

\toprule
\makecell[c]{Example} 
& \makecell[c]{Stage} 
& \makecell[c]{$K$} 
& \makecell[c]{$P$} 
& \makecell[c]{$T_s(s)$} 
& \makecell[c]{$T_p(s)$} 
& \makecell[c]{$\mathcal{S}$}
& \makecell[c]{$\mathcal{E}$} \\
\midrule

\multirow{3}{*}{\makecell[c]{Van der Pol}}
& $\text{I}$  & 100 & 128 & 53625.7 & 425.1 & 126.152 & 0.986 \\
& $\text{II}$ & 2   & 128 & 167.9   & 92.6  & 1.811   & 0.014 \\
& Total       &     & 128 & 53793.6 & 517.7 & 103.915 & 0.812 \\

\midrule

\multirow{3}{*}{\makecell[c]{Lotka-Volterra}}
& $\text{I}$  & 100 & 128 & 55554.8 & 594.5 & 93.465 & 0.730 \\
& $\text{II}$ & 4   & 128 & 401.2   & 108.2 & 3.771  & 0.029 \\
& Total       &     & 128 & 55956.0 & 702.7 & 79.525 & 0.621 \\

\midrule

\multirow{3}{*}{\makecell[c]{Lorenz}}
& $\text{I}$  & 100 & 128 & 72812.3 & 718.0 & 101.339 & 0.792 \\
& $\text{II}$ & 2   & 128 & 252.6   & 119.5 & 2.141  & 0.016 \\
& Total       &     & 128 & 73064.9 & 837.5 & 87.288 & 0.682 \\

\bottomrule
\end{tabular}
\end{table}

Notably, the parallel strategy changes only the scheduling of independent subinterval optimization tasks. It does not alter the mathematical form of the subinterval losses or the Stage~II refinement procedure. Therefore, the parallel and serial implementations produce consistent change-point localization and parameter reconstruction results, while the wall-clock time is substantially reduced. This confirms that the proposed parallel overlapping-domain strategy improves the computational scalability of RAA-PINNs without sacrificing inversion accuracy.

\section{Comparative Experiment}\label{sec6}

In this section, we compare the proposed method with both traditional statistical baselines and existing neural-network-based approaches. The goal is to evaluate its performance on two core tasks in nonlinear dynamical systems with regime switching: parameter estimation and change-point localization. The comparative study is carried out on representative systems to highlight the differences in modeling strategy, reconstruction accuracy, and localization precision.

\subsection{Traditional Statistical Method}
For ordinary differential equation systems with discontinuous parameters, we adopt a two-stage statistical pipeline as the baseline for comparison. Specifically, the baseline combines adaptive gradient matching (AGM)\cite{dondelinger2013ode} for ODE parameter inference with Pruned Exact Linear Time (PELT)\cite{killick2012optimal} for penalized change-point detection, hereafter referred to as AGM-PELT. Under this baseline, parameter estimation is carried out within a probabilistic inference framework using AGM. In this approach, Gaussian processes are used to model the latent system trajectory, while the governing ODE is incorporated as a soft constraint rather than being enforced exactly. The resulting joint inference problem can be formulated as
\begin{equation}
    (\hat{\boldsymbol{x}}, \hat{\boldsymbol{\theta}})
=
\arg\max_{\boldsymbol{x},\boldsymbol{\theta}}
\;
p(y \mid \boldsymbol{x})\,
p\!\left(\dot{\boldsymbol{x}} \mid f(\boldsymbol{x};\boldsymbol{\theta})\right),
\end{equation}
which yields a time-dependent parameter sequence $\{\boldsymbol{\theta}_t\}_{t=1}^{T}$. Within this statistical formulation, the parameters are treated as stochastic quantities, and the governing equations are incorporated through probabilistic consistency between the inferred trajectory and the ODE dynamics.

\begin{figure}[t]
    \centering
    \begin{minipage}[b]{0.495\linewidth}
        \centering
        \includegraphics[width=\linewidth,trim=0.8cm 0.1cm 0.8cm 0.1cm,clip]{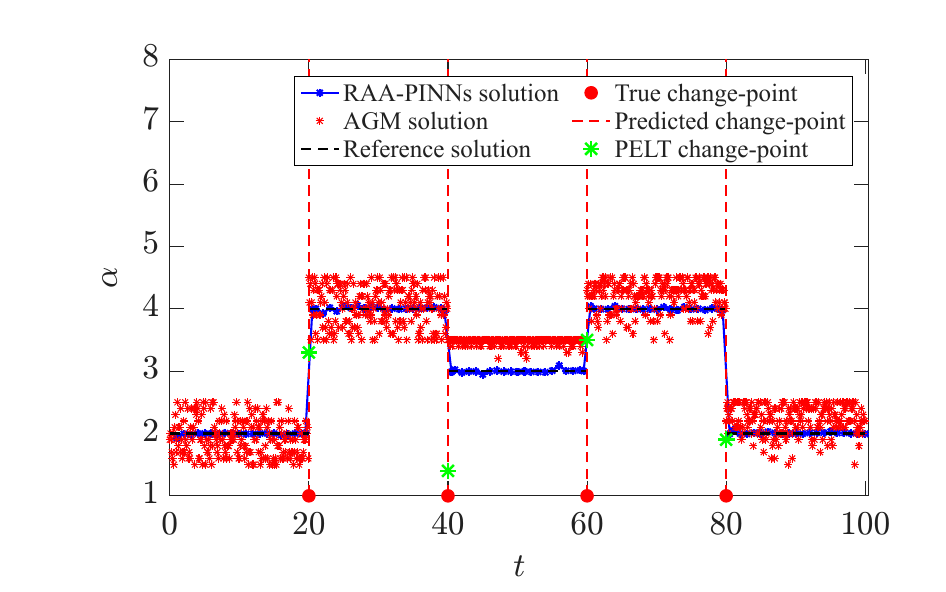}
    \end{minipage}%
    \begin{minipage}[b]{0.495\linewidth}
        \centering
        \includegraphics[width=\linewidth,trim=0.8cm 0.1cm 0.8cm 0.1cm,clip]{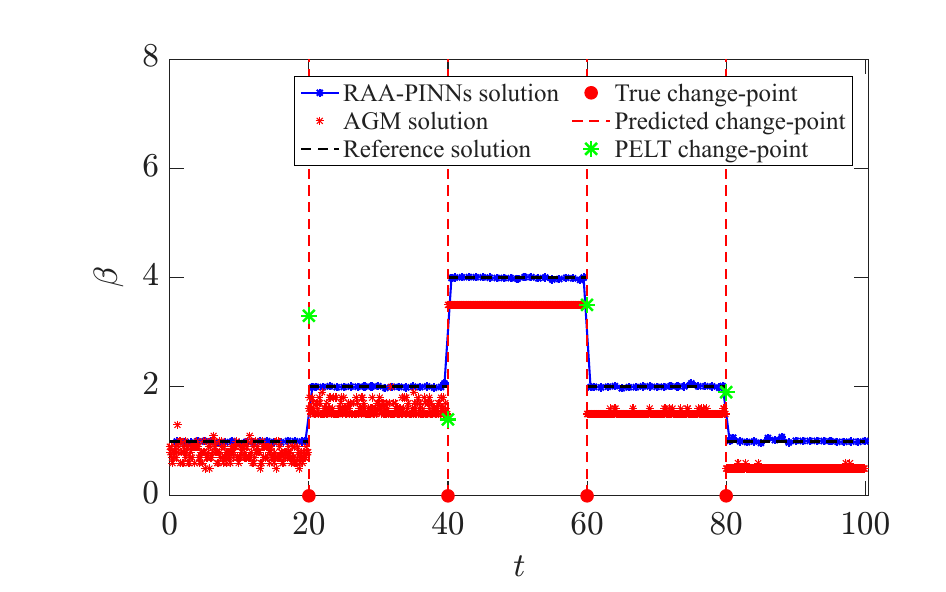}
    \end{minipage}

    \vspace{0.1em}

    \begin{minipage}[b]{0.495\linewidth}
        \centering
        \includegraphics[width=\linewidth,trim=0.8cm 0.1cm 0.8cm 0.1cm,clip]{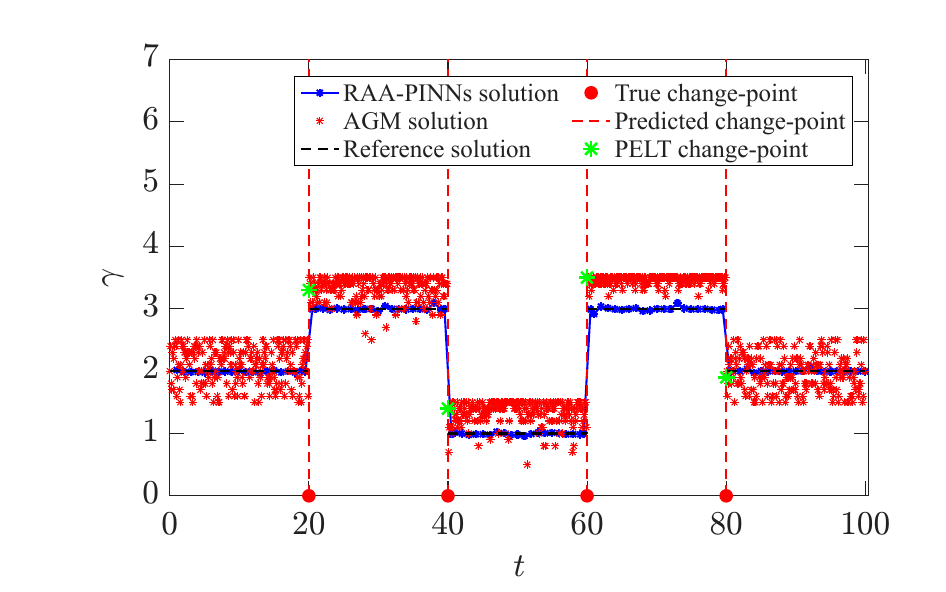}
    \end{minipage}%
    \begin{minipage}[b]{0.495\linewidth}
        \centering
        \includegraphics[width=\linewidth,trim=0.8cm 0.1cm 0.8cm 0.1cm,clip]{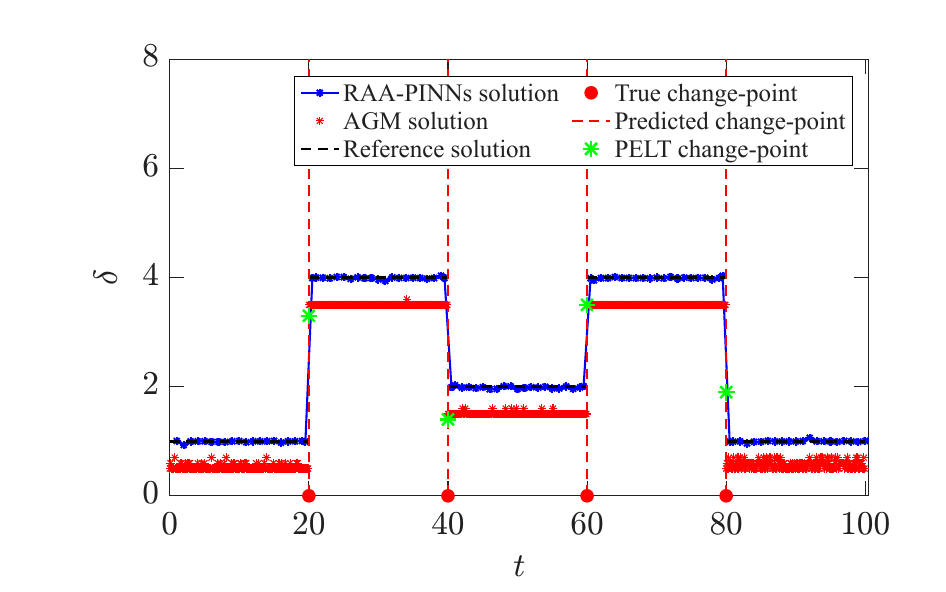}
    \end{minipage}
    \caption{Learned parameter trajectories for the Lotka-Volterra model obtained by different methods, where the parameters are $\boldsymbol{\theta}=(\alpha,\beta,\gamma,\delta)$.}
    \label{fig:duibi}
\end{figure}

Given the estimated parameter sequence $ \{\boldsymbol{\theta}_t\} $, change-point detection is then formulated as a penalized global segmentation problem:
\begin{equation}
    \{\tau_i\}
=
\arg\min_{m,\{\tau_i\}}
\left\{
\sum_{i=1}^{m+1}
\mathcal{C}\!\left(
\{\boldsymbol{\theta}_t\}_{t=\tau_{i-1}}^{\tau_i}
\right)
+
\psi m
\right\},
\end{equation}
where $ \mathcal{C}(\cdot) $ denotes a segment-wise cost function and $ \psi $ is a penalty parameter controlling the number of detected change points. This optimization is solved using the PELT algorithm, which combines dynamic programming with pruning to obtain an exact solution for the chosen penalized cost. Under suitable conditions, its expected computational cost can scale linearly with the sequence length. Accordingly, the detection performance depends on both the choice of the segment-wise cost function and the specification of the penalty parameter.

In contrast, the proposed method embeds parameter inference directly into a physics-informed inverse framework. Instead of first estimating a time-varying parameter trajectory and then segmenting it, the governing dynamics are enforced explicitly over multiple overlapping subintervals, so that parameter estimation remains tightly coupled with the underlying dynamical system. This design reduces the potential mismatch between statistically smoothed trajectories and physically admissible dynamics.

The proposed two-stage strategy adopts a different perspective on change-point detection. In Stage~I, neural networks trained on overlapping subintervals exhibit clear residual-loss anomalies in regions where parameter discontinuities occur, thereby naturally identifying candidate change-point intervals. Then in Stage~II, refined local modeling is carried out only within these intervals to jointly infer the piecewise parameters and the change-point locations. In this way, change points are not obtained through explicit segmentation of a pre-estimated parameter sequence, but instead emerge from local breakdowns of physical consistency. Comparative experiments between this statistical baseline and the proposed method are conducted on the Lotka-Volterra system, and the results are reported in Figure~\ref{fig:duibi} and Table~\ref{tab:lv_comparison_methods}.

Overall, the results show that the proposed method provides a more faithful reconstruction of discontinuous parameters than the AGM-PELT baseline. For all four coefficients, the recovered trajectories are closer to the reference piecewise-constant structure, exhibit smaller within-regime dispersion, and display sharper transitions at the discontinuities. In contrast, the AGM-PELT baseline shows noticeable scatter around the true parameter levels, and its representation of regime-wise dynamics is less accurate. This qualitative difference is also supported by the mean squared errors reported in Table~\ref{tab:lv_comparison_methods}, where the proposed method consistently achieves lower errors than the baseline.

\subsection{Existing Neural Network Method}

For ordinary differential equation systems with jump parameters, the PINNs combined with statistical learning algorithm named expectation-maximization for Gaussian mixture models (PINNs-EM-GMM) framework is a recently proposed hybrid approach for joint parameter estimation and change-point detection~\cite{zhang2025data}. It combines PINNs-based parameter inversion with statistical mechanism identification. Let the system state be observed as $\{\boldsymbol{x}(t_i)\}_{i=1}^N$ and governed by the ODE as
\begin{equation}
\dot{\boldsymbol{x}}(t) = f\big(t,\boldsymbol{x}(t); \boldsymbol{\theta}(t)\big), 
\quad 
\boldsymbol{\theta}(t) \in \mathbb{R}^d,
\end{equation}
where $\boldsymbol{\theta}(t)$ represents the time-varying parameter vector.

The PINNs-EM-GMM framework consists of two stages. In the first stage, sliding-window PINNs are employed to reconstruct the state function $\boldsymbol{x}_\phi(t)$ and the parameter function $\boldsymbol{\theta}_\xi(t)$ locally over each time window $[t_s,t_e]$ by minimizing the composite loss
\begin{equation}
\mathcal{J}
=
\sum_{t_i \in \mathcal{T}_{\mathrm{obs}}}
\|\boldsymbol{x}_\phi(t_i) - \boldsymbol{x}(t_i)\|_2^2
+
\lambda
\sum_{t_j \in \mathcal{T}_{\mathrm{res}}}
\bigl\|
\dot{\boldsymbol{x}}_\phi(t_j)
-
f\bigl(t_j,\boldsymbol{x}_\phi(t_j);\boldsymbol{\theta}_\xi(t_j)\bigr)
\bigr\|_2^2,
\end{equation}
where the first term penalizes data mismatch and the second enforces physical consistency, with $\lambda>0$ controlling the weight of the residual. By leveraging overlapping windows and adaptive residual weighting, the network can capture rapid changes near transition points while mitigating cross-regime compromise, yielding a smooth continuous parameter trajectory $\widehat{\boldsymbol{\theta}}(t)$.

\begin{figure}[p]
    \centering
    \begin{minipage}[b]{0.495\linewidth}
        \centering
        \includegraphics[width=\linewidth,trim=1cm 0.1cm 0.8cm 0.1cm,clip]{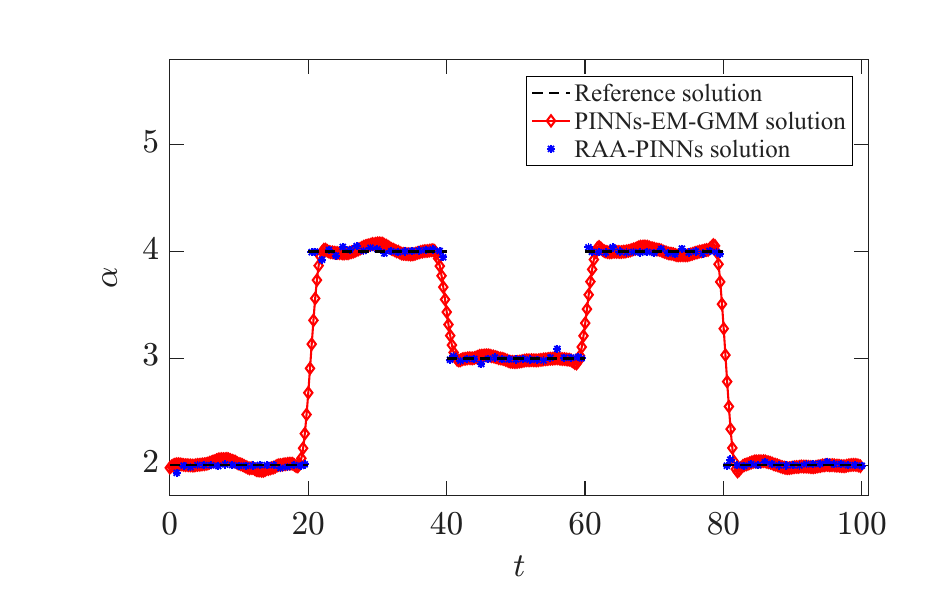}
    \end{minipage}
    \begin{minipage}[b]{0.495\linewidth}
        \centering
        \includegraphics[width=\linewidth,trim=1cm 0.1cm 0.8cm 0.1cm,clip]{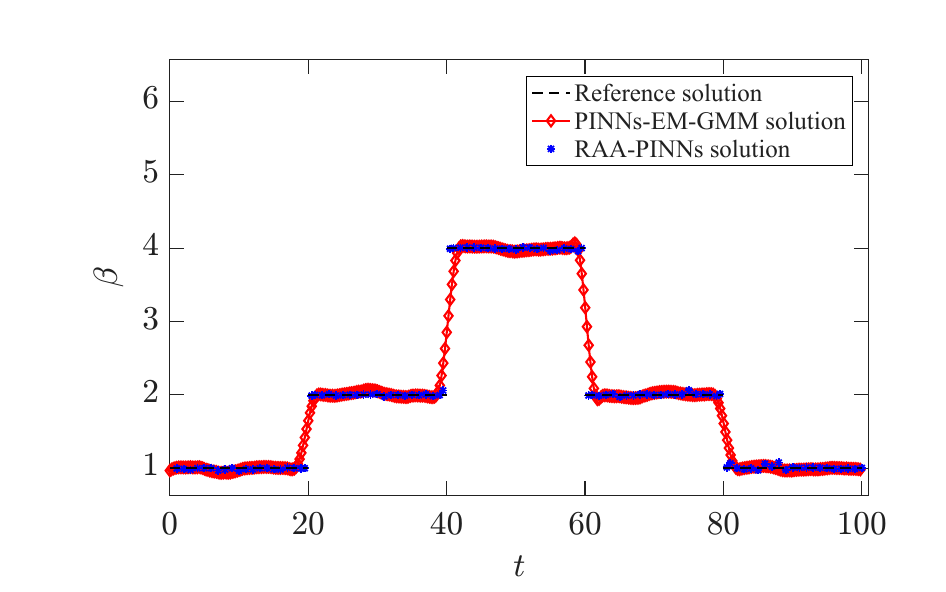}
    \end{minipage}

    \vspace{0.1em}

    \begin{minipage}[b]{0.495\linewidth}
        \centering
        \includegraphics[width=\linewidth,trim=1cm 0.1cm 0.8cm 0.1cm,clip]{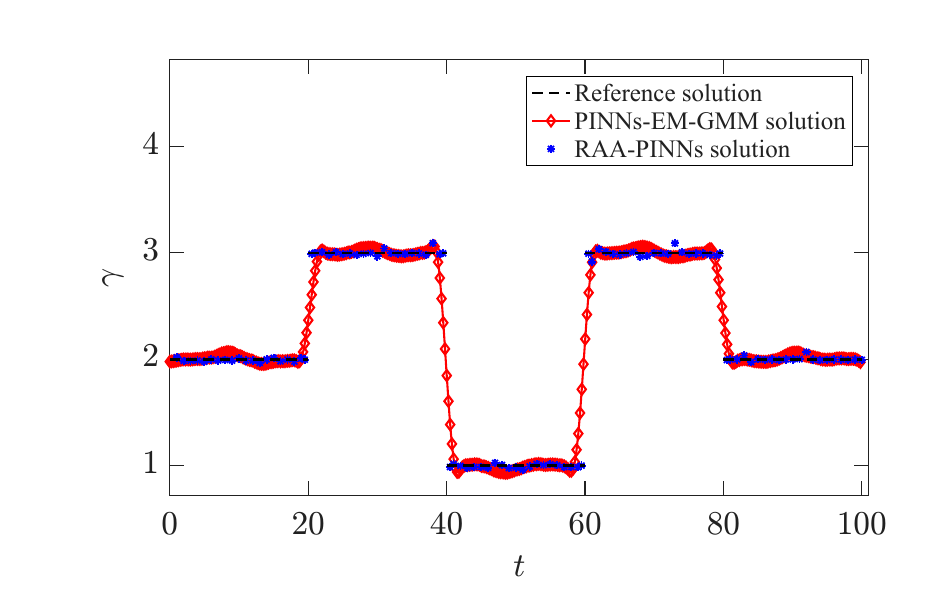}
    \end{minipage}
    \begin{minipage}[b]{0.495\linewidth}
        \centering
        \includegraphics[width=\linewidth,trim=1cm 0.1cm 0.8cm 0.1cm,clip]{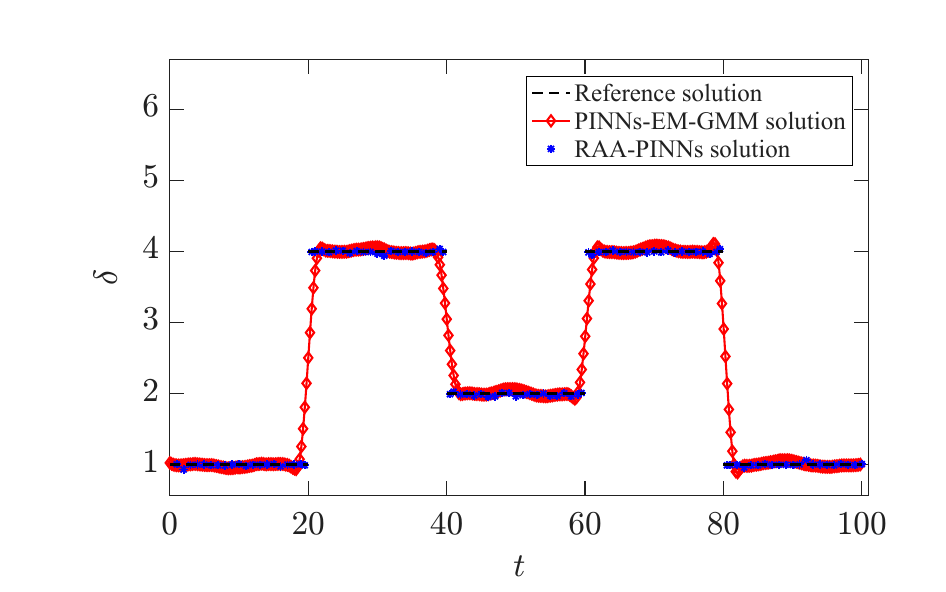}
    \end{minipage}

        \vspace{0.1em}

\begin{minipage}[b]{1\linewidth}
        \centering
        \includegraphics[width=\linewidth]{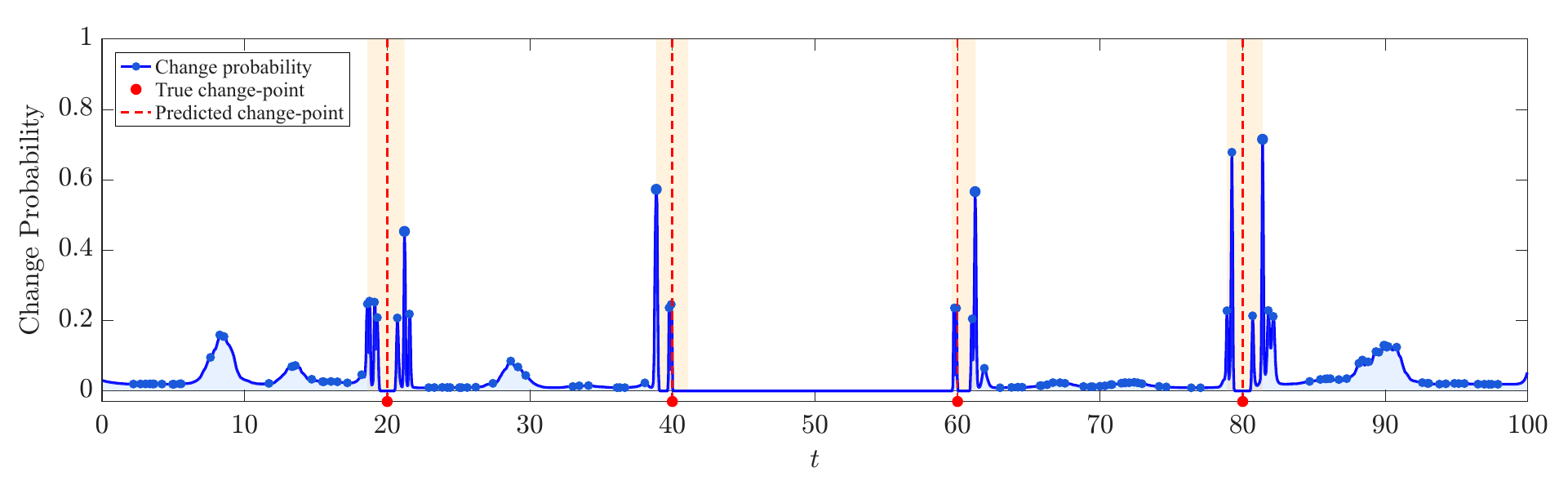}
    \end{minipage}

\caption{First and second rows: learned parameter trajectories $\boldsymbol{\theta}=(\alpha,\beta,\gamma,\delta)$ of the Lotka-Volterra model obtained using different PINNs methods. 
Third row: change-point detection results based on the EM-GMM approach.}
\label{EM duibi}
\end{figure}

In the second stage, mechanism discretization and change-point detection are performed. The continuous trajectory is treated as a sample sequence generated from a finite set of latent regimes. For each parameter component $\widehat{\theta}_i(t)$, a $K$-component Gaussian mixture model is employed to characterize its distribution:
\begin{equation}
p\bigl(\widehat{\theta}_i(t)\bigr)
=
\sum_{k=1}^{K}
\eta_{ik}\,
\mathcal{N}\!\bigl(\widehat{\theta}_i(t)\mid \mu_{ik},\sigma_{ik}^{2}\bigr),
\end{equation}
where $\eta_{ik}$, $\mu_{ik}$, and $\sigma_{ik}^{2}$ denote the mixture weight, mean, and variance of the $k$-th cluster, respectively. The EM algorithm estimates these parameters and computes the posterior cluster probability for each time point as
\begin{equation}
\gamma_{ik}(t)
=
P\bigl(z_i(t)=k \mid \widehat{\theta}_i(t)\bigr).
\end{equation}

A change-point probability is then constructed via a three-point local consistency measure
\begin{equation}
p_i(t)
=
1-\sum_{k=1}^{K}
\gamma_{ik}(t-\Delta t)\,
\gamma_{ik}(t)\,
\gamma_{ik}(t+\Delta t),
\end{equation}
and the probabilities across all parameters are averaged to obtain a combined change-point probability
\begin{equation}
p(t)
=
\frac{1}{d}\sum_{i=1}^{d} p_i(t).
\end{equation}

The top $N_{\mathrm{cp}}$ change points are selected from the peaks of $p(t)$, and the corresponding credible intervals are formed as
\begin{equation}
\mathcal{I}_j
=
\bigl[\min_i \widehat{\tau}_{ij},\; \max_i \widehat{\tau}_{ij}\bigr].
\end{equation}

This framework integrates continuous parameter inversion, statistical regime discretization, and the posterior-based change-point detection into a unified pipeline. As a result, it provides an interpretable statistical representation of both parameter evolution and mechanism transitions, and can exhibit robustness in noisy settings.

The comparative results show that the PINNs-EM-GMM method achieves high accuracy in reconstructing parameter trajectories. As shown in Figure~\ref{EM duibi}, it effectively captures the piecewise evolution of the system parameters effectively. In addition, Table~\ref{tab:lv_comparison_methods} reports the mean squared errors of the parameters $(\alpha,\beta,\gamma,\delta)$, indicating accurate parameter estimation across all four dimensions. Figure~\ref{EM duibi} also presents the change-point probability together with the true and predicted change-point locations. Pronounced probability peaks appear near the true change points, demonstrating the method’s ability to identify candidate transition regions reliably. The detected change-point intervals, namely $[18.6093, 21.2106]$, $[38.8694, 41.1029]$, $[59.5799, 61.2806]$, and $[78.8894, 81.3907]$, all contain the true change-point locations, indicating strong interval-level localization performance.

However, because it relies on statistical discretization and local posterior-consistency measures, the PINNs-EM-GMM framework intrinsically produces interval estimates rather than direct point estimates of change-point locations. By contrast, the proposed two-stage method first identifies candidate transition regions through residual-loss anomalies observed in overlapping subintervals, and then introduces a differentiable parameterization of the change point within each candidate region so that the transition location and the piecewise parameters can be optimized jointly under a unified physics-informed objective. Consequently, while maintaining comparable accuracy in parameter estimation, the proposed method further improves change-point localization by refining interval-level detection into direct pointwise estimation.

\begin{table}[t]
\centering
\caption{Comparison of squared parameter estimation errors for the Lotka-Volterra model with time-varying parameters.}
\label{tab:lv_comparison_methods}
\small
\renewcommand{\arraystretch}{1.05}
\setlength{\tabcolsep}{7pt}

\begin{tabular}{
w{c}{2.2cm}
w{c}{1.8cm}
w{c}{2.2cm}
w{c}{2.2cm}
w{c}{2.2cm}
w{c}{2.2cm}
}
\toprule
\makecell[c]{Method}
& \makecell[c]{Time}
& \makecell[c]{Squared Error\\[-3pt] of $\alpha$}
& \makecell[c]{Squared Error\\[-3pt] of $\beta$}
& \makecell[c]{Squared Error\\[-3pt] of $\gamma$}
& \makecell[c]{Squared Error\\[-3pt] of $\delta$} \\
\midrule

\multirow{5}{*}{\makecell[c]{RAA-PINNs}}
& $[0,20]$   & $5.664\times 10^{-4}$ & $1.877\times 10^{-4}$ & $1.464\times 10^{-4}$ & $1.488\times 10^{-4}$ \\
& $[20,40]$  & $7.774\times 10^{-5}$ & $1.823\times 10^{-4}$ & $1.332\times 10^{-3}$ & $1.538\times 10^{-4}$ \\
& $[40,60]$  & $1.488\times 10^{-4}$ & $7.744\times 10^{-5}$ & $1.613\times 10^{-4}$ & $9.985\times 10^{-4}$ \\
& $[60,80]$  & $1.823\times 10^{-4}$ & $4.624\times 10^{-5}$ & $2.756\times 10^{-4}$ & $2.852\times 10^{-3}$ \\
& $[80,100]$ & $2.403\times 10^{-4}$ & $3.481\times 10^{-5}$ & $2.735\times 10^{-3}$ & $1.414\times 10^{-3}$ \\
\midrule

\multirow{5}{*}{\makecell[c]{AGM-PELT}}
& $[0,20]$   & $7.079\times 10^{-2}$ & $2.778\times 10^{-2}$ & $2.039\times 10^{-3}$ & $1.993\times 10^{-2}$ \\
& $[20,40]$  & $1.042\times 10^{-1}$ & $2.522\times 10^{-2}$ & $1.476\times 10^{-1}$ & $2.288\times 10^{-2}$ \\
& $[40,60]$  & $1.818\times 10^{-2}$ & $1.014\times 10^{-3}$ & $2.053\times 10^{-2}$ & $1.215\times 10^{-1}$ \\
& $[60,80]$  & $2.338\times 10^{-2}$ & $6.539\times 10^{-2}$ & $3.252\times 10^{-2}$ & $3.724\times 10^{-3}$ \\
& $[80,100]$ & $2.706\times 10^{-1}$ & $5.150\times 10^{-2}$ & $4.065\times 10^{-1}$ & $2.013\times 10^{-1}$ \\
\midrule

\multirow{5}{*}{\makecell[c]{PINNs-EM-GMM}}
& $[0,20]$   & $4.885\times 10^{-3}$ & $1.619\times 10^{-3}$ & $1.205\times 10^{-2}$ & $1.706\times 10^{-3}$ \\
& $[20,40]$  & $8.808\times 10^{-3}$ & $1.613\times 10^{-3}$ & $1.162\times 10^{-4}$ & $1.343\times 10^{-3}$ \\
& $[40,60]$  & $1.555\times 10^{-4}$ & $6.627\times 10^{-4}$ & $1.479\times 10^{-4}$ & $9.451\times 10^{-4}$ \\
& $[60,80]$  & $1.890\times 10^{-4}$ & $3.785\times 10^{-3}$ & $2.875\times 10^{-4}$ & $2.476\times 10^{-3}$ \\
& $[80,100]$ & $2.215\times 10^{-4}$ & $2.921\times 10^{-4}$ & $2.937\times 10^{-3}$ & $1.380\times 10^{-3}$ \\
\bottomrule
\end{tabular}
\end{table}

\section{Conclusion and Discussion}\label{sec7}
This work presents a residual-loss anomaly-based inverse framework for change-point detection and parameter identification in nonlinear dynamical systems with regime switching. The core idea is to analyze the anomalous behavior of the physics-based residual loss, which is highly sensitive to local inconsistencies caused by parameter discontinuities. Exploiting this mechanism allows the identification of candidate change-point intervals and the recovery of piecewise system parameters. The method proceeds in two stages: in the first stage, overlapping subintervals are analyzed, where parameter changes induce localized violations of physical consistency, enabling effective localization of candidate intervals. In the second stage, these intervals are refined to jointly infer the change-point locations and the corresponding parameters. Systematic experiments on representative systems, including simple growth models, nonlinear oscillators, coupled multi-parameter systems, and chaotic systems, demonstrate that this approach accurately identifies change points, reconstructs parameters, and captures latent state transitions across a wide range of nonlinear dynamics. Each example highlights a specific capability, abrupt transitions in simple growth, nonlinear saturation effects, self-excited oscillatory dynamics, coupled multi-parameter interactions, and robustness under chaos with strong sensitivity to initial conditions.

Compared with traditional statistical approaches, such as Gaussian-process-based methods, this framework provides greater modeling flexibility and physical consistency for strongly nonlinear systems with pronounced parameter discontinuities. It maintains robust performance under limited observational data, accurately recovers parameters, and reliably characterizes states near change points, without requiring prior knowledge of transition times. While supported by theoretical analysis and numerical evidence, further studies of its generalization under noisy observations, model misspecification, and uncertainty quantification would enhance its reliability. Future extensions may include high-dimensional coupled systems, stochastic dynamics, and partial differential equation models with regime switching. Potential applications span practical state monitoring and transition analysis in real-world systems, such as modeling the growth dynamics of hemangiomas or analyzing phase-wise vascular aging from time-series observations derived via computer vision, highlighting the framework’s broad applicability in nonlinear dynamical system identification and change-point analysis.

%\section*{Appendix}

\setstretch{1}
\bibliography{reference1}
%\section*{References}

%\begin{thebibliography}{99}
%\end{thebibliography}

\end{document}